\definecolor{codegreen}{rgb}{0,0.6,0}
\definecolor{codegray}{rgb}{0.5,0.5,0.5}
\definecolor{codepurple}{rgb}{0.58,0,0.82}
\definecolor{backcolour}{rgb}{0.95,0.95,0.92}
\lstdefinestyle{mystyle}{
    backgroundcolor=\color{backcolour},   
    commentstyle=\color{codegreen},
    keywordstyle=\color{magenta},
    numberstyle=\tiny\color{codegray},
    stringstyle=\color{codepurple},
    basicstyle=\ttfamily\footnotesize,
    breakatwhitespace=false,
    xleftmargin=1em,
    xrightmargin=1em,
    breaklines=true,                 
    captionpos=b,                    
    keepspaces=true,                 
    numbers=left,                    
    numbersep=5pt,                  
    showspaces=false,                
    showstringspaces=false,
    showtabs=false,                  
    tabsize=2
}
\definecolor{customblue}{RGB}{31,119,180} % C0-like blue from Matplotlib
\def\x0{\bm{x}_0}
\def\xt{\bm{x}_t}
\def\xT{\bm{x}_T}
\newcommand{\highlight}[1]{%
    \tikz[baseline=(X.base)] \node[fill=customblue!20, rounded corners=2pt, text opacity=1, fill opacity=0.8, inner sep=2pt] (X) {#1};%
}
\newtheorem{theorem}{Theorem}
\newtheorem{lemma}{Lemma}[section]
\newtheorem{corollary}{Corollary}
\def\div{\operatorname{div}}
\def\eqref#1{equation~\ref{#1}}
\def\1{\bm{1}}
\def\rW{{\textnormal{W}}}
\def\veps{{\bm{\varepsilon}}}
\def\vu{{\bm{u}}}
\def\vv{{\bm{v}}}
\def\vw{{\bm{w}}}
\def\vx{{\bm{x}}}
\def\vy{{\bm{y}}}
\def\mA{{\bm{A}}}
\def\mG{{\bm{G}}}
\def\mI{{\bm{I}}}
\def\mP{{\bm{P}}}
\DeclareMathAlphabet{\mathsfit}{\encodingdefault}{\sfdefault}{m}{sl}
\SetMathAlphabet{\mathsfit}{bold}{\encodingdefault}{\sfdefault}{bx}{n}
\newcommand{\E}{\mathbb{E}}
\newcommand{\R}{\mathbb{R}}
\icmltitlerunning{Density guidance}
\begin{document}

\twocolumn[

\icmltitle{Devil is in the Details: Density Guidance \\for Detail-Aware Generation with Flow Models}

% It is OKAY to include author information, even for blind
% submissions: the style file will automatically remove it for you
% unless you've provided the [accepted] option to the icml2024
% package.

% List of affiliations: The first argument should be a (short)
% identifier you will use later to specify author affiliations
% Academic affiliations should list Department, University, City, Region, Country
% Industry affiliations should list Company, City, Region, Country

% You can specify symbols, otherwise they are numbered in order.
% Ideally, you should not use this facility. Affiliations will be numbered
% in order of appearance and this is the preferred way.
\icmlsetsymbol{equal}{*}

\begin{icmlauthorlist}
\icmlauthor{Rafał Karczewski}{aalto}
\icmlauthor{Markus Heinonen}{aalto}
\icmlauthor{Vikas Garg}{aalto,yaiyai}
\end{icmlauthorlist}

\icmlaffiliation{aalto}{Department of Computer Science, Aalto University, Finland}
\icmlaffiliation{yaiyai}{YaiYai Ltd}

\icmlcorrespondingauthor{Rafał Karczewski}{rafal.karczewski@aalto.fi}

% You may provide any keywords that you
% find helpful for describing your paper; these are used to populate
% the "keywords" metadata in the PDF but will not be shown in the document
\icmlkeywords{Machine Learning, ICML}

\vskip 0.3in
]

% this must go after the closing bracket ] following \twocolumn[ ...

% This command actually creates the footnote in the first column
% listing the affiliations and the copyright notice.
% The command takes one argument, which is text to display at the start of the footnote.
% The \icmlEqualContribution command is standard text for equal contribution.
% Remove it (just {}) if you do not need this facility.

\printAffiliationsAndNotice{}  % leave blank if no need to mention equal contribution
% \printAffiliationsAndNotice{\icmlEqualContribution} % otherwise use the standard text.

\begin{abstract}
Diffusion models have emerged as a powerful class of generative models, capable of producing high-quality images by mapping noise to a data distribution. However, recent findings suggest that image likelihood does not align with perceptual quality: high-likelihood samples tend to be smooth, while lower-likelihood ones are more detailed. Controlling sample density is thus crucial for balancing realism and detail. In this paper, we analyze an existing technique, Prior Guidance, which scales the latent code to influence image detail. We introduce score alignment, a condition that explains why this method works and show that it can be tractably checked for any continuous normalizing flow model. We then propose Density Guidance, a principled modification of the generative ODE that enables exact log-density control during sampling. Finally, we extend Density Guidance to stochastic sampling, ensuring precise log-density control while allowing controlled variation in structure or fine details. Our experiments demonstrate that these techniques provide fine-grained control over image detail without compromising sample quality. Code is available at \url{https://github.com/Aalto-QuML/density-guidance}.
\end{abstract}

\section{Introduction}\label{sec:intro}

Diffusion models are a family of generative models that learn to map noise to a data distribution $p_0$, which allows realistic image sampling \citep{ho2020denoising,song2021scorebased,song2021denoising,vahdat2021scorebased}. In the quest towards high-fidelity sampling it is natural to ask whether perceptual quality of images aligns with their likelihood $p_0(\vx)$ \citep{karczewski2024diffusion}? 

Remarkably, the density $p_0$ correlates \emph{negatively} with the amount of detail: within the \emph{typical samples} \citep{nalisnick2020detecting} $\vx \sim p_0$ higher-density images tend to lack detail and be smooth, while lower-density images tend to be richly textured and detailed \citep{sehwag2022generating} (See \autoref{fig:forest}). Outside the typical samples, extremely low density leads to broken images \citep{karras2024guiding}, while extremely high density strips detail to the point of resembling sketch drawings or blurs \citep{karczewski2024diffusion}. 

\begin{figure}[t!]
    \centering
    \includegraphics[width=\columnwidth]{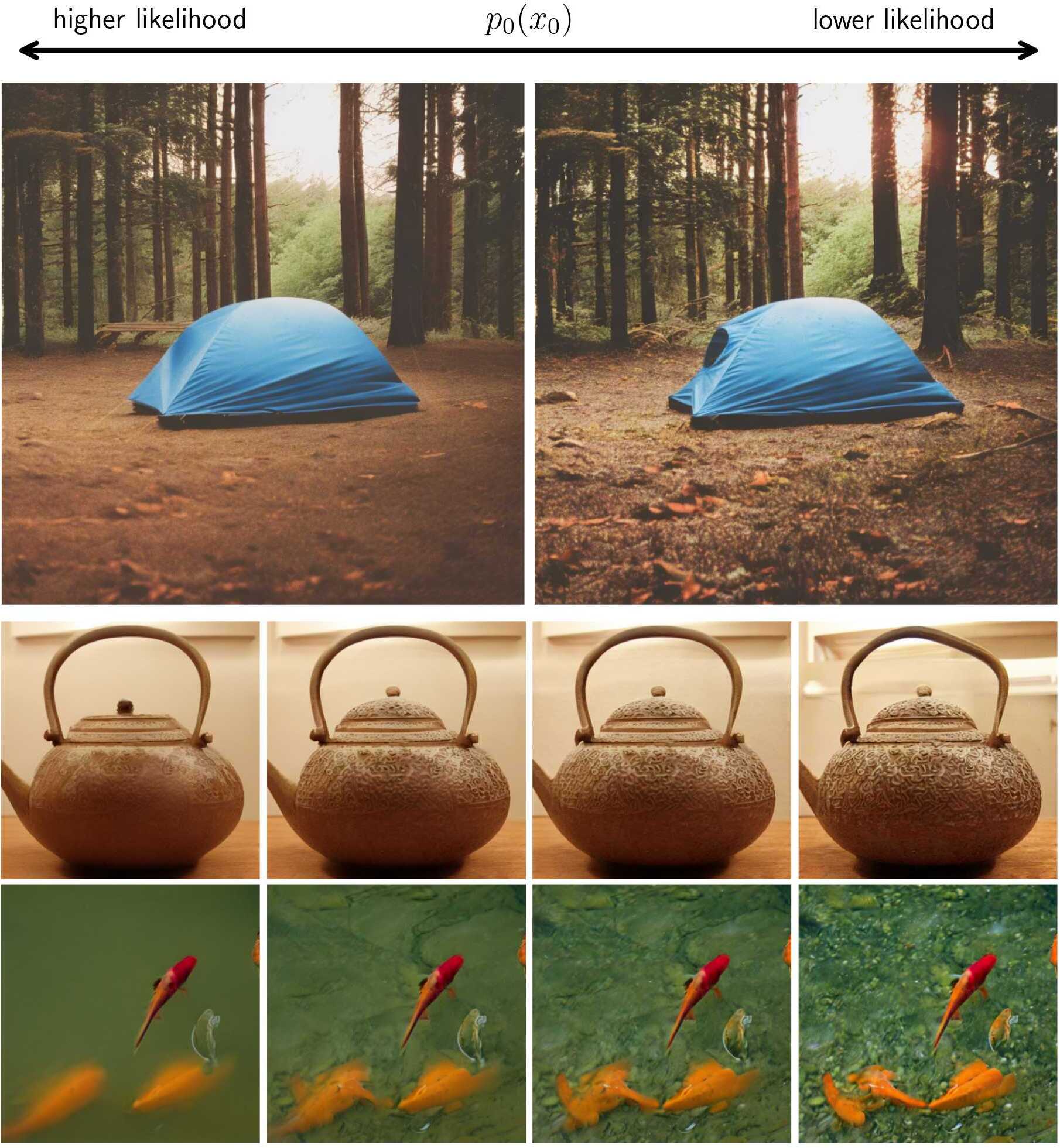}
    \caption{\textbf{Density guidance controls the amount of detail.} Images sampled from the same condition and latent code with different strengths of guidance. Top: StableDiffusion v2.1 \citep{rombach2021highresolution}. Bottom: EDM2 \citep{karras2024analyzing}.}
    \label{fig:forest}
\end{figure}

Somewhat surprisingly the common sampling strategies in flow models do not optimise for sample density \citep{karras2022elucidating}. Recently, \citet{karczewski2024diffusion} proposed an approach towards controlling the sample density by biasing the sampling towards the extremely high likelihood regions of $p(\vx_0|\vx_t)$, and demonstrated that these correspond to unrealistic images. Their work is limited in three ways. The approach is only derived for SDE models with linear drift. The exact procedure is not tractable so the authors resort to approximations in practice. Finally, it only allows targeting the highest possible likelihoods, which do not produce realistic images.
This highlights the need for a more general approach that allows fine-grained control over sample density while preserving realism.

In this paper, we build upon prior observations that scaling the latent code affects image detail \citep{song2021scorebased}.
We refer to this method as \emph{Prior Guidance} and we provide a theoretical explanation for this phenomenon by introducing \emph{score alignment}, a condition under which Prior Guidance provably increases or decreases log-density.
We show that this condition often holds in practice.
% , and we derive precise limits on the latent code norm to prevent deviations from the data manifold.

Beyond this analysis, we introduce \emph{Density Guidance}, a novel procedure that allows explicit control over the log-density of generated samples.
Assuming knowledge of the score function, we derive an alternative ODE that guarantees the log-density of the trajectory evolves exactly as specified.
Empirically, we show that this method achieves similar results to Prior Guidance.

Finally, we extend Density Guidance to incorporate stochastic sampling.
This enables precise control over the log-density of generated samples even when randomness is introduced.
By injecting noise at different stages of the generation process, we can selectively influence variations in high-level structure (e.g., shape and composition) or fine-grained details.
Our experiments demonstrate that this stochastic extension allows for enhanced diversity while preserving control over the desired level of detail.

% In this paper we address all three issues by introducing rigorous \emph{density guidance}, which can precisely control the desired sample density. The density guidance (i) works for any continuous-time generative model, (ii) %is tractable and efficient, 
% can require zero extra cost, 
% (iii) requires no extra training, (iv) allows sampling points with arbitrary likelihood $p_0(\vx)=c$. In practise this results in \emph{detail-aware generation} where we can choose the amount of secondary details in the sampled image, with no apparent degradation of image quality (See \autoref{fig:forest}). 

In summary, in this paper we
\begin{itemize}
\item introduce \emph{Score Alignment}, a condition that explains how latent code scaling affects image detail and can be tractably checked for any CNF model - \autoref{sec:scaling};
\item derive a modification of the generative ODE that enables exact log-density control during sampling - \emph{Density Guidance} - \autoref{sec:dgs};
\item extend Density Guidance to stochastic sampling, retaining exact log-density control while allowing controlled variation in structure or details - \autoref{sec:sold}.
\end{itemize}

% \autoref{tab:contrib} summarises our contributions.

% \begin{table*}[t]
% \centering
% \caption{Our contributions.}
% \resizebox{\linewidth}{!}{
% \begin{tabular}{lcccccc}
% \toprule
%        &         &         & \multicolumn{2}{c}{Sample images} &   & \\ 
% \cmidrule(lr){4-5}
% Method & Section & Sampler & semantics & details & $\log p_t$ control & Cost \\ 
% \midrule
% %PF-ODE, revSDE & \citet{song2021scorebased} & & & \xmark \\
% %Temperature scaling & \citet{song2021scorebased} & PF-ODE & & relative \\
% %Low-density sampling & \citet{sehwag2022generating} & LD & & low density \\
% %Mode seeking & \citet{karczewski2024diffusion} & HP-ODE & constant & modes \\
% %\cmidrule(lr){1-5}
% Prior Guidance & \autoref{sec:scaling} & Probability-Flow ODE & $\approx$ constant & varying & monotonic & Free \\
% Density Guidance & \autoref{sec:dgs} & Density-Guided ODE & $\approx$ constant & varying & precise, or quantile $q$ & Free \\
% Stochastic Density Guidance & \autoref{sec:sold} & Density-Guided SDE & varying & $\approx$ constant & precise, or quantile $q$ & \textcolor{red}{TODO} \\
% \bottomrule
% \end{tabular}
% }
% \label{tab:contrib}
% \end{table*}

\section{Background}\label{sec:background}

\begin{figure*}[ht]
    \centering    \includegraphics[width=0.95\textwidth]{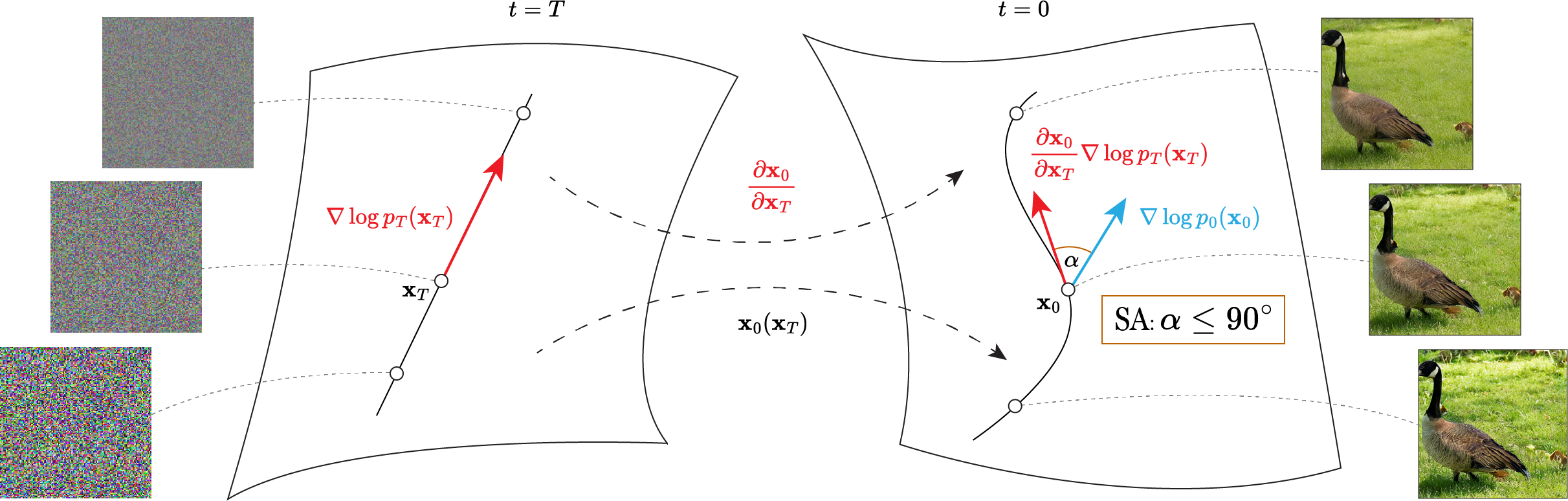}
    \caption{\textbf{Prior Guidance and Score Alignment (SA).} Prior guidance works by moving \( \vx_T \) (noise) in the direction of \( \textcolor[HTML]{ED1C24}{\nabla \log p_T(\vx_T)} \) and decoding to \( \vx_0 \) (image). The purpose of this is to increase or decrease \( \log p_0(\vx_0) \), which is inversely related to the level of detail in \( \vx_0 \). SA is a condition that ensures prior guidance is effective by requiring the alignment of score vectors across time steps. \textcolor[HTML]{ED1C24}{Red arrows} represent tangents to the curves: \( \nabla \log p_T(\vx_T) \) is the tangent to the left curve at \( \vx_T \), and its push-forward via \( \frac{\partial \vx_0}{\partial \vx_T} \) is the tangent to the decoded curve at \( \vx_0 \). SA states that the transformed \textcolor[HTML]{ED1C24}{tangent vector} must align with \( \textcolor[HTML]{27AAE1}{\nabla \log p_0(\vx_0)} \) such that the angle \( \alpha \leq 90^\circ \) (non-negative dot product).
}
    \label{fig:vfa-vis}
\end{figure*}

%We present diffusion models as differential equations, which unifies the major families of score-based diffusion \citep{song2021scorebased}, denoising probabilistic models \citep{ho2020denoising}, and flow matching \citep{lipman2023flow} (For discussion, see \citet{kingma2021variational,luo2022understanding}).

Let $\vx \in \R^D$. We assume spatial gradient $\nabla = (\frac{\partial}{\partial x_1}, \ldots, \frac{\partial}{\partial x_D})^T \in \R^D$, divergence $\div = \sum_d \frac{\partial}{\partial x_d} \in \R$, and Laplacian $\Delta = \sum_d \frac{\partial^2}{\partial x_d^2} \in \R$ operators. We assume continuous time $t \in [0,T]$ between data $p_0$ and noise $p_T$.

\subsection{Continuous normalizing flows}

Continuous normalizing flows (CNFs) \citep{chen2018neural} are probabilistic models specified by a tractable prior distribution $p_T$ at terminal time $T$ and an ordinary differential equation (ODE)
\begin{equation}\label{eq:ode}
    d\vx_t = \vu_t(\vx_t)dt, 
\end{equation}
which samples by integrating
from $\vx_T \sim p_T$ at $t=T$ to $t=0$ by following the time-varying vector field $\vu_t : \R^D \mapsto \R^D$ with a solution
\begin{align} \label{eq:odesol}
    \vx_t := \vx_t(\vx_T) &= \vx_T + \int_{T}^{t} \vu_\tau(\vx_\tau) d\tau.
\end{align}
The flow family encompasses many popular generative frameworks, including diffusion/score-based models \citep{song2021scorebased}, flow matching \citep{lipman2023flow, tong2024improving}, rectified flows \citep{liu2023flow}, stochastic interpolants \citep{albergo2023building}, consistency models \citep{pmlr-v202-song23a}, and the denoising probabilistic models at continuous limit \citep{kingma2021variational}: the vector field $\vu_t$ is the denoiser. \nocite{zhi2021learning}

In a CNF we can evaluate the log-likelihood of a sample moving according to \autoref{eq:ode} with \citep{chen2018neural}:
\begin{equation}\label{eq:insta-change-of-variables}
    \frac{d \log p_t(\vx_t)}{dt} = -\div \vu_t(\vx_t), %= \nabla^T \vu_t(\vx_t) =
    %\sum_{d=1}^D \frac{\partial [\vu_t(\vx_t)]_d}{\partial x_d},
\end{equation}
where $p_t$ is the marginal density of a process defined by \autoref{eq:ode}. 
\citet{karczewski2024diffusion, skreta2025the} generalized this formula to enable tracking of the marginal $p_t$ for a sample following a \emph{different} direction $d\vx_t = \tilde{\vu}_t(\vx_t)dt$ as
\begin{align}\label{eq:gen-insta-change}
        \frac{d \log p_t(\vx_t)}{dt} &= - \div \vu_t(\vx_t) \\
        & \quad \:\:\: + \nabla \log p_t(\vx_t)^T \Big(\tilde{\vu}_t(\vx_t) - \vu_t(\vx_t)\Big) \notag.
\end{align}
At $\tilde{\vu}_t = \vu_t$, this reduces back to \autoref{eq:insta-change-of-variables}. See \autoref{app:cnf} for detailed derivations.

\subsection{Diffusion models}

A notable case of flow models are diffusion models given by a forward process $p_t(\vx_t|\vx_0)=\mathcal{N}(\alpha_t\vx_0,\sigma_t^2\mI_D)$, or equivalently, by a stochastic differential equation (SDE)
\begin{equation}\label{eq:sde}
    d\vx_t = f(t)\vx_tdt + g(t)d\rW_t
\end{equation}
with drift $f(t)=\frac{d \log \alpha_t}{dt}$, diffusion $g^2(t)=2\sigma_t^2 \frac{d \log \frac{\sigma_t}{\alpha_t}}{dt}$, and Wiener process $\rW_t$.
% The reverse SDE 
% \begin{equation}\label{eq:rev-sde}
%     d\vx_t = \Big(f(t)\vx_t - g^2(t) \nabla \log p_t(\vx_t) \Big) dt + g(t)d\overline{\rW}_t,
% \end{equation}
% shares marginals $p_t$ with the forward.
A CNF with drift 
\begin{align} \label{eq:pf-ode}
    \vu_t^\textsc{pf-ode}(\vx_t) = f(t)\vx_t - \frac{1}{2} g^2(t)\underbrace{\nabla \log p_t(\vx_t)}_{\text{score}}
\end{align}
shares marginals $p_t$ with \autoref{eq:sde} when $p_T$ are shared \citep{song2021scorebased}. This Probability-Flow ODE (PF-ODE) is an efficient, deterministic, sampler in diffusion models.

\subsection{Stochastic sampling and likelihood}

\citet{eijkelboom2024variational} pointed out that any CNF can be cast as an SDE model via the score function $\nabla \log p_t(\vx)$:\footnote{In \citet{eijkelboom2024variational}, the drift is: $\vu_t(\vx_t) + \frac{1}{2}\varphi^2(t)\nabla \log p_t(\vx_t)$. This is caused by different conventions. We follow the diffusion literature \citep{song2021scorebased}, where time flows backward, i.e. $dt<0$ and the Wiener process runs in reverse during sampling. In \citet{eijkelboom2024variational} sampling is from $t=0$ to $t=1$ and $dt>0$.}
\begin{equation}\label{eq:fm-sde}
    d\vx_t = \Big(\vu_t(\vx_t) - \frac{1}{2}\varphi^2(t)\nabla \log p_t(\vx_t)\Big) dt + \varphi(t) d\overline{\rW}_t,
\end{equation}
where $\overline{\rW}$ is the Wiener process going backward in time and the process defined by \autoref{eq:fm-sde} shares marginals with \autoref{eq:ode} for any choice of the variance term $\varphi$ as long as they share $p_T$. \citet{karczewski2024diffusion} demonstrated that for SDE models one can also track the evolution of marginal
\begin{align}\label{eq:log-density-sde}
    d \log p_t(\vx_t) &= F(t, \vx_t)dt + \varphi(t)\nabla \log p_t(\vx_t)^Td\overline{\rW}_t,
\end{align}
where
\begin{equation}
\begin{split}
    F(t, \vx) =& -\div \vu_t(\vx)  - \frac{1}{2}\varphi^2(t)\Delta \log p_t(\vx) \\
    & \quad - \frac{1}{2}\varphi^2(t) \| \nabla \log p_t(\vx) \|^2.
\end{split}
\end{equation}

\subsection{Neural network approximations}
In all our experiments, we assume access to pre-trained neural networks to approximate $\mathbf{u}_t$ and $\nabla \log p_t$. When estimating Jacobian-vector products such as $\frac{\partial \vu_t}{\partial \vx}\vv$, we leverage automatic differentiation to compute these efficiently with a single network pass.
To estimate divergence, such as $\Delta \log p_t(\vx) = \div \nabla \log p_t(\vx)$, we use the Hutchinson’s trick \citep{hutchinson1989stochastic,grathwohl2018ffjord} using a single Rademacher test vector.

\section{Scaling the latent code -- when and why?}\label{sec:scaling}

In this section we evaluate whether the simple latent rescaling approach of \citet{song2021scorebased} is sufficient to control for sample density, and demonstrate its shortcomings.

\citet{song2021scorebased} observed that scaling the latent noise $\vx_T \sim \mathcal{N}(\mathbf{0}, \mathbf{I})$ down with $\beta < 1$ monotonically decreases the amount of detail in the decoded images $\vx_0^\textsc{pf-ode}(\beta \vx_T)$, while preserving the image semantics (See \autoref{fig:vfa-vis}).
Interestingly, \citet{karczewski2024diffusion} recently showed that the log-density $\log p_0(\vx_0)$ assigned by a diffusion model correlates with the amount of detail in the generated image.
%Most diffusion models use a Gaussian prior $p_T = \mathcal{N}(0, \sigma_T^2 \mI_D)$, whose score $\nabla \log p_T(\vx_T) = -\frac{\vx_T}{\sigma_T^2}$ simply reduces the latent norm towards zero. Thus scaling at $t=T$ is equivalent to minimizing (or maximizing) $\log p_T$. % by following its gradient.
Most diffusion models use a Gaussian noise $p_T = \mathcal{N}(\mathbf{0}, \mI_D)$, whose score $\nabla \log p_T(\vx_T) = -\vx_T$ simply reduces the latent norm towards zero. Thus (down)scaling at $t=T$ is equivalent to maximizing $\log p_T$. % by following its gradient.

These two observations suggest a simple hypothesis: 
\begin{tcolorbox}[colback=orange!10!white,colframe=orange!75!black]
\textbf{Prior guidance}: To increase (decrease) $\log p_0(\vx_0)$, it suffices to move $\vx_T$ in the positive (negative) direction of $\nabla \log p_T(\vx_T)$, and then decode.
\end{tcolorbox}

We are interested in studying whether (steepest) $\log p_T$ increase in latent $\vx_T$ leads to a monotonic increase in $\log p_0$ of the decoding $\vx_0(\vx_T)$ \autoref{eq:odesol}. To formalise this notion, we assume a latent curve $c : [0,1] \to \R^D$ at $t=T$, whose tangent is given by the score $c'(s) = \nabla \log p_T(c(s))$. A monotonic curve decoding has
\begin{align}
    \frac{d}{ds} \log p_0\Big(\vx_0\big(c(s)\big)\Big) \geq 0, \qquad \forall s,
\end{align}
which is equivalent to \emph{Score Alignment} (\autoref{app:vfa}):
\begin{equation}\label{eq:vfa}
    \mathrm{SA:} \quad \underbrace{\nabla \log p_0(\vx_0)^T}_{\text{decoding score}} \underbrace{ \frac{\partial \vx_0}{\partial \vx_T} \nabla \log p_T(\vx_T)}_{\text{push-forward score } \vv_0 \in \R^D} \ge 0,
\end{equation}
for all $\vx_T \in \R^D$, where $\vv_t(\vx_T)\coloneqq \frac{\partial \vx_t}{\partial \vx_T} \nabla \log p_T(\vx_T)$.
In \cref{app:sa-linear} we show that \autoref{eq:vfa} always holds when $\vu_t$ is linear in $\vx$, as in trivial diffusion models. We also show that it does not hold in general by providing a counterexample.
See \autoref{fig:vfa-vis} for a visualisation.

To evaluate the SA \autoref{eq:vfa} we need to solve for $\vv_0$. We use sensitivity equations (i.e. forward differentiation) %we denote $\vv_t=D_\vx \phi(T, t)(\vx_T) \nabla \log p_T(\vx_T)$ and
%we leverage the sensitivity equation \citep{pontryagin,diehl2011numerical}:
\begin{equation}\label{eq:sensitivity}
    d\begin{bmatrix}
        \vx_t \\ \vv_t
    \end{bmatrix} = \begin{bmatrix}
        \vu_t(\vx_t) \\
        \frac{\partial \vu_t(\vx_t)}{\partial \vx} \vv_t
    \end{bmatrix}dt,
\end{equation}
with $\vv_T \coloneqq \nabla \log p_T(\vx_T)$ to describe the $\vv_t$ evolution, and specifically, to solve $\vv_0(\vv_T)$ \citep{baydin2018automatic}.

\begin{figure}[htb]
    \centering\includegraphics[width=0.99\linewidth]{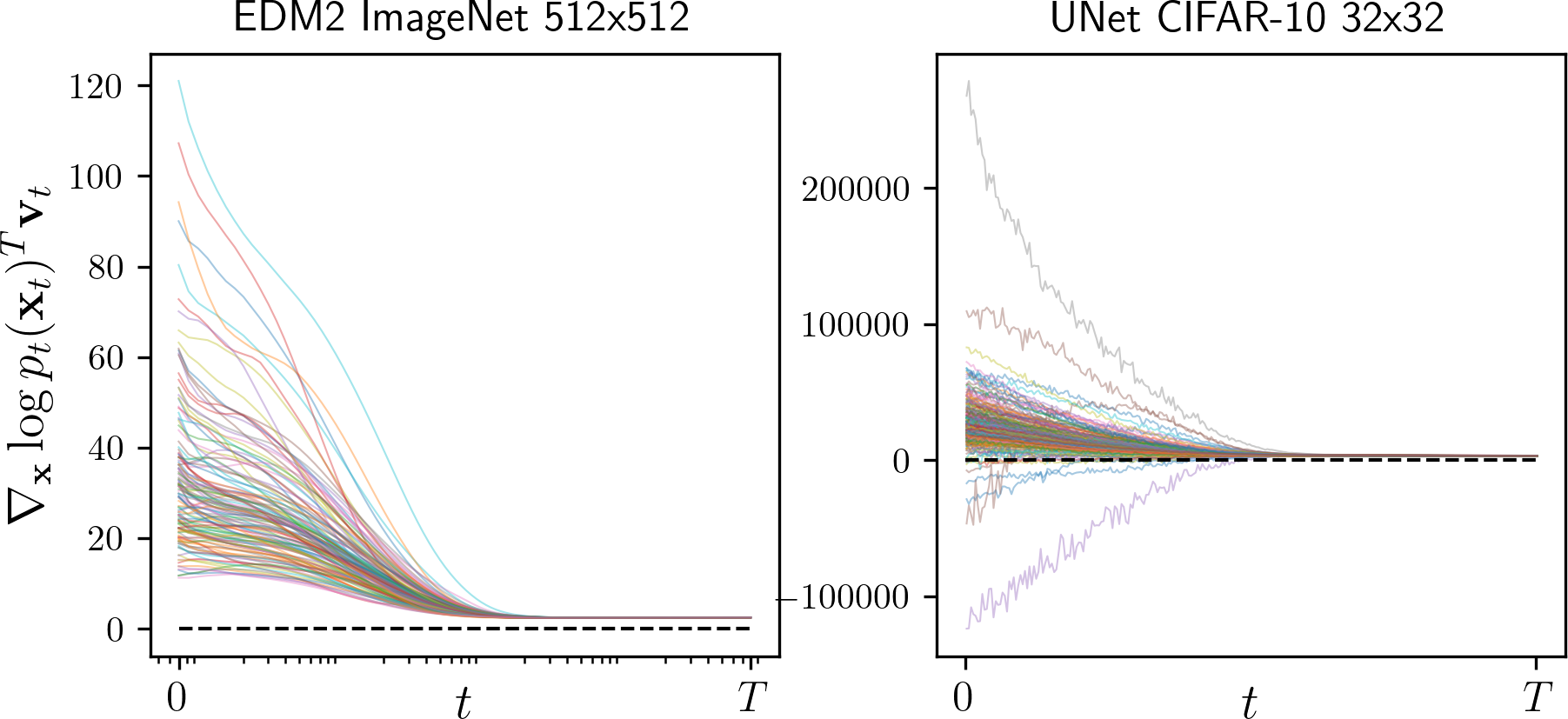}
    \caption{\textbf{Nearly all $\vx_T$ satisfy the positive score alignment of \autoref{eq:vfa} across models and datasets.}}
    \label{fig:dot-product}
\end{figure}

\paragraph{Empirical demonstration}
To empirically verify whether SA holds, one can sample a large batch of $\vx_T \sim p_T$ and solve \autoref{eq:sensitivity} from $t=T$ to $t=0$\footnote{In practice, we solve until $t=\varepsilon$ for small $\varepsilon > 0$.} and check whether $\vv_0^T\nabla \log p_0(\vx_0) \geq 0$.
We demonstrate this on two models, a VP-SDE model trained on CIFAR-10 \citep{karczewski2024diffusion}, and EDM2, a conditional latent diffusion trained on ImageNet512 \citep{karras2024analyzing}.
We find that for CIFAR 97\% of the latent codes satisfy the equation and 100\% for EDM2 (\autoref{fig:dot-product}).
This shows that, in most cases, scaling the latent code $\vx_T$ impacts $\nabla \log p_0(\vx_0)$ monotonically, and thus explains the visual effect of low-level feature manipulation (\autoref{fig:forest}).
See \autoref{app:scaling-samples} for more samples.

\paragraph{Log-density vs FLIPD} \citet{kamkari2024geometric} recently proposed $\mathrm{FLIPD}$ - a method for measuring local intrinsic dimension and argued that it correlates strongly with the amount of detail (or information) in the image as measured by the size of PNG compression of the decoded image $\mathrm{PNG}(\vx_0)$.
In \autoref{fig:logp-vs-flipd} we show on a high resolution latent diffusion model EDM2 \citep{karras2024analyzing} that $-\log p_t(\vx_t)$ correlates with $\mathrm{PNG}(\vx_0)$ more strongly, reaching a maximum of 84\%, compared to 29\% achieved by $\mathrm{FLIPD}$.
Furthermore, we observed that the correlation of $-\log p_t(\vx_t)$ with $\mathrm{PNG}(\vx_0)$ is the strongest not for $t \approx 0$, but rather $t \approx 0.6$ corresponding to $\log\mathrm{SNR}(t)=\log\frac{\alpha_t^2}{\sigma_t^2}=1$.
This suggests that for detail manipulation with Prior Guidance, verification of the SA condition \autoref{eq:vfa} should be done up to this value of $t$ rather than $t=0$.

\begin{figure}
    \centering
    \includegraphics[width=0.95\linewidth]{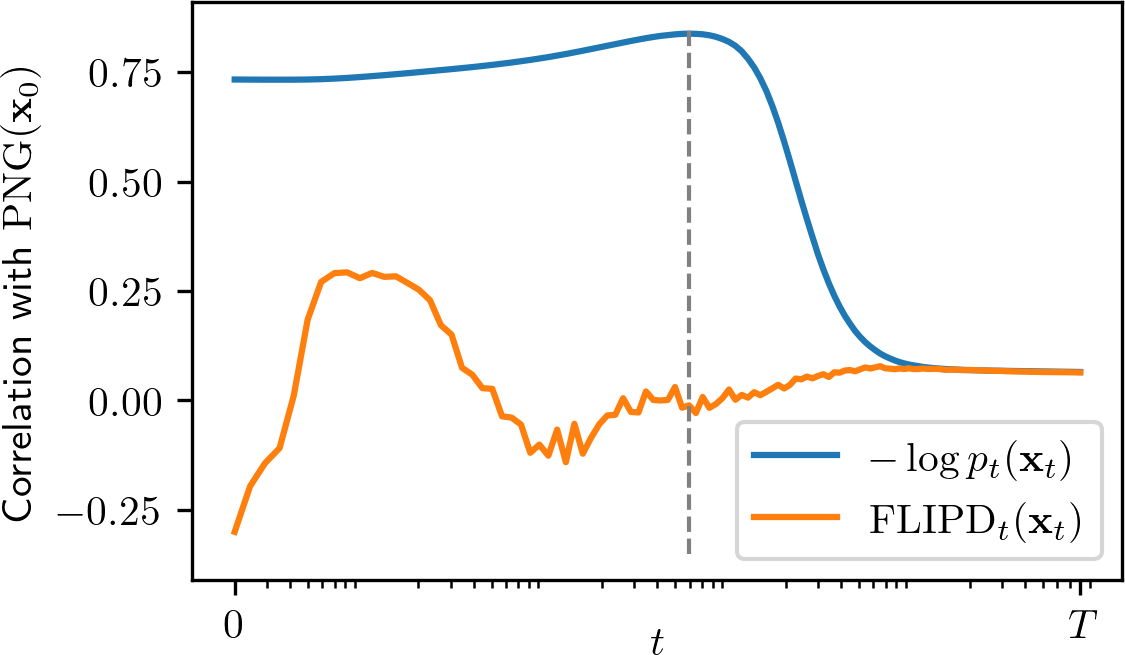}
    \caption{\textbf{Negative $\log p_t(\vx_t)$ correlates well with image compression size}, while the recently proposed intrinsic dimensionality measure FLIPD \citep{kamkari2024geometric} correlates weakly. Experiment performed for a latent diffusion model EDM2 trained on ImageNet $512 \times 512$ resolution \citep{karras2024analyzing}.}
    \label{fig:logp-vs-flipd}
\end{figure}
\paragraph{What if score is unknown?}
To verify SA, one needs to solve \autoref{eq:sensitivity} and estimate $\nabla \log p_0(\vx_0)$.
The latter is straightforward for diffusion models, but not in the general case of \autoref{eq:ode}.
Remarkably, it is also possible to evaluate $\vv_0^T \nabla \log p_0(\vx_0)$ for any flow model without estimating the score itself.
Concretely, in \cref{app:sa-alt}, we show that for $\omega_t\coloneq \vv_t^T \nabla \log p_t(\vx_t)$,  $\frac{d}{dt} \omega_t = -\div (\frac{\partial \vu_t}{\partial \vx}(\vx_t)\vv_t)$.
Therefore, in absence of the score function, one can estimate $\omega_0$ by augmenting \autoref{eq:sensitivity} with $\dot{\omega}_t$ initialized at $\omega_T = \| \nabla \log p_T(\vx_T)\|^2$:
\begin{equation}\label{eq:aug-sensitivity}
    d\begin{bmatrix}
        \vx_t \\ \vv_t \\ \omega_t
    \end{bmatrix} = \begin{bmatrix}
        \vu_t(\vx_t) \\
        \frac{\partial \vu_t(\vx_t)}{\partial \vx} \vv_t \\
        -\div\big(\frac{\partial \vu_t(\vx_t)}{\partial \vx} \vv_t\big)
    \end{bmatrix}dt.
\end{equation}
In \autoref{fig:vfa-verification} we present the SA verification algorithm.
To empirically validate \autoref{eq:aug-sensitivity}, we used a VP-SDE CIFAR-10 diffusion model  \citep{karczewski2024diffusion}, sampled 256 latent codes $\vx_T$ and solved \autoref{eq:aug-sensitivity} from $t=T$ to $t$ corresponding to $\log\mathrm{SNR}(t)=1$.
This is a score-based model and thus we can compare the ground truth $\vv_t^T\nabla \log p_t(\vx_t)$ with the estimated $\omega_t$.
We found that their correlation was at 98.8\%.
See \autoref{fig:omega-vs-prod-cifar}.

\begin{figure}
    \centering
\includegraphics[width=0.95\linewidth]{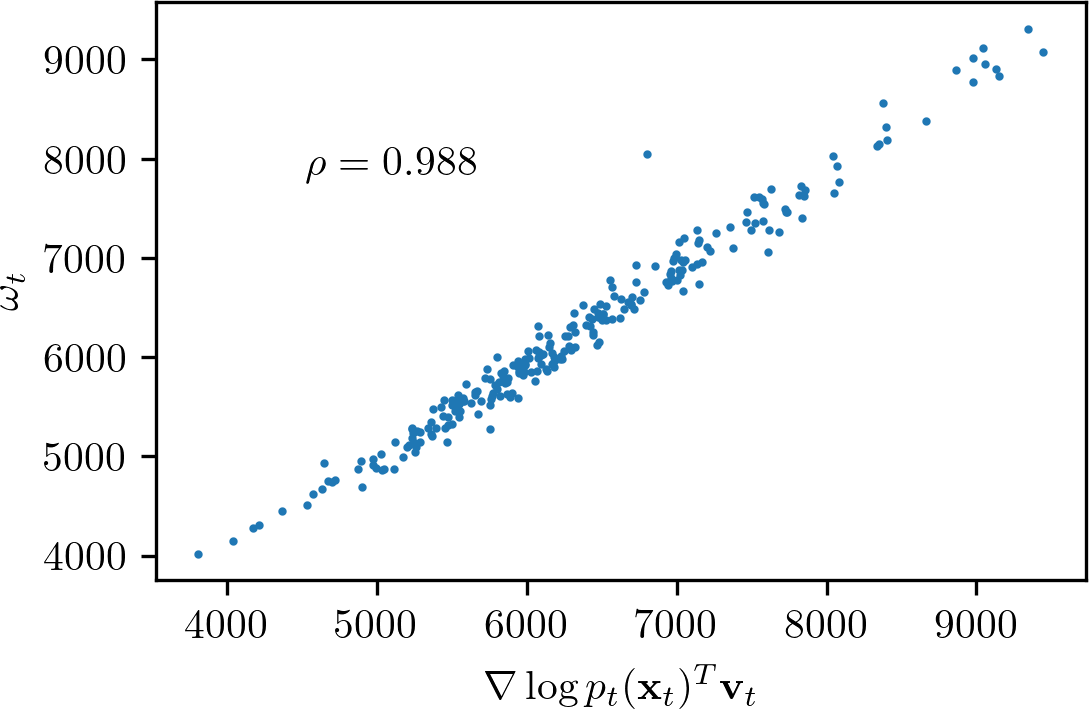}
    \caption{\textbf{The augmented sensitivity equations of \autoref{eq:aug-sensitivity} accurately tracks the score alignment (SA).}}
    \label{fig:omega-vs-prod-cifar}
\end{figure}

\section{Density Guided Sampling}\label{sec:dgs}

\begin{figure}
\centering
        \begin{algorithm}[H]
            \caption{Score Alignment verification}
            \begin{algorithmic}[1]
                \STATE \textbf{input:} Flow $\vu_t$, latent $\vx_T \in \R^D$, step size $dt>0$
                \STATE \textbf{initialize} $\vv_T = \nabla \log p_T(\vx_T)$, $t=T$, \highlight{$\omega_T=\|\vv_T\|^2$}
                \WHILE{$t > 0$}
                    \STATE $d\vx \gets \vu_t(\vx_t)$ 
                    \STATE $d\vv \gets \mathrm{JVP}(\vu_t, \vx_t, \vv_t)$
                    \STATE \highlight{$\veps \gets \mathrm{Uniform}\{-1, 1\}^D$}\COMMENT{Rademacher variables}
                    \STATE \highlight{$d\omega \gets  - \veps^T \mathrm{JVP}(d\vv, \vx_t, \veps)$}\COMMENT{Hutchinson's trick}
                    \STATE $\vx_t \gets \vx_t - dt \cdot d\vx$
                    \STATE $\vv_t \gets \vv_t - dt \cdot d\vv$
                    \STATE \highlight{$\omega_t \gets \omega_t - dt \cdot d\omega$}
                    \STATE $t \gets t - dt$
                \ENDWHILE
                \IF{$\nabla \log p_0(\vx_0)$ known}
                    \STATE \textbf{return} $\nabla \log p_0(\vx_0)^T \vv_0$
                \ELSE 
                    \STATE \highlight{\textbf{return} $\omega_0$}
                \ENDIF
            \end{algorithmic}
        \end{algorithm}
        \caption{%
    \textbf{Score Alignment Verification.} When the score $\nabla \log p_0$ is known, \autoref{eq:sensitivity} applies, and the \highlight{highlighted steps} (corresponding to \autoref{eq:aug-sensitivity}) can be omitted. We provide JAX implementation in \autoref{lst:jax_score_alignment}.}
    \label{fig:vfa-verification}
\end{figure}

In \autoref{sec:scaling} we discussed ways to determine whether scaling the latent code corresponds to changing $\log p_0(\vx_0)$.
In particular, we showed that the necessary SA condition \autoref{eq:vfa} does not always hold. Furthermore, the prior guidance does \emph{not} allow choosing the desired sample log-density.

We now present an approach for sampling $\vx_0$ with explicit control of $\log p_0(\vx_0)$. 
Suppose that we require an instantaneous density changes over time,
\begin{equation}\label{eq:log-density-evolution}
    \hspace{-10mm} \text{constraint:} \qquad \frac{d \log p_t(\vx_t)}{dt} = b_t(\vx_t) \quad \in \R
\end{equation}
for a predetermined $b_t$. To achieve this, we choose a new ODE $d\vx_t = \tilde{\vu}_t dt$, such that its density change from \autoref{eq:gen-insta-change} satisfies
\begin{equation}\label{eq:steering_general_ode}
        b_t(\vx_t) = \nabla \log p_t(\vx_t)^T \Big( \Tilde\vu_t(\vx_t) - \vu_t(\vx_t)\Big) - \div \vu_t(\vx_t).
\end{equation}
Whenever $\nabla \log p_t(\vx_t) \neq \mathbf{0}$, \autoref{eq:steering_general_ode} has multiple solutions of $\tilde\vu$ for any $b_t$.
We choose $\tilde\vu$ that is closest to $\vu$, which uniquely gives (See \autoref{app:steering-deriv})
\begin{equation}\label{eq:steering-unique-ode}
    \Tilde\vu_t(\vx) = \vu_t(\vx) + \underbrace{\frac{\div \vu_t(\vx) + b_t(\vx)}{\| \nabla \log p_t(\vx) \|^2}}_{\text{score bias } s_b(\vx)} \nabla \log p_t(\vx).
\end{equation}
\begin{tcolorbox}[colback=orange!10!white, colframe=orange!75!black]
\textbf{Density guidance:} \autoref{eq:steering-unique-ode} steers the sample away from the original trajectory towards desired likelihood.
\end{tcolorbox}
When $b_t = -\div \vu_t$, we reduce to the canonical sampler $\Tilde\vu_t =\vu_t$.
Since using \autoref{eq:steering-unique-ode} requires knowing the score function, we assume from this point on that $\vu_t$ is given by the PF-ODE of a diffusion \autoref{eq:pf-ode}, which is transformed by \autoref{eq:steering-unique-ode} into
\begin{align} \label{eq:pg-ode}
    \Tilde\vu_t(\vx_t) = f(t)\vx_t + \left( s_b(\vx_t) - \frac{1}{2} g^2(t) \right)\nabla \log p_t(\vx_t),
\end{align}
which can readily be used for sampling for any $b$.

The question is: How to choose $b_t$? Notably, we cannot simply push $b_t$ to be arbitrarily high or low, since it will fall off the diffusion manifold, leading to nonsense decodings.

\subsection{Explicit quantile matching}\label{sec:eqm}

\begin{figure}
    \centering
    \includegraphics[width=0.95\linewidth]{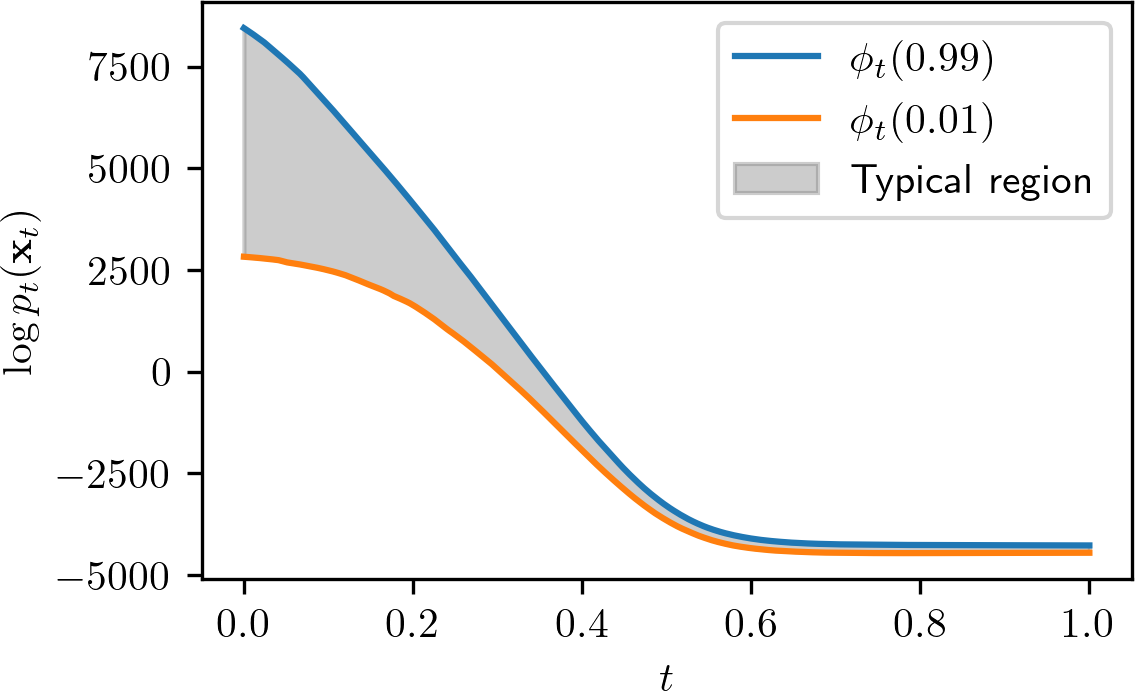}
    \caption{Quantiles and typical values of $\log p_t(\vx_t)$ for a diffusion model trained on CIFAR10.}
    \label{fig:cifar-quantiles}
\end{figure}

Suppose that we want $\vx_0$ to have a pre-defined value of log-density $\log p_0(\vx_0) = c \in \R$, which is equivalent to
\begin{equation}\label{eq:int-cond}
    \int_0^T b_t(\vx_t)dt = \log p_T(\vx_T) - c.
\end{equation}
If this holds for $b$, the \autoref{eq:steering-unique-ode} will generate a sample $\vx_0$ with the log-density $c$.
However, not all choices of $b$ are equally good.
In practice, $\vu$ and $\nabla \log p_t$ are approximated with neural networks, and their predictions are only accurate when $\vx_t$ is in the typical region of $p_t$ \citep{nalisnick2020detecting}.

Suppose that the target value $c$ is the $q$'th quantile of $\log p_0$, where $q \in [0,1]$.
A simple strategy is to choose $b_t$ such that the sample $\vx_t$ remains on the same quantile $q$ over all times $t$ and $p_t$.
Let $\phi_t(q)$ denote the $q$-th quantile of $\log p_t$.
Then
\begin{equation}
    \log p_t(\vx_t) = \log p_T(\vx_T) - \int_t^T b_\tau(\vx_\tau)d\tau = \phi_t(q),
\end{equation}
which is satisfied for $b_t(\vx) \coloneq \frac{d}{dt} \phi_t(q)$. The quantile function $\phi_t(q)$ can be estimated by sampling $K$ independent samples $\vx_T \sim p_T$, estimating $\log p_t(\vx_t)$ with \autoref{eq:insta-change-of-variables} and finding empirical quantiles for target values of $q$.
We visualize $\phi_t(0.99)$ and $\phi_t(0.01)$ in \autoref{fig:cifar-quantiles} estimated for a Variance-Preserving (VP) SDE diffusion model with linear log-$\mathrm{SNR}$ schedule trained on CIFAR10. We experimentally verify the accuracy of explicit quantile matching in \autoref{app:eqm}.

\begin{figure}
    \centering
    \includegraphics[width=0.99\linewidth]{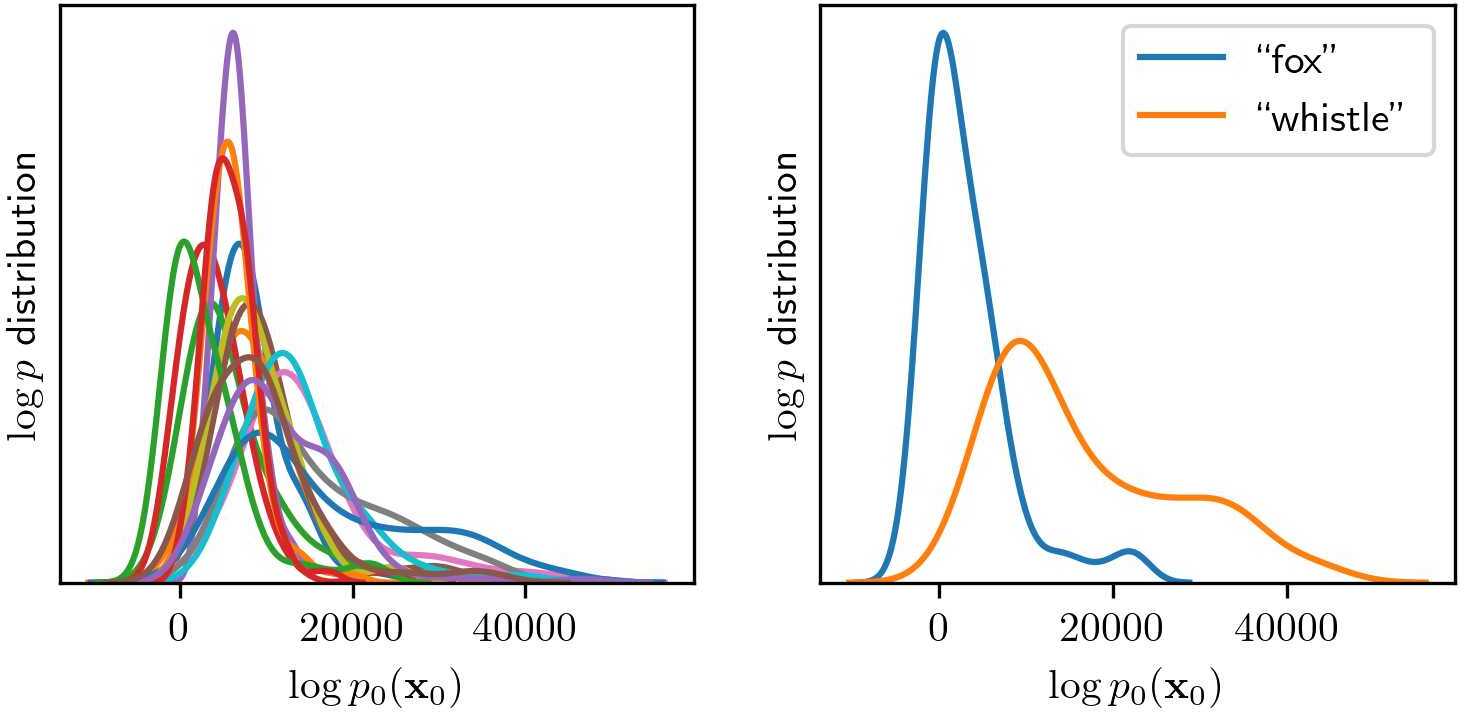}
    \caption{\textbf{Different classes have different likelihoods.} Left: Distributions of log-likelihoods of 16 randomly selected classes from ImageNet. Right: Distributions of log-likelihoods for ``fox" and ``whistle" differ significantly.}
    \label{fig:class-densities}
\end{figure}

\subsection{Implicit quantile matching}

A considerable drawback of explicit quantile matching is the need to estimate $\phi_t$.
This becomes especially problematic for conditional generation, where the distribution of $\log p_t$ can differ significantly for different classes (\autoref{fig:class-densities}).
For applications such as text-to-image, this would require estimating the distribution of $\log p_t$ for every possible text prompt, which is not feasible.

\begin{figure*}[bt]
    \centering
    \includegraphics[width=0.99\textwidth]{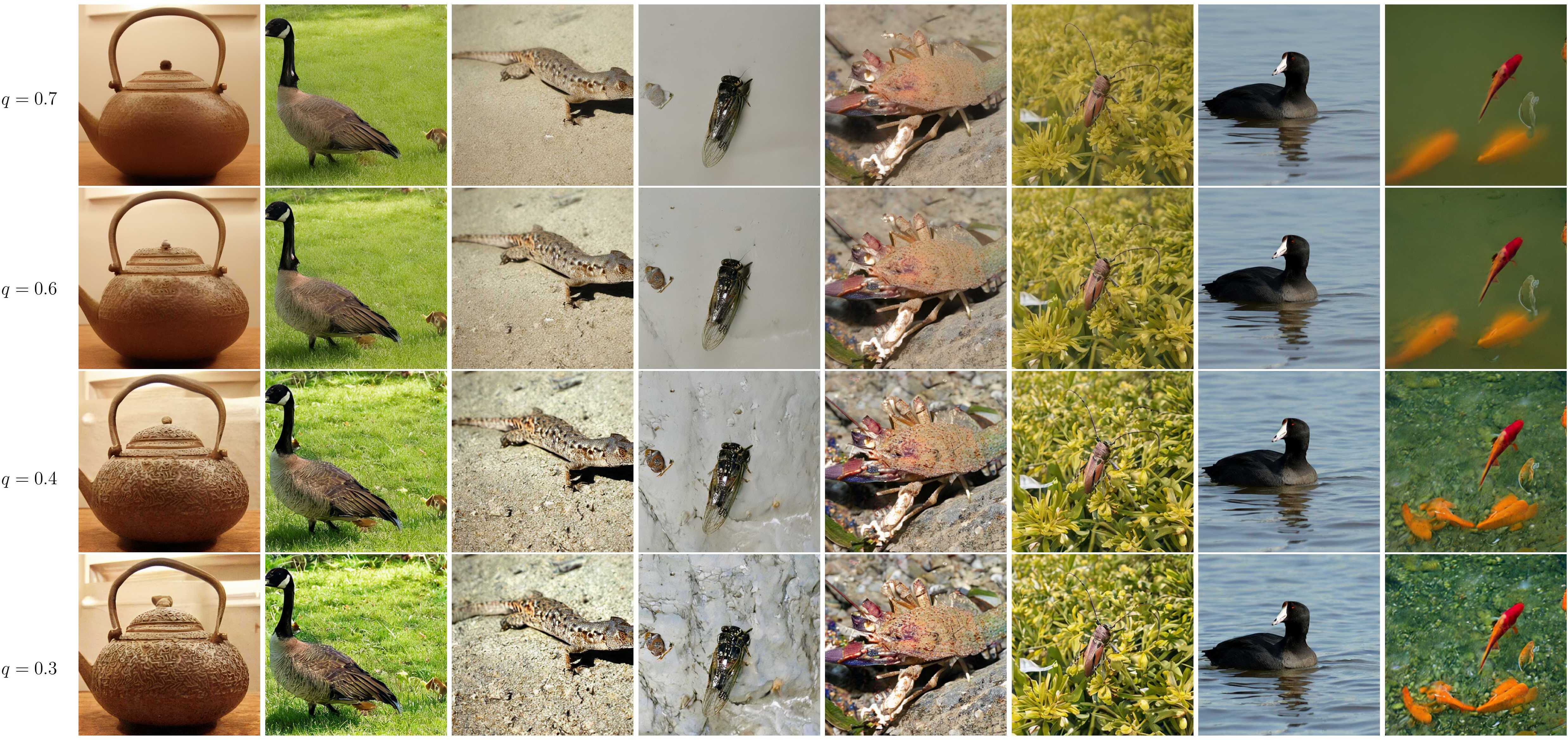}
    \caption{\textbf{Density Guidance controls the amount of detail.} Samples generated with \autoref{eq:dgs} using the EDM2 model \citep{karras2024analyzing}.}
    \label{fig:deterministic-steering}
\end{figure*}

The \autoref{eq:steering_general_ode} gives a recipe for altering the flow based on how we want $\log p_t(\vx_t)$ to evolve.
However, the challenge is to determine what are the \emph{reasonable} values of $b_t$ so that $\log p_t$ does not deviate from what is typical.
We tackle this problem by analyzing the stochastic view of CNFs.
Specifically, for $\vx_t \sim p_t$, \autoref{eq:fm-sde} says that when evaluating $\vx_{t-dt}$, we can add random noise and stay within the typical region of $p_{t-dt}$ as long as we correct for it by subtracting the score from the drift.
Furthermore, \autoref{eq:log-density-sde} says how $\log p_t$ changes under this stochastic evolution.

Concretely, the \emph{average} change in log-density, when adding noise of strength $\varphi(t)$ is given by
\begin{equation}\label{eq:typical-log-p}
    \E\big[ d \log p_t(\vx_t)\big] = - \left(\div \vu_t(\vx_t) + \frac{1}{2}\varphi^2(t) h(\vx_t) \right) dt,
\end{equation}
where $h(\vx)=\Delta \log p_t(\vx) + \| \nabla \log p_t(\vx)\|^2 \in \R$.
Interestingly, in \autoref{app:asymptotic}, we empirically (and theoretically in simplified cases) show that in diffusion models $\frac{\sigma_t^2 h(\vx_t)}{\sqrt{2D}}$ approximately follows $\mathcal{N}(0, 1)$ when $\vx_t \sim p_t$ and dimension $D$ is high.
A reasonable choice is then
\begin{equation}
    b_t(\vx) = -\div \vu_t(\vx) - \frac{1}{2}\varphi^2(t) \frac{\sqrt{2D}}{\sigma_t^2} \Phi^{-1}(q),
\end{equation}
where $\Phi$ is the cumulative distribution function of $\mathcal{N}(0, 1)$ and $q$ is the desired quantile.
We found that choosing $\varphi$ to match the diffusion strength in \autoref{eq:sde}, i.e. $\varphi\equiv g$ works well in practice.
Thus, in our experiments we use
\begin{equation}\label{eq:b-quantile}
    b^q_t(\vx) = -\div \vu_t(\vx) - \frac{1}{2}g^2(t) \frac{\sqrt{2D}}{\sigma_t^2} \Phi^{-1}(q).
\end{equation}
After plugging this definition of $b$ to \autoref{eq:steering-unique-ode}, we get
\begin{equation}\label{eq:dgs}
\vu^\textsc{dg-ode}_t(\vx_t) = f(t)\vx_t - \frac{1}{2}g^2(t)\eta_t(\vx_t)\nabla \log p_t(\vx_t),
\end{equation}
which is equivalent to simply rescaling the score by
\begin{equation}\label{eq:quantile-score-scaling}
\eta_t(\vx)=1 + \frac{\sqrt{2D}\Phi^{-1}(q)}{\| \sigma_t \nabla \log p_t(\vx) \|^2}.
\end{equation}
We call \autoref{eq:dgs} \emph{Density-Guided Sampling} (DGS).
Importantly, DGS comes at no extra cost since the score is evaluated at each sampling step anyway.
Note that, as shown in \autoref{fig:logp-vs-flipd}, the correlation of $\log p_t(\vx_t)$ with image detail is the strongest for $t^* \approx \log \mathrm{SNR}^{-1}(1)$ and thus in our experiments we only use guidance in the $[T, t^*]$ interval.
In \autoref{fig:deterministic-steering} we show samples generated with DGS with different values of $q$.
Interestingly, the samples are perceptually very similar to those from Prior Guidance (\autoref{app:scaling-samples}). See \autoref{app:quant} for more samples and quantitative results.
\paragraph{Conditional generation}
Whenever a conditional score function $\nabla \log p_t(\vx | \mathrm{cond} )$ is available, where $\mathrm{cond}$ can be any condition (class, text, etc.), one need only replace the score function with the conditional one in \autoref{eq:dgs}.

\begin{figure*}[bt]
    \centering
    \includegraphics[width=0.99\textwidth]{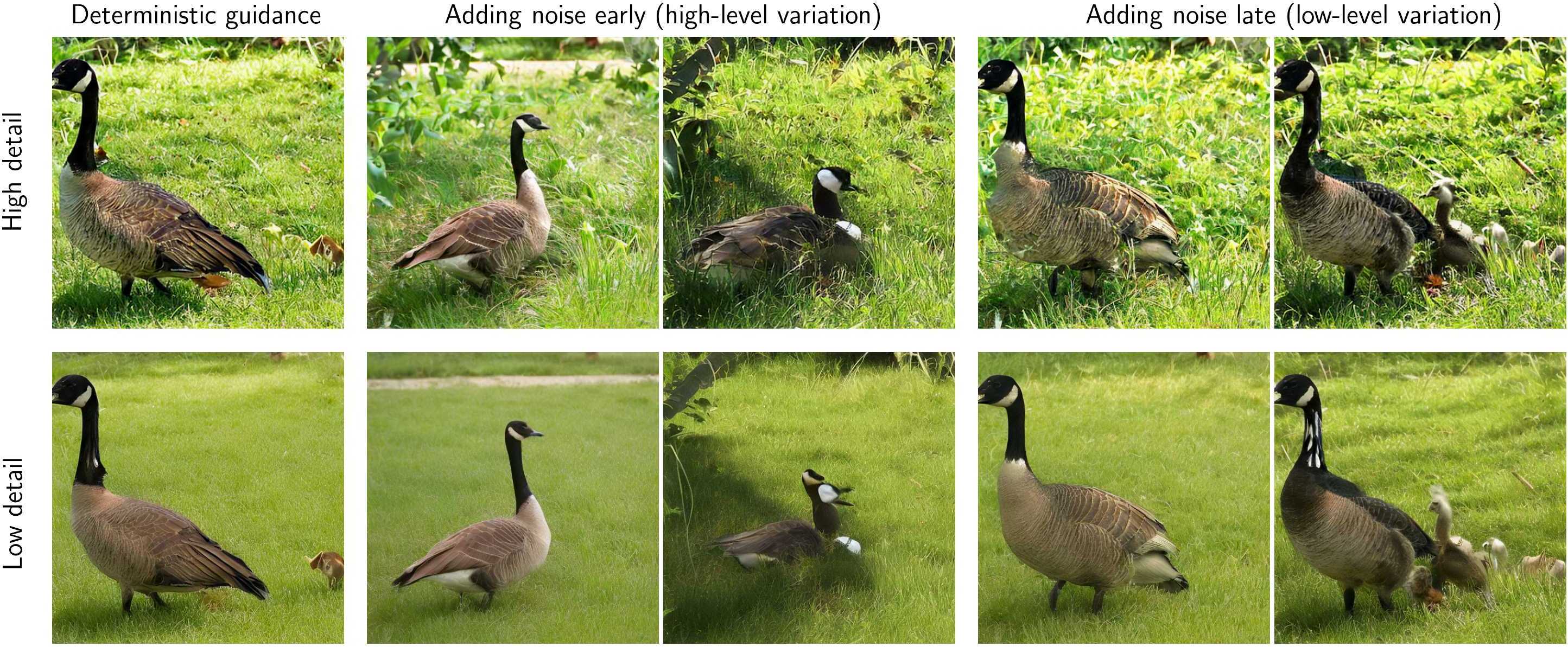}
    \caption{\textbf{Stochastic density guidance increases variation in generated samples whilst maintaining the desired level of detail.} Samples generated with \autoref{eq:stochastic-steering}. Adding stochasticity early in the sampling process changes high-level features, whereas adding noise later, only affects lower-level detail.}
    \label{fig:stochastic-steering}
\end{figure*}

\section{Stochastic density guidance}\label{sec:sold}

In previous sections we discussed two methods for controlling $\log p_0(\vx_0)$ during ODE sampling of form \autoref{eq:ode}.
However, it has been reported that adding stochasticity during sampling can improve sample quality \citep{song2021scorebased, karras2022elucidating}.
Neither of the previously discussed methods supports stochastic sampling. Recently \citet{karczewski2024diffusion} lifted the first roadblock towards this by showing how to \emph{evaluate} $\log p_0(\vx_0)$ for an SDE. We now ask: Is it possible to also \emph{control} $\log p_0(\vx_0)$ during stochastic sampling?

Recall a stochastic CNF sampler with noise strength $\varphi$:
\begin{equation}\tag{\ref{eq:fm-sde}}
d\vx_t = \Big(\vu_t(\vx_t) - \frac{1}{2}\varphi^2(t)\nabla \log p_t(\vx_t)\Big) dt + \varphi(t) d\overline{\rW}_t.
\end{equation}
In \autoref{app:sold}, we show that, similarly to density guidance \autoref{eq:dgs}, it can be altered to enforce the desired evolution of log-density over time.
Specifically, suppose that $\vu_t=\vu_t^{\textsc{PF-ODE}}$ and that we require $\frac{d \log p_t(\vx_t)}{dt} = b_t(\vx_t)$ for $b$ defined in \autoref{eq:b-quantile}. 
Then, the stochastic process
\begin{equation}\label{eq:stochastic-steering}
d \vx_t =\vu_t^\textsc{dg-sde}(\vx_t)dt + \varphi(t)P_t(\vx_t)d\overline{\rW}_t
\end{equation}
\emph{approximately} satisfies $\frac{d \log p_t(\vx_t)}{dt} = b_t(\vx_t)$, where
\begin{align}
\vu^\textsc{dg-sde}_t(\vx) &= \vu^\textsc{dg-ode}_t(\vx) \\
 & + \underbrace{\frac{1}{2}\varphi^2(t)\frac{\Delta \log p_t(\vx)}{\| \nabla \log p_t(\vx) \|^2}\nabla \log p_t(\vx)}_{\text{correction for added stochasticity}}
\end{align}
and
\begin{equation}
\hspace{-1.5mm} P_t(\vx) = \mI_D - \left(\frac{\nabla \log p_t(\vx)}{\| \nabla \log p_t(\vx) \|}\right) \hspace{-1mm} \left(\frac{\nabla \log p_t(\vx)}{\| \nabla \log p_t(\vx) \|}\right)^T
\end{equation}
is the ``score-orthogonal" projection, which ensures that the $\log p_t(\vx_t)$ changes deterministically even though $\vx_t$ is stochastic.
In \autoref{app:sold} we provide the formula \autoref{eq:general-sde-steering} of the SDE drfit that \emph{exactly} achieves $d \log p_t(\vx_t)=b_t(\vx_t)dt$ for any choice of $b_t$ and $\vu_t$, which we omit here for presentation clarity. In \autoref{app:stochastic-eqm} we experimentally demonstrate that we can obtain exact likelihoods even for stochastic sampling, provided the number of sampling steps is large enough.

The \autoref{eq:stochastic-steering} allows increasing variation in the samples by injecting noise to DGS \autoref{eq:dgs} whilst maintaining the desired evolution of the log-density.
Furthermore, since it is known that diffusion models first generate high-level features and then the details \citep{ho2020denoising, deja2022analyzing, wang2023diffusion}, DGS can be combined with stochasticity by introducing noise at specific stages of the generation process, allowing for controlled variation in either high- or low-level features while preserving the desired level of detail.
We demonstrate this approach in \autoref{fig:stochastic-steering} and provide more samples in \autoref{app:more-stochastic}.

\section{Related Work}\label{sec:related-work}

\citet{sehwag2022generating} proposed a method for generating samples from low-density regions of diffusion models. However, due to the intractability of likelihood in diffusion models, their approach relies on approximations. Subsequent work by \citet{karczewski2024diffusion} demonstrated that likelihood is, in fact, tractable even in stochastic diffusion models, challenging the need for such approximations. Building on this, our proposed methods provide explicit likelihood control for both deterministic sampling—via Prior Guidance (\autoref{sec:scaling}) and Density Guidance (\autoref{sec:dgs})—and stochastic sampling through Stochastic Density Guidance (\autoref{sec:sold}).

\citet{song2021scorebased} observed that scaling the latent code alters the amount of detail in deterministically generated images.
While this phenomenon has been widely acknowledged, we provide a rigorous analysis (\autoref{sec:scaling}) and prove that it is a direct consequence of Score Alignment \autoref{eq:vfa}, which guarantees that scaling leads to a monotonic change in the likelihood of the generated image, $\log p_0(\vx_0)$.
Furthermore, we introduce tractable numerical tools (\autoref{fig:vfa-verification}) that can verify whether any given CNF model (not necessarily score-based) exhibits this behavior.

\citet{karras2024guiding} proposed auto-guidance as a method for improving sample quality by targeting high-density regions. However, \citet{karczewski2024diffusion} found that the highest-density regions in diffusion models contain cartoon-like or blurry images, which raises concerns about the effectiveness of purely maximizing likelihood. In contrast, we introduce multiple cost-free methods for explicitly controlling the likelihood of generated samples. Additionally, while \citet{karras2024guiding} observed that scaling the score function leads to oversimplified images, we demonstrate that DGS \autoref{eq:dgs} enables effective control over image detail—both increasing and decreasing it—when the scaling is adapted both temporally and spatially \autoref{eq:quantile-score-scaling}.

\citet{yu2023scalable} introduced Riemannian Langevin Dynamics, an SDE with a non-diagonal diffusion matrix, similar in structure to our Stochastic Density Guidance (\autoref{sec:sold}). However, a key distinction is that our diffusion matrix is a projection onto the orthogonal complement of the subspace spanned by the score function. As a result, it is not positive definite and cannot serve as a Riemannian metric tensor, making our approach fundamentally different in its mathematical formulation and behavior.

Recently, \citet{kamkari2024geometric} proposed a method for measuring local intrinsic dimension, which, in the case of images, corresponds to the amount of detail present. However, we show that negative $\log p$ is a more effective measure of image detail and provide empirical comparisons in \autoref{fig:logp-vs-flipd}. Moreover, while \citet{kamkari2024geometric} focus on measuring image detail, our methods enable direct manipulation of it, allowing for finer control over generative model outputs.

\section{Conclusion}\label{sec:conclusion}

In this paper, we introduced methods for controlling sample density in flow models, enabling manipulation of image detail through likelihood-guided sampling. We provided a theoretical explanation of latent code scaling by introducing score alignment, a condition that can be tractably checked for any CNF model. Building on this, we derived Density Guidance, a principled modification of the generative ODE that allows for exact log-density control during sampling. Finally, we extended this approach to stochastic sampling, demonstrating that it retains precise detail control while allowing controlled variation in image structure and detail. Our findings deepen the understanding of likelihood in flow models and provide practical tools for better sample control. 

\section*{Impact Statement}\label{sec:impact}
This paper presents work that advances the understanding and controllability of sample density in diffusion-based generative models. By introducing techniques for precise log-density control, our work contributes to improved interpretability and fine-grained control over image generation. While these advancements could enhance applications in creative and scientific domains, they also raise considerations around synthetic media generation and potential misuse. However, our contributions primarily aim at improving theoretical understanding and control in generative modeling, without introducing new ethical risks beyond those already associated with generative AI.

% Acknowledgements should only appear in the accepted version.
\section*{Acknowledgements}
This work was supported by the Finnish Center for Artificial Intelligence (FCAI) under Flagship R5 (award 15011052).
RK thanks Paulina Karczewska for her help with preparing figures.
VG acknowledges the support from Saab-WASP (grant 411025), Academy of Finland (grant 342077), and the Jane and Aatos Erkko Foundation (grant 7001703).

\bibliography{refs}
\bibliographystyle{icml2025}

%%%%%%%%%%%%%%%%%%%%%%%%%%%%%%%%%%%%%%%%%%%%%%%%%%%%%%%%%%%%%%%%%%%%%%%%%%%%%%%
%%%%%%%%%%%%%%%%%%%%%%%%%%%%%%%%%%%%%%%%%%%%%%%%%%%%%%%%%%%%%%%%%%%%%%%%%%%%%%%
% APPENDIX
%%%%%%%%%%%%%%%%%%%%%%%%%%%%%%%%%%%%%%%%%%%%%%%%%%%%%%%%%%%%%%%%%%%%%%%%%%%%%%%
%%%%%%%%%%%%%%%%%%%%%%%%%%%%%%%%%%%%%%%%%%%%%%%%%%%%%%%%%%%%%%%%%%%%%%%%%%%%%%%
\newpage
\appendix
\onecolumn

\crefname{subsection}{Appendix}{Appendices}
\section{Auxiliary results}\label{app:aux}
\subsection{Constrained optimization}\label{app:constr}
In multiple sections, we will be solving constrained optimization problems, which can be written in the following way.
Suppose $\vv \in \R^D$, $\vv \neq \mathbf{0}$, any $\vy \in \R^D$ and $a \in \R$.
The problem we will encounter is
\begin{equation}\label{eq:constrained-opt}
\begin{split}
    &\min_{\vx \in \R^D}\frac{1}{2}\| \vx - \vy\|^2 \\
    &\text{s.t.} \quad \vx^T\vv = a.
\end{split}
\end{equation}
We solve this by introducing the Lagrangian $\mathcal{L}(\vx, \lambda)=\frac{1}{2}\| \vx - \vy\|^2 + \lambda (\vx^T\vv - a)$ for $\lambda \in \R$.
By setting $\frac{\partial \mathcal{L}}{\partial \vx}=\mathbf{0}$, we get
\begin{equation}
\vx - \vy + \lambda \vv = 0 \Rightarrow \vx = \vy - \lambda \vv.
\end{equation}
To find $\lambda$ we substitute for $\vx$ in the constraint and find
\begin{equation}
\vy^T\vv - \lambda \| \vv \|^2 = a \Rightarrow \lambda = \frac{\vy^T\vv - a}{\| \vv \|^2}.
\end{equation}
Combining the two, we get that the solution is given by
\begin{equation}
\vx = \vy + \frac{a - \vv^T\vy}{\| \vv \|^2}\vv.
\end{equation}
\subsection{Divergence-gradient identity}\label{app:div-id}
We will make use of an identity connecting the gradient of the divergence with the divergence of a Jacobian vector product.
\begin{lemma}
Let $f: \R^D \to \R^D$  with continuous 2-nd order derivatives and $\vv \in \R$.
Define $g(\vx) \coloneqq \div f(\vx) = \sum_{i=1}^D \frac{\partial f^i}{\partial x_i}(\vx)$ and $G(\vx) \coloneqq \frac{\partial f}{\partial \vx}(\vx)\vv$.
Then $g:\R^D \to \R$ is a scalar function and $G:\R^D \to \R^D$ is a vector function satisfying
\begin{equation}
\nabla g(\vx)^T\vv = \div G(\vx).
\end{equation}
Equivalently, we write it as
\begin{equation}\label{eq:div-identity}
\left( \nabla \div f(\vx)\right)^T\vv = \div \left( \frac{\partial f}{\partial \vx}(\vx)\vv \right)
\end{equation}
\end{lemma}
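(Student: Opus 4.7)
The plan is a direct coordinate-wise computation, leveraging only the equality of mixed partial derivatives (Clairaut/Schwarz), which is permitted by the assumed continuity of the second-order derivatives of $f$. The vector $\vv$ is treated as a constant in $\vx$, so it commutes freely with spatial partial derivatives.

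First I would expand the left-hand side. Writing $f = (f^1, \dots, f^D)$, the scalar $g(\vx) = \div f(\vx) = \sum_{i=1}^D \partial_i f^i(\vx)$, so
\begin{equation*}
\nabla g(\vx)^T \vv \;=\; \sum_{j=1}^D v_j \,\partial_j g(\vx) \;=\; \sum_{j=1}^D \sum_{i=1}^D v_j \,\partial_j \partial_i f^i(\vx).
\end{equation*}
Next I would expand the right-hand side. The $k$-th component of $G(\vx)$ is $G^k(\vx) = \sum_{j=1}^D v_j \,\partial_j f^k(\vx)$, so
\begin{equation*}
\div G(\vx) \;=\; \sum_{k=1}^D \partial_k G^k(\vx) \;=\; \sum_{k=1}^D \sum_{j=1}^D v_j \,\partial_k \partial_j f^k(\vx),
\end{equation*}
where I have pulled the constant $v_j$ through the $\partial_k$ derivative.

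The remaining step is to match the two double sums. Using the continuity of the second partials of each $f^k$, Schwarz's theorem gives $\partial_k \partial_j f^k = \partial_j \partial_k f^k$. Substituting into the right-hand side and then renaming the summation index $k \leftrightarrow i$ yields exactly the expression obtained for the left-hand side, completing the identity.

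I do not expect any serious obstacle; the only delicate point is ensuring the hypothesis is strong enough to interchange partials, and the assumption of continuous second-order derivatives covers this precisely. No Lagrangian reasoning or integration by parts is required — it is a one-line application of Clairaut's theorem dressed up in vector-calculus notation.
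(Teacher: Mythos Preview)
Your proposal is correct and follows essentially the same approach as the paper: expand both sides in coordinates, swap the order of partial derivatives (the paper does this implicitly where you explicitly invoke Schwarz), and identify the resulting double sums. There is nothing to add.
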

\begin{proof}
\begin{align*}
\nabla g(\vx)^T\vv &= \sum_{j=1}^D \frac{\partial g}{\partial x_j}(\vx) v_j = \sum_{j=1}^D \frac{\partial }{\partial x_j} \left( \sum_{i=1}^D \frac{\partial f^i}{\partial x_i}(\vx)\right)v_j = \sum_{i=1}^D \frac{\partial }{\partial x_i} \left(\sum_{j=1}^D \frac{\partial f^i}{\partial x_j}v_j \right) = \sum_{i=1}^D \frac{\partial }{\partial x_i} \left(\frac{\partial f}{\partial \vx}(\vx)\vv \right)_i \\
& = \sum_{i=1}^D \frac{\partial }{\partial x_i} G^i(\vx) = \div G(\vx).
\end{align*}
\end{proof}
\subsection{Optimality of projection}\label{app:proj-opt}
In \autoref{app:sold} we will be interested in finding a linear operator $\mA \in \R^{D \times D}$ satisfying $\mA\vv=\mathbf{0}$ for some $\vv$, so that the distance between $\mA$ and the identity $\mI_D$ is minimal.
The following lemma provides a solution.
\begin{lemma}\label{lem:proj-opt}
Let $\mathbf{0} \neq \vv \in \R^D$.
The solution of
\begin{equation}
\begin{split}
&\min_{\mA \in \R^{D \times D}} \|\mA - \mI_D\| \\
& \text{s.t. } \mA\vv=\mathbf{0},
\end{split}
\end{equation}
where $\| \cdot  \|$ can be either the spectral or Frobenius norm, is given by the projection matrix
\begin{equation}
\mA^{\textsc{opt}} = \mP = \mI_D - \left( \frac{\vv}{\|\vv\|} \right)\left( \frac{\vv}{\|\vv\|} \right)^T.
\end{equation}
\end{lemma}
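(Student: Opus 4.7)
The plan is to reparametrize the problem by setting $\mB = \mA - \mI_D$, so that the objective becomes $\|\mB\|$ and the constraint $\mA\vv = \mathbf{0}$ transforms into $\mB\vv = -\vv$. Dividing through by $\|\vv\|$, this is equivalent to $\mB\hat{\vv} = -\hat{\vv}$, where $\hat{\vv} \coloneqq \vv/\|\vv\|$. Intuitively, any feasible $\mB$ is forced to act as $-\mathrm{id}$ on the one-dimensional subspace $\mathrm{span}(\vv)$ but is completely free on its orthogonal complement $\mP\R^D$; to minimize the norm one should take $\mB$ to vanish on that complement, which will leave $\mA$ equal to the projection $\mP$.

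First I would handle the Frobenius case. Using $\mI_D = \mP + \hat{\vv}\hat{\vv}^T$ and the constraint, I would write
\[
\mB = \mB\mP + \mB\hat{\vv}\hat{\vv}^T = \mB\mP - \hat{\vv}\hat{\vv}^T.
\]
The key step is to check that these two pieces are orthogonal in the Frobenius inner product, which follows from the identity $\hat{\vv}^T\mP = \mathbf{0}$ via $\langle \mB\mP, \hat{\vv}\hat{\vv}^T\rangle_F = \hat{\vv}^T \mP \mB^T \hat{\vv} = 0$. Once that is established, $\|\mB\|_F^2 = \|\mB\mP\|_F^2 + \|\hat{\vv}\hat{\vv}^T\|_F^2 = \|\mB\mP\|_F^2 + 1$, which is uniquely minimized when $\mB\mP = \mathbf{0}$. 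That forces $\mB = -\hat{\vv}\hat{\vv}^T$ and hence $\mA = \mI_D - \hat{\vv}\hat{\vv}^T = \mP$.

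For the spectral norm I would give a shorter direct argument: every feasible $\mB$ satisfies $\|\mB\|_{\mathrm{op}} \geq \|\mB\hat{\vv}\| = \|\hat{\vv}\| = 1$, and the candidate $\mB = -\hat{\vv}\hat{\vv}^T$ attains this lower bound since it is a rank-one matrix with single nonzero singular value equal to $1$. So $\mA = \mP$ is optimal in both norms simultaneously. I do not expect a real obstacle; the only substantive computation is verifying the Frobenius-orthogonality of the decomposition, after which both cases follow immediately. The one subtlety worth noting is that uniqueness holds in the Frobenius case (the minimizer is exactly $\mP$) but not in the spectral case, since any $\mB$ with $\mB\hat{\vv} = -\hat{\vv}$ and operator norm $1$ also attains the minimum.
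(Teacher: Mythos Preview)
Your proof is correct. It differs somewhat from the paper's argument, mainly in the Frobenius case. The paper handles both norms with a single chain of inequalities
\[
\|\mA - \mI_D\|_F \;\ge\; \|\mA - \mI_D\|_2 \;\ge\; \left|\frac{\vv^T(\mA-\mI_D)\vv}{\|\vv\|^2}\right| \;=\; 1,
\]
and then checks that $\mP$ attains the value $1$ in both norms. You instead treat the Frobenius case via the orthogonal decomposition $\mB = \mB\mP - \hat{\vv}\hat{\vv}^T$ with respect to the Frobenius inner product, which immediately gives $\|\mB\|_F^2 = \|\mB\mP\|_F^2 + 1$ and hence not only optimality of $\mP$ but also its uniqueness; the paper's bound-matching argument does not yield uniqueness. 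For the spectral norm your lower bound $\|\mB\|_{\mathrm{op}} \ge \|\mB\hat{\vv}\| = 1$ is essentially the same idea as the paper's Rayleigh-quotient bound, just phrased more directly. Your closing remark that the spectral minimizer need not be unique is a correct and useful observation that the paper does not make.
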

\begin{proof}
First note that for any $\mA \in \R^{D \times D}$ satisfying $\mA\vv=\mathbf{0}$, we have
\begin{equation}
\| \mA - \mI_D \|_F \geq \| \mA - \mI_D\|_2 = \max_{\vw \neq 0} \left| \frac{\vw^T (\mA - \mI_D) \vw}{\|\vw\|^2} \right| \geq \left| \frac{\vv^T (\mA - \mI_D) \vv}{\|\vv\|^2} \right| = \left| \frac{\vv^T (0 - \vv)}{\|\vv\|^2} \right| = \frac{\vv^T \vv}{\|\vv\|^2} = 1.
\end{equation}
On the other hand $\mP - \mI_D = \left( \frac{\vv}{\|\vv\|} \right)\left( \frac{\vv}{\|\vv\|} \right)^T$, which has only a single non-zero eigenvalue $\lambda = 1$ and thus
\begin{equation}
\| \mP - \mI_D \|_F = \| \mP - \mI_D\|_2 = 1
\end{equation}
and
\begin{equation}
\mP\vv = \vv - \frac{\vv^T\vv}{\| \vv\|^2} \vv = \mathbf{0}.
\end{equation}
Therefore $\mA = \mP$ satisfies $\mA\vv=\mathbf{0}$ and minimizes both $\| \mA - \mI_D\|_2$ and $\| \mA - \mI_D\|_F$.
\end{proof}
\section{Derivation of CNF density evolutions}\label{app:cnf}

We reproduce the continuous-time normaling flow (CNF) density evolution of \citet{chen2018neural},
\begin{align}
    \frac{d\log p_t(\vx_t)}{dt} &= -\operatorname{div} \vu_t(\vx_t),
\end{align}
and the generalised CNF density evolution of \citet{karczewski2024diffusion},
\begin{align}
    \frac{d\log p_t(\vx_t)}{dt} &= -\operatorname{div} \vu_t(\vx_t) + \nabla \log p_t(\vx_t)^T (\tilde\vu_t(\vx_t) - \vu_t(\vx_t)),
\end{align}
with a unified derivation.

We assume a time-dependent particle $\vx_t \in \R^D$ evolving through continuous time $t \in \R$ governed by an ordinary differential equation (ODE)
\begin{align}
    \frac{d\vx_t}{dt} &= \vu_t(\vx_t),
\end{align}
where $\vu_t(\vx) : \R \times \R^D \mapsto \R^D$ is a time-dependent vector field that maps any state vector $\vx$ to its time derivative vector $\vu_t(\vx_t)$.

\paragraph{CNF density evolution}
We are interested in the time evolution of the spatiotemporal log-likelihood $\log p_t(\vx_t)$ for particles evolving under the ODE. 
We write the log density total derivative wrt time by using the chain rule
\begin{align}
    \frac{d \log p_t(\vx_t)}{dt} &= \frac{1}{p_t(\vx_t)} \frac{d p_t(\vx_t)}{dt} \\
    &= \frac{1}{p_t(\vx_t)} \left( \frac{\partial p_t(\vx_t)}{\partial t} + \frac{\partial p_t(\vx_t) }{\partial \vx} \cdot \frac{d \vx_t }{d t} \right),
\end{align}
which describes the density evolution of a particle moving under a flow.
We assume that the ODE is a continuous-time normalizing flow, where the density is conserved over time. This is described by the continuity equation \citep{finlay2020learning,xu2024normalizing}
\begin{align}\label{eq:cont-eq}
    \frac{\partial p_t(\vx_t)}{\partial t} + \nabla \cdot \Big(p_t(\vx_t) \vu_t(\vx_t)\Big) = 0,
\end{align}
which describes the change in particle density as a result of a vector field $\vu_t$ transporting the particles, at location $\vx$. By substitution we obtain
\begin{align}
    \frac{d \log p_t(\vx_t)}{dt} &= \frac{1}{p_t(\vx_t)} \Big( -\nabla \cdot (p_t(\vx_t) \vu_t(\vx_t)) + \nabla p_t(\vx_t) \cdot \vu_t(\vx_t) \Big) \\
    &= \frac{1}{p_t(\vx_t)} \Big( - p_t(\vx_t) \nabla \cdot \vu_t(\vx_t) - \nabla p_t(\vx_t) \cdot \vu_t(\vx_t) + \nabla p_t(\vx_t) \cdot \vu_t(\vx_t) \Big) \\
    &= - \frac{1}{p_t(\vx_t)} p_t(\vx_t) \nabla \cdot \vu_t(\vx_t) \\
    &= - \nabla \cdot \vu_t(\vx_t) \\
    &= - \operatorname{div} \vu_t(\vx_t).
\end{align}

\paragraph{Generalised CNF density evolution}
Next, we derive the evolution of log-density $\log p_t$ of a particle that is moving in some non-canonical direction, ie. $\dot{\vx}_t = \tilde{\vu}_t(\vx_t) \not = \vu_t(\vx_t)$. Notably, the continuity equation remains with the $\vu_t$ as we are describing the particle density in the marginal induced by the original transport $\vu_t$. We obtain 
\begin{align}
    \frac{d \log p_t(\vx_t)}{dt} &= \frac{1}{p_t(\vx_t)} \left( -\nabla \cdot \Big(p_t(\vx_t) \vu_t(\vx_t)\Big) + \nabla p_t(\vx_t) \cdot \tilde\vu_t(\vx_t) \right) \\
    &= \frac{1}{p_t(\vx_t)} \Big( - p_t(\vx_t) \nabla \cdot \vu_t(\vx_t) - \nabla p_t(\vx_t) \cdot \vu_t(\vx_t) + \nabla p_t(\vx_t) \cdot \tilde\vu_t(\vx_t) \Big) \\
    &= \frac{1}{p_t(\vx_t)} \Big( - p_t(\vx_t) \nabla \cdot \vu_t(\vx_t) + \nabla p_t(\vx_t) \cdot \Big( \tilde\vu_t(\vx_t) - \vu_t(\vx_t)\Big) \Big) \\
    &= - \nabla \cdot \vu_t(\vx_t) + \frac{1}{p_t(\vx_t)} \nabla p_t(\vx_t) \cdot \Big( \tilde\vu_t(\vx_t) - \vu_t(\vx_t)\Big) \\
    &= - \operatorname{div} \vu_t(\vx_t) +  \nabla \log p_t(\vx_t) \cdot \Big( \tilde\vu_t(\vx_t) - \vu_t(\vx_t)\Big),
\end{align}
which is the generalised instantaneous change of density.

\section{Derivation of Score Alignment}\label{app:vfa}

In this section, we prove the Score Alignment condition, a necessary and sufficient condition for prior guidance to be effective in controlling $\log p_0$.
Formally, assume a latent curve $c : [0, 1] \to \R^D$ following the score at $t=T$, $c'(s)=\nabla \log p_T(\vx_0(c(s)))$. In a Gaussian prior $p_T$ the curve becomes a line of scaled latents $\vx_T$. 
The $\log p_0$ is monotonic on the decoded curve when
\begin{equation}\label{eq:monotonocity}
\frac{d}{ds} \log p_0(\vx_0(c(s))) \geq 0, \qquad \forall s \in (0, 1).
\end{equation}
The chain rule gives the derivative
\begin{align}
\frac{d}{ds} \log p_0(\vx_0(c(s))) &= \nabla \log p_0(\vx_0(c(s)))^T \frac{d}{ds} \vx_0(c(s)) \\
&= \nabla \log p_0(\vx_0(c(s)))^T \frac{\partial \vx_0}{\partial \vx_T}(c(s))c'(s) \\
&= \nabla \log p_0(\vx_0(c(s)))^T \frac{\partial \vx_0}{\partial \vx_T}(c(s))\nabla \log p_T(c(s))
\end{align}
Therefore, for \autoref{eq:monotonocity} to hold for a curve passing through some arbitrary $\vx_T \in \R^D$ at some point $s \in (0, 1)$ it must hold
\begin{equation}\label{eq:vfa-app}
\nabla \log p_0(\vx_0(\vx_T))^T \frac{\partial \vx_0}{\partial \vx_T}(\vx_T)\nabla \log p_T(\vx_T) \geq 0,
\end{equation}
where in \autoref{eq:vfa} we omit the $\vx_T$ in the parentheses for brevity. 

\subsection{Score alignment time evolution}\label{app:sa-alt}

In this subsection, we derive \autoref{eq:aug-sensitivity}, i.e. how the SA condition can be checked without knowing $\nabla \log p_t$ for $t < T$.
Let $c$ be a latent curve following the score and passing through $\vx_T$, i.e. $c: (-\varepsilon, \varepsilon) \to \R^D$, $c'(s) = \nabla \log p_T(c(s))$ and $c(0)=\vx_T$.
Define
\begin{equation}
\psi(t, s) \coloneqq \log p_t(\vx_t(c(s))) \: \text{for } t \in [0, T], \ s \in (-\varepsilon, \varepsilon).
\end{equation}
The SA condition at $\vx_T$ (\autoref{eq:vfa-app}) is given by
\begin{equation}\label{eq:psi-vfa}
\frac{\partial \psi}{\partial s}(0, 0) = \nabla \log p_0(\vx_0(\vx_T))^T \frac{\partial \vx_0}{\partial \vx_T}(\vx_T)\nabla \log p_T(\vx_T).
\end{equation}
Note that
\begin{equation}
\frac{\partial \psi}{\partial s}(T, 0) =  \frac{d}{ds} \log p_T(c(s)) \bigg\rvert_{s=0}= \nabla \log p_T(c(s))^T c'(s) \bigg\rvert_{s=0} = \| \nabla \log p_T(\vx_T) \|^2.
\end{equation}
Therefore \autoref{eq:psi-vfa} can be equivalently written as 
\begin{equation}\label{eq:psi-int}
\frac{\partial \psi}{\partial s}(0,0) = \frac{\partial \psi}{\partial s}(T,0) + \left( \frac{\partial \psi}{\partial s}(0,0) - \frac{\partial \psi}{\partial s}(T,0)\right) = \| \nabla \log p_T(\vx_T) \|^2 + \int_T^0 \frac{\partial^2 \psi}{\partial t \partial s}(t, 0)dt,
\end{equation}
where we applied the fundamental theorem of calculus.
We arrived at a seemingly more complex formula.
However, we can now swap the order of the derivatives (assuming that $\log p \in \mathcal{C}^2(\R \times \R^D)$ and $\vu \in \mathcal{C}^1(\R \times \R^D)$):
\begin{equation}
\frac{\partial^2 \psi}{\partial t \partial s}(t, s) = \frac{\partial^2 \psi}{\partial s \partial t}(t, s) = \frac{\partial }{\partial s} \left( \frac{\partial \psi}{\partial t}(t, s) \right).
\end{equation}
$\frac{\partial \psi}{\partial t}$ is given by \autoref{eq:insta-change-of-variables}:
\begin{equation}
\frac{\partial \psi}{\partial t}(t, s) = \frac{d}{dt} \log p_t(\vx_t(c(s))) = -\div \vu_t(\vx_t(c(s)))
\end{equation}
and by the chain rule and denoting $\nabla \div \vu_t$ the gradient of the scalar function $\vx \mapsto \div \vu_t (\vx)$:
\begin{align}
\frac{\partial^2 \psi}{\partial s \partial t}(t, s) &= \frac{d}{ds} -\div \vu_t(\vx_t(c(s))) \\
&= - \left(\nabla \div \vu_t(\vx_t(c(s)))\right)^T \frac{d}{ds} (\vx_t(c(s)) \\
&= - \left(\nabla \div \vu_t(\vx_t(c(s))) \right)^T \frac{\partial \vx_t}{\partial \vx_0}(c(s)) c'(s) \\
&= - \left(\nabla \div \vu_t(\vx_t(c(s))) \right)^T \frac{\partial \vx_t}{\partial \vx_0}(c(s)) \nabla \log p_T(c(s)).
\end{align}
After setting $s=0$ we get
\begin{align}
\frac{\partial^2 \psi}{\partial s \partial t}(t, 0) &= -\left(\nabla \div \vu_t(\vx_t(\vx_T)) \right)^T \underbrace{\frac{\partial \vx_t}{\partial \vx_0}(\vx_T) \nabla \log p_T(\vx_T)}_{=\vv_t(\vx_T)} \\
&= -\left(\nabla \div \vu_t(\vx_t(\vx_T)) \right)^T \vv_t(\vx_T) \\
&\stackrel{(\ref{eq:div-identity})}{=} -\div \left( \frac{\partial \vu_t}{\partial \vx}(\vx_t)\vv_t \right),
\end{align}
where $\vx_t=\vx_t(\vx_T)$ and $\vv_t = \vv_t(\vx_T)$.
After plugging into \autoref{eq:psi-int}:
\begin{equation}
\frac{\partial \psi}{\partial s}(0, 0) = \| \nabla \log p_T(\vx_T) \|^2 + \int_T^0 -\div \left( \frac{\partial \vu_t}{\partial \vx}(\vx_t)\vv_t \right) dt
\end{equation}
which after plugging into \autoref{eq:psi-vfa} becomes
\begin{equation}\label{eq:omega_def}
\nabla \log p_0(\vx_0)^T \frac{\partial \vx_0}{\partial \vx_T}\nabla \log p_T(\vx_T) = \| \nabla \log p_T(\vx_T) \|^2 + \int_T^0 -\div \left( \frac{\partial \vu_t}{\partial \vx}(\vx_t)\vv_t \right) dt
\end{equation}
and crucially, the score function for $t<T$ does not appear in the RHS, which can be estimated purely from derivatives of $\vu_t$.

\subsection{Score alignment holds in linear models}\label{app:sa-linear}
In this subsection, we show that the score alignment \autoref{eq:vfa} holds whenever $\vu_t(\vx_t)$ is linear in $\vx$.
Such models are for example linear-drift diffusion models with Gaussian data distribution $p_0$.
Score alignment is then an immediate consequence of \autoref{eq:omega_def}.
Specifically, when $\vu_t$ is linear in $\vx$, then $\frac{\partial \vu_t}{\partial \vx}\vv_t$ does not depend on $\vx$ and thus
\begin{equation}
\div \left( \frac{\partial \vu_t}{\partial \vx}\vv_t \right) = 0
\end{equation}
and 
\begin{equation}
\nabla \log p_0(\vx_0)^T \frac{\partial \vx_0}{\partial \vx_T}\nabla \log p_T(\vx_T) = \| \nabla \log p_T(\vx_T) \|^2 + \int_T^0 0dt = \| \nabla \log p_T(\vx_T) \|^2 \geq 0.
\end{equation}

\subsection{SA does not always hold}
We provide a simple example, for which the Score Alignment condition fails and thus Prior Guidance does not lead to monotonic changes in $\log p_0(\vx_0)$.
We study a 2-dimensional Gaussian mixture distribution with three components: $p_0 = \frac{1}{3}\sum_{i=1}^3 \mathcal{N}(\mu_i, 0.005 \mI_2)$, where $\mu_1 = [-0.3502, -0.6207]^T$, $\mu_2=[-0.4828,  1.0680]^T$ and $\mu_3 = [-0.7789,  0.7565]^T$ ($\mu_i$ we randomly chosen).

We found two latent codes $\mathbf{z}_1 = [1.3166, -0.2252]^T$ and $\mathbf{z}_2 = [-0.1504, -0.2165]^T$ exhibiting inconsistent behaviour.
Specifically, when $\vx_T = \sigma_T \mathbf{z}_1$, scaling up by 1.22 \emph{decreases} $\log p_0(\vx_0)$, while for $\vx_T =  \sigma_T \mathbf{z}_2$ the same scaling \emph{increases} $\log p_0(\vx_0)$.
We visualize this in \autoref{fig:gm-sa-failure} with solid lines corresponding to decoding $\vx_T$ and the dashed lines the decodings of $1.22 \vx_T$.
This behavior was consistent regardless of which SDE was used.

\begin{figure}
    \centering
    \includegraphics[width=1\linewidth]{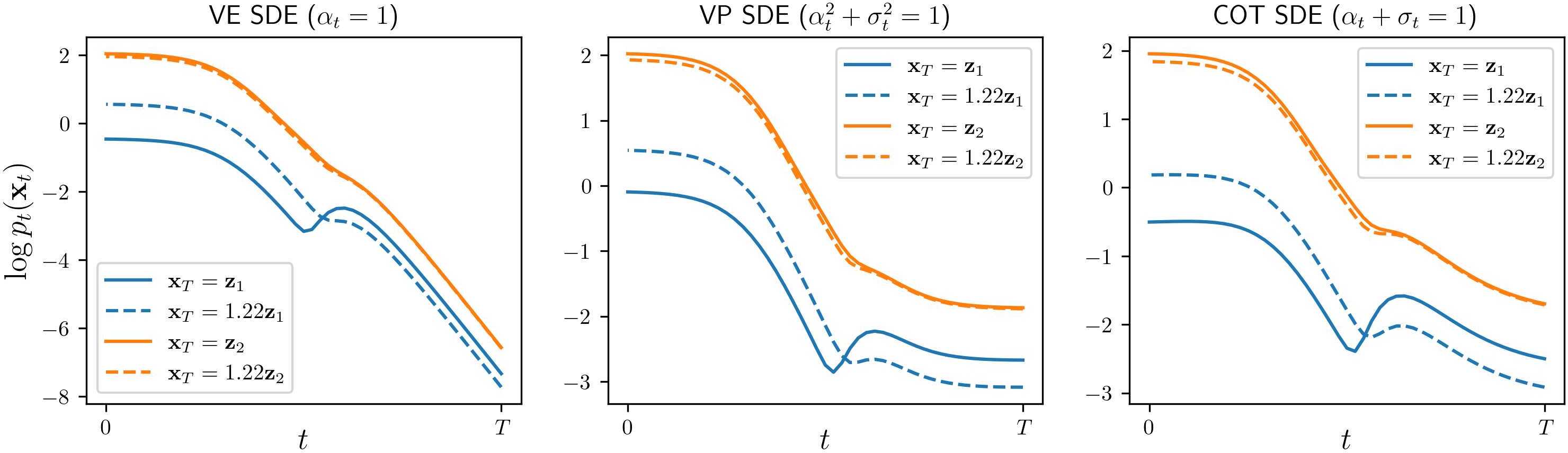}
    \caption{\textbf{Prior Guidance is ineffective due to unsatisfied SA condition.} For some latent codes: $\vx_T=\sigma_T \mathbf{z}_1$ scaling the latent code by $1.22$ increases $\log p_0(\vx_0)$, while for others the same scaling leads to a decrease in $\log p_0(\vx_0)$ ($\vx_T=\sigma_T \mathbf{z}_1$). It can be seen that the blue lines cross, while the orange lines do not. The same behavior was observed regardless of which SDE was used.}
    \label{fig:gm-sa-failure}
\end{figure}

\section{Derivation of Density Guidance}\label{app:steering-deriv}

In this section we derive \autoref{eq:steering-unique-ode}. From \autoref{eq:steering_general_ode}, we see that if $\vx_t$ is following a trajectory given by $d\vx_t=\Tilde\vu_t(\vx_t)dt$, then
\begin{align} \label{eq:evolution-v-condition}
    \frac{d \log p_t(\vx_t)}{dt} = b_t(\vx_t) \quad &\Leftrightarrow \quad  \nabla \log p_t(\vx_t)^T( \Tilde\vu_t(\vx_t) - \vu_t(\vx_t)) = b_t(\vx_t) + \div \vu_t(\vx_t) \\
    &\Leftrightarrow \quad  \tilde \vu_t(\vx_t)^T \nabla \log p_t(\vx_t) = b_t(\vx_t) + \div \vu_t(\vx_t) + \vu_t(\vx_t)^T \nabla \log p_t(\vx_t)
\end{align}
Whenever $\nabla \log p_t(\vx_t) = \mathbf{0}$, then RHS is satisfied only when $b_t(\vx_t)=-\div \vu_t(\vx_t)$. In other words, when the score function vanishes, the infinitesimal change in $\log p_t(\vx_t)$ is the same and equal to $-\div \vu_t(\vx_t)$ regardless of the choice of $\Tilde\vu_t$.

Assume now that $\nabla \log p_t(\vx_t) \neq \mathbf{0}$. For fixed $(t, \vx_t)$, we can treat the condition in \autoref{eq:evolution-v-condition} as a linear equation with $\vw \coloneqq \Tilde \vu_t(\vx_t)$ being the unknown quantity we want to solve for. It is a single equation with $D$ variables (dimensionality of $\vw$), i.e., it does not have a unique solution. We can choose one that satisfies additional criteria out of all possible solutions. Specifically, we choose a solution that diverges from the original trajectory $\vu_t(\vx_t)$ the least. We therefore solve the following constrained optimization problem
\begin{equation}\label{eq:v-constrained-opt}
\begin{split}
    &\min_{\vw \in \R^D}\frac{1}{2}\| \vw - \vu_t(\vx_t)\|^2 \\
    \text{s.t.} \quad & \vw^T \nabla \log p_t(\vx_t) = b_t(\vx_t) + \div \vu_t(\vx_t) + \vu_t(\vx_t)^T \nabla \log p_t(\vx_t),
\end{split}
\end{equation}
which is treated in \cref{app:constr}. The solution is
\begin{align}
    \vw &= \vu_t(\vx_t) + \frac{b_t(\vx_t) + \div \vu_t(\vx_t) + \vu_t(\vx_t)^T \nabla \log p_t(\vx_t) - \nabla \log p_t(\vx_t)^T \vu_t(\vx_t) }{\| \nabla \log p_t(\vx_t)\|^2} \nabla \log p_t(\vx_t) \\
    &= \vu_t(\vx_t) + \frac{b_t(\vx_t) + \div \vu_t(\vx_t)}{ \| \nabla \log p_t(\vx_t)\|^2 }\nabla \log p_t(\vx_t),
\end{align}
which matches \autoref{eq:steering-unique-ode}.
\section{Explicit quantile matching}\label{app:eqm}
To demonstrate claims made in \autoref{sec:eqm}, we performed density guidance with explicit quantile matching on CIFAR-10. Specifically, we estimated the quantile function $\phi_t$ as described in \autoref{sec:eqm} by sampling $K$\footnote{We tested $K=[16, 32, 64, 128, 256, 512, 1024]$ and found that using $K=128$ is enough to ensure a correlation between the desired value of log-density and the obtained one is above 99\%.} samples $\xT \sim p_T$ and solving the PF-ODE (\autoref{eq:pf-ode}) from $t=T$ to $t=0$ in $1024$ Euler steps. For all samples, we estimated the marginal log-density at each step $\log p_t(\xt)$ with \autoref{eq:insta-change-of-variables} and defined the quantile function $\phi_t$ as empirical quantiles of $\log p_t(\xt)$. We then define $b_t(\vx)=\frac{d}{dt}\phi_t$, which we estimate with a moving average of finite difference estimates.

We found that the difference between the desired values of log-density and the obtained ones goes to zero as we decrease the discretization error (increase the number of sampling steps). Interestingly, for lower number of sampling steps, even though we do not obtain exact desired values of likelihood, the correlation between the desired values and the obtained ones remains above 99\%, even for as few as 32 Euler sampling steps. This means that for all values of the number of sampling steps, we saw a monotonic relationship between the target $\log p_0$ and the amount of detail (PNG size). Please see \cref{fig:eqm}. As ``ground truth'' $\log p_0(\x0)$ estimate, we used \autoref{eq:insta-change-of-variables} for encoding $\x0$ to $\xT$ with the PF-ODE (\autoref{eq:pf-ode}) in 1024 Euler steps.
\begin{figure}
    \centering
    \includegraphics[width=1\linewidth]{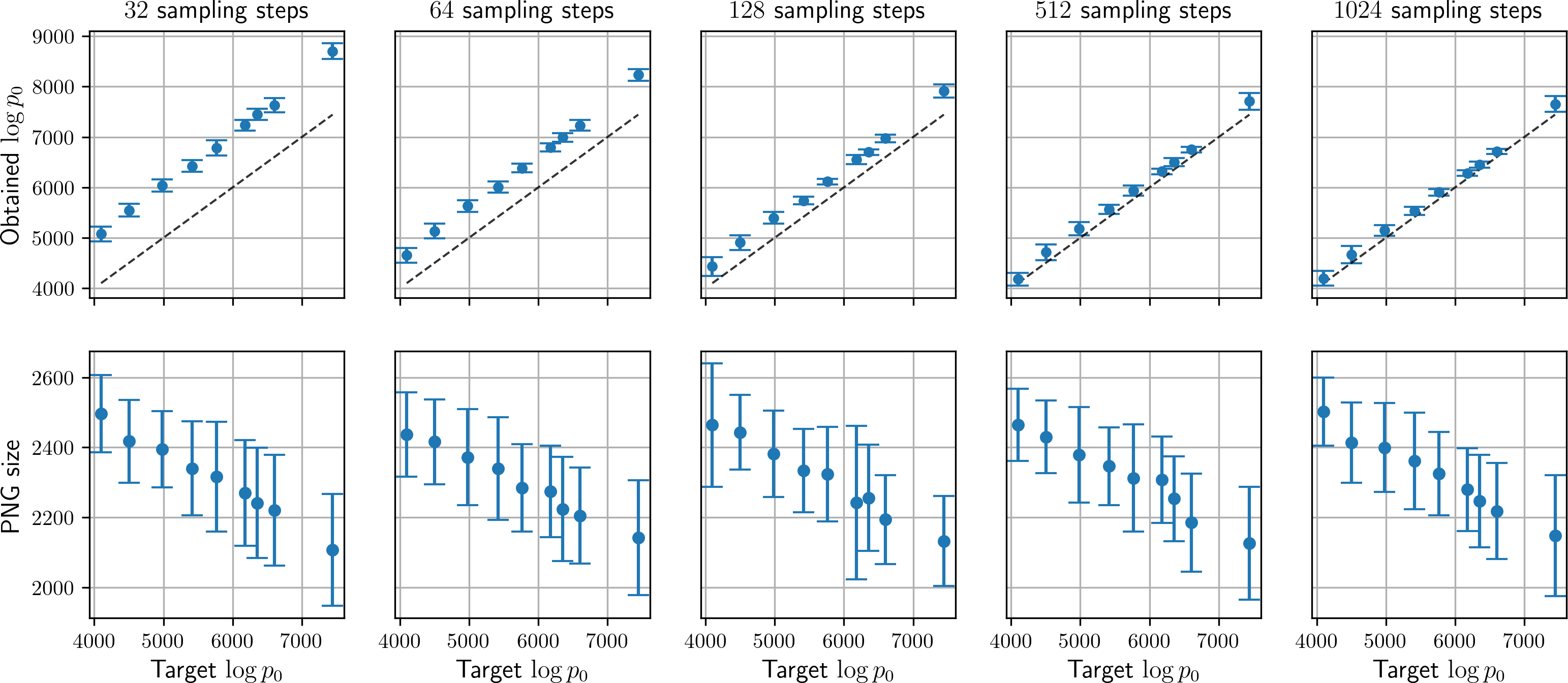}
    \caption{\textbf{Explicit Quantile Matching achieves exact likelihoods when step size goes to zero.} For all numbers of sampling steps, the correlation between the desired $\log p_0$ and the obtained $\log p_0$ is above 99\%.}
    \label{fig:eqm}
\end{figure}
\section{Asymptotic behaviour of $\Delta \log p (\vx) + \| \nabla \log p_t(\vx) \|^2$}\label{app:asymptotic}
In this section, we discuss an observation that proved useful in determining \emph{typical} values $\frac{d \log p_t(\vx_t)}{dt}$ in diffusion models. Specifically, we observed that for some distributions $p_0$ after diffusing into $p_t$ via the forward process $p(\vx_t|\vx_0)=\mathcal{N}(\alpha_t\vx_0,\sigma_t^2\mI_D)$, the following holds
\begin{equation}\label{eq:normality}
    h(\vx) = \frac{\sigma_t^2\left(\Delta \log p_t(\vx) + \| \nabla \log p_t(\vx)\|^2\right)}{\sqrt{2D}} \xrightarrow[D \to \infty]{d} \mathcal{N}(0, 1),
\end{equation}
where $D$ denotes the dimension of the distribution $p_t$ and ``$\stackrel{d}{\to}$" denotes convergence in distribution.
\subsection{Single data point}
We begin by showing \autoref{eq:normality} for the simplest possible case, where $p_0=\delta_{\vx_0}$. In that case $p_t=\mathcal{N}(\alpha_t\vx_0,\sigma_t^2 \mI_D)$ and
\begin{align*}
    & \nabla \log p_t(\vx) = \frac{\alpha_t\vx_0 - \vx}{\sigma_t^2} \\
    & \Delta \log p_t(\vx) = -\frac{D}{\sigma_t^2}.
\end{align*}
Since $\vx = \alpha_t \vx_0 + \sigma_t \veps$ for $\veps \sim \mathcal{N}(\mathbf{0}, \mI_D)$ and our expression becomes
\begin{equation}\label{eq:clt-chi}
    h(\vx) = \frac{\sigma_t^2\left(-\frac{D}{\sigma_t^2} + \frac{1}{\sigma_t^2}\|\veps \|^2\right)}{\sqrt{2D}} = \frac{\sum_j (\varepsilon_j^2 - 1)}{\sqrt{2D}}.
\end{equation}
Since $\{\varepsilon_j^2\}_j$ are i.i.d. random variables with $\chi^2_1$ distribution, we have that $\mathbb{E}[\varepsilon_j^2]=1$, $\mathrm{Var}[\varepsilon_j^2]=2$, and the claim follows from the central limit theorem.
\subsection{Non-isotropic Gaussian distribution}
When $p_t$ is Gaussian, but with non-diagonal covariance an analogous result holds when the covariance matrix satisfies some additional conditions. We begin with a useful lemma.
\begin{lemma}[Quadratic CLT \citep{deJong1987}]\label{lem:quadratic-clt}
    Suppose $A=[a_{ij}] \in \R^{D \times D}$ is a real symmetric matrix with eigenvalues $\lambda_1, \dots, \lambda_D$. Let $\{\varepsilon_j\}_{j=1\dots D}$ be independent variables such that $\varepsilon_j \sim \mathcal{N}(0, 1)$.

    If 
    \begin{equation}
        \lim_{D \to \infty} 
        \frac{\max_{j\leq D}\lambda_j^2
        }{\sum_{j\leq D}\lambda_j^2}=0,
    \end{equation}
    then
    \begin{equation}
        \frac{\veps^TA\veps - \mathrm{Tr}(A)}{\sqrt{2}\|A\|_F} \xrightarrow[D \to \infty]{d} \mathcal{N}(0, 1).
    \end{equation}
\end{lemma}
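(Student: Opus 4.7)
\textbf{Proof plan for Lemma~\ref{lem:quadratic-clt}.}

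The plan is to diagonalize $A$ via the spectral theorem, exploit rotational invariance of the standard Gaussian to rewrite $\veps^T A \veps$ as a weighted sum of independent $\chi^2_1$ variables, and then apply a classical CLT for triangular arrays (Lyapunov's condition) whose one analytic requirement will reduce directly to the stated hypothesis on $\max_j \lambda_j^2 / \sum_j \lambda_j^2$.

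First, I would write $A = Q \Lambda Q^T$ with $Q$ orthogonal and $\Lambda = \mathrm{diag}(\lambda_1, \ldots, \lambda_D)$, and set $\veta = Q^T \veps$. Since $\veps \sim \mathcal{N}(\mathbf{0}, \mI_D)$ and $Q$ is orthogonal, $\veta$ is also $\mathcal{N}(\mathbf{0}, \mI_D)$, so its coordinates $\eta_1, \ldots, \eta_D$ are i.i.d.\ standard normals. This gives the key rewrite
\begin{equation}
\veps^T A \veps - \mathrm{Tr}(A) = \sum_{j=1}^D \lambda_j (\eta_j^2 - 1) \eqqcolon \sum_{j=1}^D X_{D,j},
\end{equation}
noting that $\mathrm{Tr}(A) = \sum_j \lambda_j$. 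The summands $X_{D,j}$ are independent with mean zero and variance $\mathrm{Var}(X_{D,j}) = \lambda_j^2 \, \mathrm{Var}(\eta_j^2) = 2\lambda_j^2$, so the total variance is $s_D^2 = 2 \sum_j \lambda_j^2 = 2 \|A\|_F^2$, which explains the normalization in the statement.

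Second, I would invoke Lyapunov's CLT for the triangular array $\{X_{D,j}\}_{j \le D}$ with exponent $\delta = 2$. Since $\eta_j^2 - 1$ has finite fourth moment (some absolute constant $c < \infty$ depending only on the standard Gaussian), we have $\mathbb{E}|X_{D,j}|^4 = c \, \lambda_j^4$, and Lyapunov's ratio becomes
\begin{equation}
\frac{1}{s_D^4} \sum_{j=1}^D \mathbb{E}|X_{D,j}|^4 = \frac{c \sum_j \lambda_j^4}{4\,\bigl(\sum_j \lambda_j^2\bigr)^2}.
\end{equation}
Using the elementary bound $\sum_j \lambda_j^4 \le \bigl(\max_j \lambda_j^2\bigr) \sum_j \lambda_j^2$, this ratio is controlled by $\tfrac{c}{4}\cdot \max_j \lambda_j^2 / \sum_j \lambda_j^2$, which tends to zero by hypothesis. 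Lyapunov's theorem then yields $\sum_j X_{D,j} / s_D \xrightarrow{d} \mathcal{N}(0,1)$, which is exactly the claim.

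The main obstacle, if any, is purely bookkeeping: one must be slightly careful that everything is phrased as a triangular array (the distribution of $X_{D,j}$ depends on $D$ through $\lambda_j = \lambda_j^{(D)}$), so the appropriate version of the CLT is the Lyapunov/Lindeberg form rather than the classical i.i.d.\ statement. Once that is set up, the rotational-invariance reduction makes the fourth moment explicit, and the hypothesized smallness of the leading eigenvalue-squared fraction is precisely what is needed to close Lyapunov's condition.
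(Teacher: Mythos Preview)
Your proposal is correct. The diagonalization step, the rotational-invariance reduction to $\sum_j \lambda_j(\eta_j^2-1)$, the variance computation $s_D^2 = 2\|A\|_F^2$, and the Lyapunov bound via $\sum_j \lambda_j^4 \le (\max_j \lambda_j^2)\sum_j \lambda_j^2$ are all sound, and you are right to flag the triangular-array formulation as the relevant version of the CLT.

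As for comparison: the paper does not supply its own proof of this lemma at all; it simply cites \citet{deJong1987} and uses the result as a black box. de Jong's theorem is considerably more general---it treats quadratic forms $\sum_{i,j} a_{ij}\varepsilon_i\varepsilon_j$ in independent (not necessarily Gaussian) variables, where no spectral decomposition is available and the argument proceeds via moment/cumulant methods. Your route exploits the specific Gaussian assumption in the lemma's hypothesis: rotational invariance lets you diagonalize the problem into an independent sum, after which the classical Lyapunov CLT suffices. This is more elementary and more transparent for the case at hand, at the cost of not extending beyond Gaussian $\veps$; the paper only needs the Gaussian case, so nothing is lost.
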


Let $p_t=\mathcal{N}(\mathbf{\mu},\Sigma)$ for $\Sigma$ satisfying the following conditions. Denoting $\Sigma=LL^T$, and $\Tilde\Sigma=L^{-1}(L^T)^{-1}$ with $\lambda_1, \dots, \lambda_D$ eigenvalues of $\Tilde \Sigma$, we assume
\begin{equation}
\lim_{D \to \infty}\frac{\max_{j\leq D}\lambda_j^2}{\sum_{j\leq D}\lambda_j^2}=0
\end{equation}
Note that for $\Sigma=\sigma_t^2\mI_D$, we have $\Tilde\Sigma=\frac{1}{\sigma_t^2}\mI_D$, $\lambda_k = \frac{1}{\sigma_t^2}$, and all the above conditions becomes $\lim_{D \to \infty}\frac{1}{D}=0$, which of course holds. Then
\begin{equation}
    h(\vx) = \frac{\Delta \log p_t(\vx) + \| \nabla \log p_t(\vx) \|^2}{\sqrt{2}\|\nabla^2 \log p_t(\vx)\|_F} \xrightarrow[D \to \infty]{d} \mathcal{N}(0, 1).
\end{equation}

For $\vx \sim \mathcal{N}(\mathbf{\mu}, \Sigma)$, we can represent $\vx = \mu + L\veps$ for $\Sigma=LL^T$ and $\veps \sim \mathcal{N}(\mathbf{0}, \mI_D)$. In this case, we have
\begin{align*}
    & \nabla \log p_t(\vx)=\Sigma^{-1}(\mu - \vx)=-(L^T)^{-1}\veps \\
    & \nabla^2 \log p_t(\vx) = -\Sigma^{-1} \\
    & \Delta \log p_t(\vx) = -\mathrm{Tr}(\Sigma^{-1}).
\end{align*}
Note that $\| \nabla \log p_t(\vx)\|^2=\veps^T \Tilde{\Sigma}\veps$, where $\Tilde\Sigma = L^{-1}(L^T)^{-1}$. Since $\Sigma^{-1}=(L^T)^{-1}L^{-1}$ and $\mathrm{Tr}(AB)=\mathrm{Tr}(BA)$, we have $\Delta \log p_t(\vx)=-\mathrm{Tr}(\Tilde \Sigma)$ and $\| \Tilde\Sigma \|_F=\| \Sigma^{-1} \|_F=\| \nabla^2 \log p_t(\vx) \|_F$
We can now write
\begin{equation}
    h(\vx) = \frac{\Delta \log p_t(\vx) + \| \nabla \log p_t(\vx) \|^2}{\sqrt{2}\|\nabla^2 \log p_t(\vx)\|_F} = \frac{\veps^T \Tilde{\Sigma}\veps - \mathrm{Tr}(\Tilde \Sigma)}{\sqrt{2}\| \Tilde\Sigma \|_F} \xrightarrow[D \to \infty]{d} \mathcal{N}(0, 1)
\end{equation}
from \autoref{lem:quadratic-clt}.
\subsection{Gaussian Mixture}
Usually, the distributions we are interested in can be represented as $p_0=\frac{1}{K}\sum_{k=1}^K \delta_{\vx_k}$, where $\{\vx_k \}_k \subset \R^D$ is the data set.
We show that in this case \autoref{eq:normality} also holds.
In that case, $p_t=\frac{1}{K}\sum_{k=1}^K \mathcal{N}(\mu_k, \sigma_t^2\mI_D$), where $\mu_k = \alpha_t\vx_k$. We will use the following identity, which holds for any $p(\vx)$:
\begin{equation}
    \Delta \log p(\vx) + \| \nabla \log p(\vx) \| ^2 = \frac{\Delta p(\vx)}{p(\vx)}.
\end{equation}
In the Gaussian mixture case (denoting $p_k = \mathcal{N}(\mu_k,\sigma_t^2\mI_D)$, we have
\begin{equation*}
    \frac{\partial}{\partial x^i} p_t(\vx) = \frac{1}{K}\sum_k p_k(\vx)\frac{\mu_k^i - x^i}{\sigma_t^2} = \frac{1}{K\sigma_t^2}\sum_k p_k(\vx)(\mu_k^i - x^i)
\end{equation*}
and
\begin{equation*}
     \frac{\partial^2}{\partial (x^i)^2}p_t(\vx) = \frac{1}{K\sigma_t^2}\sum_k \frac{\partial}{\partial x^i}p_k(\vx)(\mu_k^i - x^i) - \frac{1}{\sigma_t^2}p_t(\vx) = \frac{1}{K\sigma_t^4}\sum_k p_k(\vx) (\mu_k^i - x^i)^2 - \frac{1}{\sigma_t^2}p_t(\vx).
\end{equation*}
Therefore, we have
\begin{equation}\label{eq:gaussian_mixture_normality}
h(\vx) = \frac{\sigma_t^2 \left( \Delta \log p_t(\vx) + \| \nabla \log p_t(\vx) \|^2 \right)}{\sqrt{2D}} = \frac{\sigma_t^2 \Delta p_t(\vx)}{p_t(\vx)\sqrt{2D}} = \frac{\sum_k w_k(\vx) \| \frac{\vx - \mu_k}{\sigma_t} \|^2 - D}{\sqrt{2D}},
\end{equation}
where $w_k(\vx) \coloneqq \frac{p_k(\vx)}{p(\vx)}$.
In \autoref{thm:gaussian_mixture_asymptotics} we show that $h(\vx) \xrightarrow{d}N(0,1)$.
We additionally verify this hypothesis numerically. Specifically, we set the number of components to $K=128$ and sample $\{ \mu_k\}$ from $\mathcal{N}(\mathbf{0}, \sigma_t^2\mI_D)$. We then sample $N=16384$ samples $\vx_j \sim p_t(\vx)$ and evaluate corresponding values of $h(\vx)$ with \autoref{eq:gaussian_mixture_normality}. We repeat this experiment for three values of $\sigma_t \in \{0.5, 1, 10\}$. To test whether the distribution of $h(\vx)$ approaches $\mathcal{N}(0, 1)$ for larger $D$, we repeat this experiment for $D = 2^m$ for $m=6, 7, \dots, 12$ and evaluate the p-value of a normality test on $h(\vx)$
\footnote{\url{https://docs.scipy.org/doc/scipy/reference/generated/scipy.stats.normaltest.html}}. We see that for $D$ greater than $\approx 1000$, the distribution of $h(\vx)$ is close to $\mathcal{N}(0, 1)$ as evidenced by p-value being greater than the commonly used significance threshold $\alpha=0.05$. Please see \autoref{fig:gaussian_mixture_normality}.
\begin{figure}
    \centering
    \includegraphics[width=0.5\linewidth]{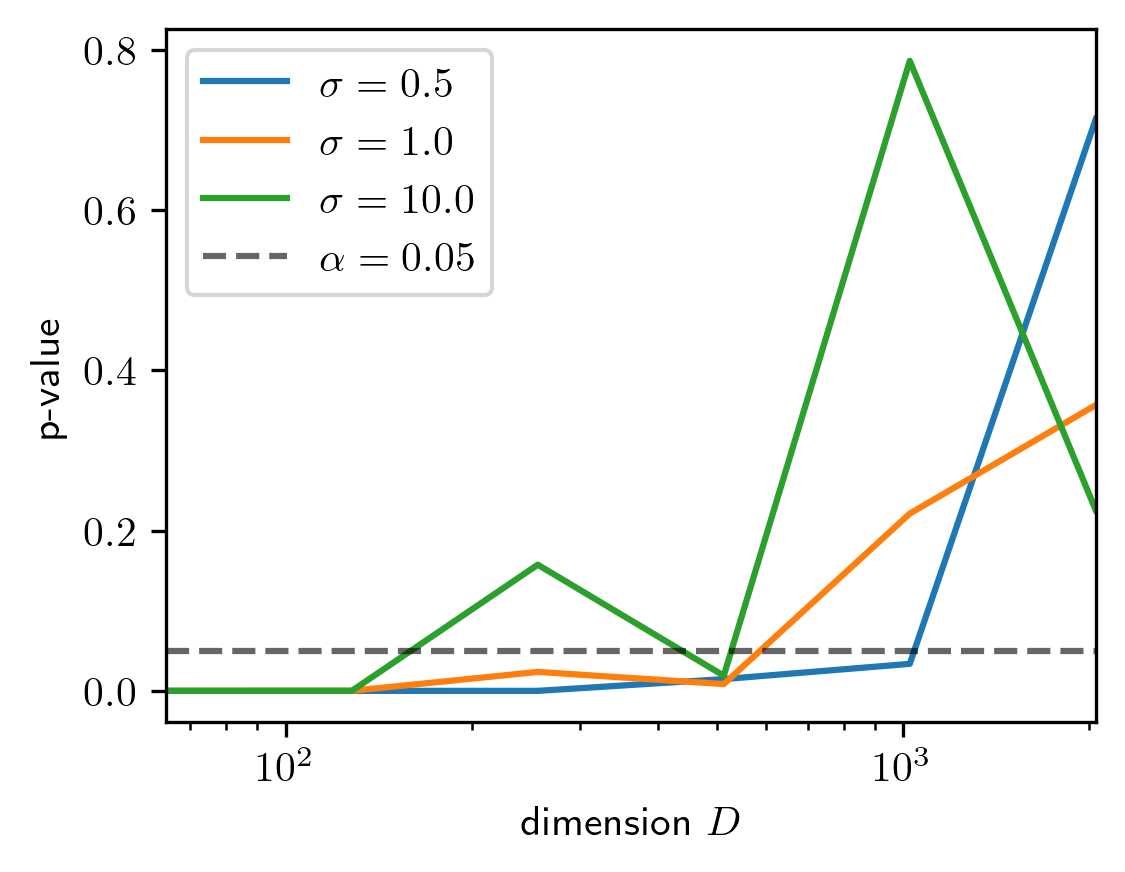}
    \caption{\textbf{$h(\vx)$ approaches $\mathcal{N}(0, 1)$ for larger $D$}.}
    \label{fig:gaussian_mixture_normality}
\end{figure}
\subsection{Image data}
In this section we study $p_0$ being CIFAR-10 image data distribution and $\nabla \log p_t(\vx)$ being approximated with a neural network. Specifically, we uniformly sample different times $t \in (0, T]$ and corresponding noisy samples $\vx_t \sim p_t(\vx)$. Then, we estimate $\nabla \log p_t(\vx)$ using a model and $\Delta \log p_t(\vx) \approx \div \nabla \log p_t(\vx)$ using the Hutchinson's trick. Finally, we plot $\Phi^{-1}(h(\vx_t))$ for $h(\vx_t)$ estimated using \autoref{eq:normality}, where $\Phi$ is the cumulative density function of $\mathcal{N}(0, 1)$. If $h(\vx_t) \sim \mathcal{N}(0, 1)$, then $\Phi^{-1}(h(\vx_t)) \sim \mathcal{U}(0, 1)$ for all $t \in (0, T]$. Indeed, this is precisely the observed behaviour for two different choices of the forward process: $\alpha_t^2 + \sigma_t^2=1$ (VP-SDE) and $\alpha_t + \sigma_t=1$ (CFM) confirming that this finding also holds for high dimensional image data. See \autoref{fig:cifar-normality}.
\begin{figure}
    \centering
    \includegraphics[width=0.75\linewidth]{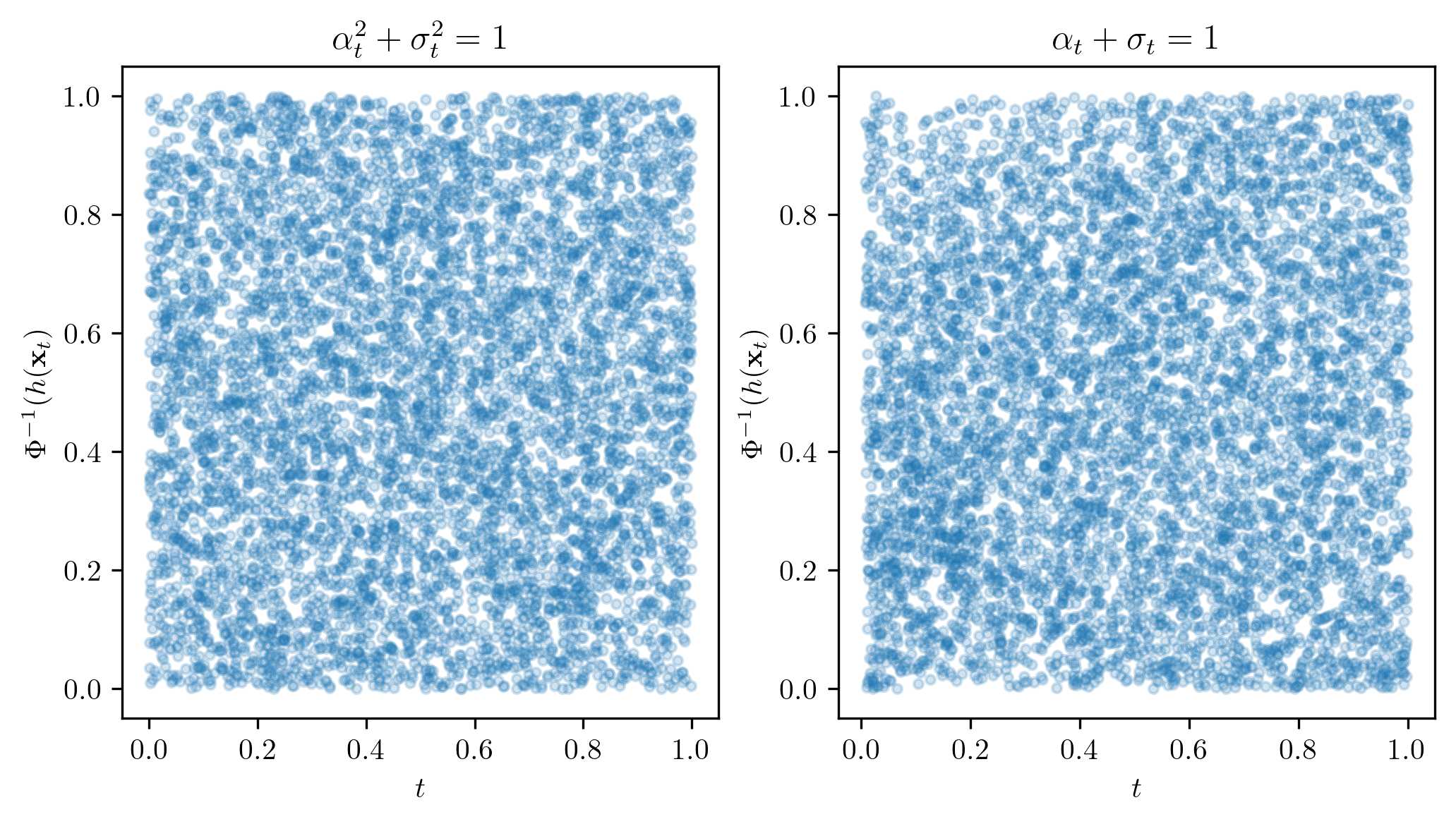}
    \caption{\textbf{$h(\vx_t)$ approximately follows $\mathcal{N}(0, 1)$ for various definitions of the diffusion process.}}
    \label{fig:cifar-normality}
\end{figure}

\section{Stochastic density guidance}\label{app:sold}
In this section, we derive the stochastic density guidance method.
A central tool in this section is Itô's lemma \citep{ito1951formula}, a generalization of the total derivative to stochastic processes.
\begin{lemma}[Itô's Lemma]\label{lem:ito}
    Let $d\vx_t = \mathbf{\mu}(t, \vx_t)dt + \mG(t, \vx_t)d\rW_t$ be a $D$-dimensional Itô process with $\mu : \R \times \R^D \to \R^D$, $\mG : \R \times \R^D \to \R^{D \times D}$ and $\rW$ the Wiener process in $\R^D$. For a smooth function $h : \R \times \R^D \to \R$, it holds that $h(t, \vx_t)$ is also an Itô process with the following dynamics
    \begin{equation}
    \begin{split}
        d h(t, \vx_t) =& \Big(\frac{\partial h}{\partial t}(t, \vx_t) + \mathbf{\mu}(t, \vx_t)^T \frac{\partial h}{\partial \vx}(t, \vx_t) + \frac{1}{2}\mathrm{Tr}\Big(\mG(t, \vx_t)^T \nabla^2 h(t, \vx_t) \mG(t, \vx_t) \Big) \Big)dt \\
        & + \frac{\partial h}{\partial \vx}(t, \vx_t)^T \mG(t, \vx_t)d\rW_t,
        \end{split}
    \end{equation}
    where $\nabla^2 h$ is the Hessian matrix of $h$ w.r.t $\vx$. In the case when $\mG(t, \vx)=\varphi(t)\mI_D$, the dynamics simplify to
    \begin{equation}
        d h(t, \vx_t) = \Big(\frac{\partial h}{\partial t}(t, \vx_t) + \mathbf{\mu}(t, \vx_t)^T \frac{\partial h}{\partial \vx}(t, \vx_t) + \frac{1}{2}\varphi^2(t)\Delta_\vx h(t, \vx_t) \Big)ds  + \varphi(t)\frac{\partial h}{\partial \vx}(t, \vx_t)^T d\rW_t
    \end{equation}
\end{lemma}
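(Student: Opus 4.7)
The plan is to establish the lemma by the standard route of second-order Taylor expansion together with the quadratic variation rules for Brownian motion, and I would organize it in three stages.

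First, fix $\Delta > 0$ and choose a partition $t = t_0 < t_1 < \cdots < t_n = t+\Delta$. I would write the telescoping identity
\begin{equation*}
h(t+\Delta, \vx_{t+\Delta}) - h(t, \vx_t) = \sum_{k=0}^{n-1}\bigl(h(t_{k+1}, \vx_{t_{k+1}}) - h(t_k, \vx_{t_k})\bigr),
\end{equation*}
and Taylor-expand each summand to first order in $t$ and second order in $\vx$, with remainder of order $(\Delta t_k)^2 + \|\Delta \vx_{t_k}\|^3$. Smoothness of $h$ and local boundedness of $\mathbf{\mu}$ and $\mG$ on compact intervals ensure this remainder vanishes in $L^2$ as the mesh tends to zero.

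Second, I would substitute the Itô dynamics $\Delta \vx_{t_k} \approx \mathbf{\mu}(t_k,\vx_{t_k})\Delta t_k + \mG(t_k,\vx_{t_k})\Delta \rW_{t_k}$ into the expansion and apply the Itô multiplication rules $(\Delta t_k)^2 \to 0$, $\Delta t_k\,\Delta \rW_{t_k}^i \to 0$, and $\Delta \rW_{t_k}^i\,\Delta \rW_{t_k}^j \to \delta_{ij}\,\Delta t_k$ in the mean-square sense. The linear-in-$\vx$ contribution then splits cleanly into a drift piece $\mathbf{\mu}^T \nabla h\,\Delta t_k$ and a martingale piece $\nabla h^T \mG\,\Delta \rW_{t_k}$. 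The quadratic form $\tfrac{1}{2}\,\Delta \vx_{t_k}^T \nabla^2 h\,\Delta \vx_{t_k}$, after discarding the $o(\Delta t_k)$ cross-terms, yields $\tfrac{1}{2}\,\Tr\bigl(\mG^T \nabla^2 h\,\mG\bigr)\,\Delta t_k$, since only the $(\mG d\rW)(\mG d\rW)^T$ block survives and its trace against $\nabla^2 h$ equals $\Tr(\mG^T \nabla^2 h\, \mG)$.

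Third, I would pass to the mesh-zero limit, converting the sums into Lebesgue and Itô integrals and assembling
\begin{equation*}
dh(t,\vx_t) = \Bigl(\partial_t h + \mathbf{\mu}^T\nabla h + \tfrac{1}{2}\Tr(\mG^T\nabla^2 h\,\mG)\Bigr)dt + \nabla h^T \mG\, d\rW_t,
\end{equation*}
which is exactly the claimed identity. The simplification for $\mG = \varphi(t)\mI_D$ is then immediate, since $\Tr(\mG^T\nabla^2 h\,\mG) = \varphi^2(t)\,\Tr(\nabla^2 h) = \varphi^2(t)\,\Delta h$. The main technical obstacle is the rigorous $L^2$ control of the quadratic-variation sum; this rests on the elementary identity $\mathbb{E}\bigl[(\Delta \rW^i \Delta \rW^j - \delta_{ij}\Delta t)^2\bigr] = O((\Delta t)^2)$ together with a standard localization argument to handle the case where $\mathbf{\mu}$, $\mG$, and the derivatives of $h$ are merely locally bounded. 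For the neural-network-based vector fields used throughout this paper, all coefficients are globally smooth on the finite time intervals considered, so these integrability issues do not obstruct the argument.
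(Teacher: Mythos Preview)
Your sketch is the standard textbook derivation of It\^o's lemma and is correct. However, the paper does not actually prove this statement: it is stated as a classical result with a citation to It\^o (1951) and used without proof as a tool in deriving the reverse-time version (Corollary~\ref{cor:rev-ito}). So there is nothing to compare against; your proposal supplies a proof where the paper simply invokes the literature.
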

In contrast to the total derivative for deterministic processes, the forward and reverse-time dynamics are not the same for stochastic processes.
We now prove the reverse-time Itô's lemma, which will be useful in our derivation as our convention is that sampling happens backward in time, from $t=T$ to $t=0$.
\begin{corollary}[Reverse-time Itô's lemma]\label{cor:rev-ito}
Let $d\vx_t = \mathbf{\mu}(t, \vx_t)dt + \mG(t, \vx_t)d\overline{\rW}_t$, $dt<0$,  $\overline{\rW}$ the Wiener process running backwards in time from $t=T$ to $t=0$ and $\mu$ and $G$ are as in \autoref{lem:ito}. Then
    \begin{equation}
    \begin{split}
        d h(t, \vx_t) =& \Big(\frac{\partial h}{\partial t}(t, \vx_t) + \mathbf{\mu}(t, \vx_t)^T \frac{\partial h}{\partial \vx}(t, \vx_t) \textcolor[HTML]{318CE7}{-} \frac{1}{2}\mathrm{Tr}\left(\mG(t, \vx_t)^T \nabla^2 h(t, \vx_t) \mG(t, \vx_t) \right) \Big)dt \\
        & + \frac{\partial h}{\partial \vx}(t, \vx_t)^T \mG(t, \vx_t)\textcolor[HTML]{318CE7}{d\overline{\rW}_t},
        \end{split}
    \end{equation}
with the modifications coming from time-reversal highlighted in \textcolor[HTML]{318CE7}{blue}.    
\end{corollary}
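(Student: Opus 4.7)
My strategy is to reduce the reverse-time statement to the standard forward-time Itô's lemma (\autoref{lem:ito}) via an explicit time-reversal substitution. I would introduce a new time variable $s = T - t$ so that $s$ runs forward from $0$ to $T$, and define the reversed processes $\tilde{\vx}_s := \vx_{T-s}$ and $\tilde{h}(s, \vy) := h(T-s, \vy)$. To convert $\overline{\rW}$ into a genuine forward Brownian motion I would set $\tilde{\rW}_s := \overline{\rW}_T - \overline{\rW}_{T-s}$, a standard Wiener process with $\tilde{\rW}_0 = 0$.

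First I would verify that $\tilde{\vx}_s$ satisfies a standard Itô SDE in $s$. Using $dt = -ds$ together with the identity $d\overline{\rW}_t = -d\tilde{\rW}_s$ (an immediate consequence of the definition of $\tilde{\rW}$), the hypothesised reverse-time equation $d\vx_t = \mu(t, \vx_t)\,dt + \mG(t, \vx_t)\,d\overline{\rW}_t$ transforms into $d\tilde{\vx}_s = -\mu(T-s, \tilde{\vx}_s)\,ds - \mG(T-s, \tilde{\vx}_s)\,d\tilde{\rW}_s$, a bona fide forward SDE with drift $\tilde{\mu} := -\mu\circ (T-\cdot)$ and diffusion $\tilde{\mG} := -\mG\circ(T-\cdot)$.

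Next I would invoke \autoref{lem:ito} applied to $\tilde{h}(s, \tilde{\vx}_s)$. The partial derivatives transform by the chain rule as $\partial_s \tilde{h} = -\partial_t h$, $\partial_\vx \tilde{h} = \partial_\vx h$, and $\nabla^2 \tilde{h} = \nabla^2 h$ (all evaluated at $t = T-s$). Crucially, the quadratic form $\mathrm{Tr}(\tilde{\mG}^T \nabla^2 \tilde{h}\,\tilde{\mG})$ equals $\mathrm{Tr}(\mG^T \nabla^2 h\,\mG)$, because the two sign flips from $\tilde{\mG}$ cancel inside the quadratic expression. After substitution and undoing the reversal via $ds = -dt$ and $d\tilde{\rW}_s = -d\overline{\rW}_t$, the claimed formula drops out, with the minus sign now standing in front of the trace term.

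The entire argument is sign bookkeeping, and the one delicate point I expect to be the main obstacle is ensuring that the second-order term picks up exactly one sign flip. The reason is structural: the quadratic variation is insensitive to the direction of time, so it is the conversion $ds = -dt$ that flips $\tfrac{1}{2}\mathrm{Tr}(\mG^T\nabla^2 h\,\mG)\,ds$ into $-\tfrac{1}{2}\mathrm{Tr}(\mG^T\nabla^2 h\,\mG)\,dt$, with no cancellation coming from the sign flips absorbed into $\tilde{\mG}$. A quick sanity check is the heuristic rule $(d\overline{\rW}_t)^2 = |dt| = -dt$ when $dt<0$, which reproduces the minus sign directly; the time-reversal route above is preferable because it keeps the stochastic integral defined with respect to a genuine forward-adapted filtration rather than relying on formal infinitesimal manipulation.
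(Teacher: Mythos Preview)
Your proposal is correct and follows essentially the same time-reversal substitution as the paper: set $s=T-t$, rewrite the dynamics as a forward SDE in $s$, apply the standard It\^o lemma to $\tilde h(s,\vx)=h(T-s,\vx)$, and then revert to $t$. The only cosmetic difference is that you carry an explicit minus sign into the diffusion coefficient (writing $\tilde{\mG}=-\mG$ via $d\overline{\rW}_t=-d\tilde{\rW}_s$) whereas the paper absorbs it directly into the forward Wiener increment and writes $+\mG\,d\rW_s$; since $\mG$ enters the second-order term quadratically, both conventions give the same result.
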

\begin{proof}
Let $s=T-t$. Since $ds=-dt$, the dynamics of $\vx$ can be equivalently written as \citep{dockhorn2022scorebased}:
\begin{equation}
d\vx_s = -\mathbf{\mu}(T-s, \vx_s)ds + \mG(T-s, \vx_s)d\rW_s
\end{equation}
for the standard Wiener process $\rW$ and $ds>0$. Now let $\Tilde h(s, \vx) \coloneqq h(T-s, \vx)$.
Applying Itô's lemma to $\Tilde h$ yields
    \begin{equation}
    \begin{split}
        d \Tilde h(s, \vx_s) =& \Big(\frac{\partial \Tilde h}{\partial s}(s, \vx_s) - \mathbf{\mu}(T - s, \vx_s)^T \frac{\partial \Tilde h}{\partial \vx}(s, \vx_s) + \frac{1}{2}\mathrm{Tr}\left(\mG(T - s, \vx_s)^T \nabla^2 \Tilde h(s, \vx_s) \mG(T - s, \vx_s) \right) \Big)ds \\
        & + \frac{\partial \Tilde h}{\partial \vx}(s, \vx_s)^T \mG(T - s, \vx_s)d\rW_s \\
        = & \Big(-\frac{\partial  h}{\partial t}(T - s, \vx_s) - \mathbf{\mu}(T - s, \vx_s)^T \frac{\partial h}{\partial \vx}(T - s, \vx_s) + \frac{1}{2}\mathrm{Tr}\left(\mG(T - s, \vx_s)^T \nabla^2 h(T - s, \vx_s) \mG(T - s, \vx_s) \right) \Big)ds \\
        & + \frac{\partial h}{\partial \vx}(T - s, \vx_s)^T \mG(T - s, \vx_s)d\rW_s.
        \end{split}
    \end{equation}
The claim follows from switching back to running backward in time $t \gets T-s$
\end{proof}
Recall \autoref{eq:fm-sde} which describes stochastic sampling from a CNF model
\begin{equation}\tag{\ref{eq:fm-sde}}
    d\vx_t = \Big(\vu_t(\vx_t) - \frac{1}{2}\varphi^2(t)\nabla \log p_t(\vx_t)\Big) dt + \varphi(t) d\overline{\rW}_t
\end{equation}
for which, we know how the log-density evolves (\autoref{eq:log-density-sde}):
\begin{equation}\tag{\ref{eq:log-density-sde}}
d \log p_t(\vx_t) = \left( -\div \vu_t(\vx) - \frac{1}{2}\varphi^2(t) \left(\Delta \log p_t(\vx)
     + \frac{1}{2}\varphi^2(t) \| \nabla \log p_t(\vx) \|^2\right)\right)dt + \varphi(t)\nabla \log p_t(\vx_t)^Td\overline{\rW}_t.
\end{equation}
We now ask: how can modify the stochastic dynamics (\autoref{eq:fm-sde}) so that
\begin{equation}\label{eq:b-sde-cond}
d \log p_t(\vx_t) = b_t(\vx_t)
\end{equation}
for some given $b_t$.
Suppose that $\vx$ is following
\begin{equation}\label{eq:stochastic-dynamics}
d\vx_t = \Tilde\vu_t( \vx_t)dt + \mG(t, \vx_t)d\overline{\rW}_t
\end{equation}
for some $\Tilde \vu$ and $\mG$.
To evaluate the change log-density we will use \autoref{cor:rev-ito} applied to $h(t, \vx) = \log p_t(\vx)$:
    \begin{equation}\label{eq:stochastic-dlogp}
    \begin{split}
        d \log p_t(\vx_t) =& \Big(\frac{\partial \log p_t}{\partial t}(\vx_t) + \Tilde\vu_t( \vx_t)^T \nabla \log p_t(\vx_t) -  \frac{1}{2}\mathrm{Tr}\left(\mG(t, \vx_t)^T \nabla^2 \log p_t(\vx_t) \mG(t, \vx_t) \right) \Big)dt \\
        & + \nabla \log p_t(\vx_t)^T \mG(t, \vx_t)d\overline{\rW}_t.
        \end{split}
    \end{equation}
Since we assumed that $d \log p_t(\vx_t)=b_t(\vx_t)dt$, the stochastic component of $d \log p_t(\vx_t)$ must vanish, i.e. $\nabla \log p_t(\vx)^T\mG(t, \vx) = \mathbf{0}$.
There are many $\mG$ that satisfy this condition including a trivial $\mG \equiv \mathbf{0}$.
However, standard stochastic sampling (\autoref{eq:fm-sde}) assumes isotropic noise, i.e. $\mG(t, \vx) = \varphi(t)\mI_D$ and we want to match that as closely as possible.
An optimal solution (\autoref{lem:proj-opt}) to this problem is the projection $\mG(t, \vx) = \varphi(t)\mP_t(\vx)$ for:
\begin{equation}
\mP_t(\vx)= \mI_D - \left(\frac{\nabla \log p_t(\vx)}{\|\nabla \log p_t(\vx)\|}\right)\left(\frac{\nabla \log p_t(\vx)}{\|\nabla \log p_t(\vx)\|}\right)^T.
\end{equation}
Clearly $\mP_t(\vx)^T=\mP_t(\vx)$. Furthermore, since $\mP_t$ is a projection matrix, it also holds that $\mP_t(\vx)\mP_t(\vx)=\mP_t(\vx)$.
Now we can plug this into \autoref{eq:stochastic-dlogp} and we obtain
\begin{equation}
d \log p_t(\vx_t) = \Big(\frac{\partial \log p_t}{\partial t}(\vx_t) + \Tilde\vu_t(\vx_t)^T \nabla \log p_t(\vx_t) -  \frac{1}{2}\varphi^2(t)\mathrm{Tr}\left(\mP_t(\vx_t)^T \nabla^2 \log p_t(\vx_t) \mP_t(\vx_t) \right) \Big)dt.
\end{equation}
Using the symmetry and idempotency of $P_t$, and properties of the trace (linearity and $\mathrm{Tr}(AB) = \mathrm{Tr}(BA)$), we have
\begin{align}
\mathrm{Tr}\left(\mP_t(\vx)^T \nabla^2 \log p_t(\vx) \mP_t(\vx) \right) &= \mathrm{Tr}\left(\mP_t(\vx) \nabla^2 \log p_t(\vx)\right) \\
& = \mathrm{Tr}\left( \nabla^2 \log p_t(\vx) \right) - \frac{1}{\| \nabla \log p_t(\vx) \|^2}\mathrm{Tr}\left( \nabla \log p_t(\vx)  \nabla\log p_t(\vx)^T \nabla^2 \log p_t(\vx) \right) \\
&= \Delta \log p_t(\vx) - \frac{\nabla \log p_t(\vx)^T \nabla^2 \log p_t(\vx) \nabla \log p_t(\vx)}{\| \nabla \log p_t(\vx) \|^2} \\
& = \Delta \log p_t(\vx) - \mathcal{R}(\nabla^2 \log p_t(\vx), \nabla \log p_t(\vx)),
\end{align}
where
\begin{equation}
\mathcal{R}(\nabla^2 \log p_t(\vx), \nabla \log p_t(\vx)) = \frac{\nabla \log p_t(\vx)^T \nabla^2 \log p_t(\vx) \nabla \log p_t(\vx)}{\| \nabla \log p_t(\vx) \|^2}
\end{equation}
represents the Rayleigh quotient of the Hessian evaluated at $\nabla \log p_t(x)$.
Furthermore, from \autoref{eq:cont-eq}, we have
\begin{equation}
\frac{\partial \log p_t}{\partial t}(\vx) = -\div \vu_t(\vx) - \nabla \log p_t(\vx)^T \vu_t(\vx).
\end{equation}
Combining these, we get
\begin{equation}\label{eq:mu-constraint}
\begin{split}
b_t(\vx_t)=\frac{d \log p_t(\vx_t)}{dt} =&
  -\div \vu_t(\vx_t) + \nabla \log p_t(\vx_t)^T (\Tilde \vu_t(\vx_t) - \vu_t(\vx_t)) \\
  & -  \frac{1}{2}\varphi^2(t)\left( \Delta \log p_t(\vx) - \mathcal{R}(\nabla^2 \log p_t(\vx), \nabla \log p_t(\vx))\right).
 \end{split}
\end{equation}
Any $\Tilde\vu_t(\vx)$ satisfying \autoref{eq:mu-constraint} guarantees the desired evolution of log-density.
However, we wish to minimize the discrepancy from the new drift $\Tilde\vu_t(\vx_t)$ and the one from \autoref{eq:fm-sde}, which guarantees sampling from the correct distribution: $\vu_t(\vx_t) - \frac{1}{2}\varphi^2(t)\nabla \log p_t(\vx_t)$.
Therefore, we solve the constrained optimization problem:
\begin{equation}
\begin{split}
    &\min_{\Tilde \vu \in \R^D}\frac{1}{2}\| \Tilde \vu - \vu_t(\vx_t) + \frac{1}{2}\varphi^2(t)\nabla \log p_t(\vx_t)\|^2 \\
    \text{s.t.} \quad & \Tilde \vu \text{ is a solution of \autoref{eq:mu-constraint}}.
\end{split}
\end{equation}
This is a problem setting discussed and solved in \cref{app:constr} with
\begin{equation}
\begin{cases}
\vx &= \Tilde \vu \\
\vy & = \vu_t(\vx_t) - \frac{1}{2}\varphi^2(t)\nabla \log p_t(\vx_t) \\
\vv & = \nabla \log p_t(\vx_t) \\
a & = b_t(\vx_t) + \div \vu_t(\vx_t) + \vu_t(\vx_t)^T\nabla \log p_t(\vx_t) + \frac{1}{2}\varphi^2(t)\left( \Delta \log p_t(\vx) - \mathcal{R}(\nabla^2 \log p_t(\vx), \nabla \log p_t(\vx)) \right)
\end{cases}
\end{equation}
After substituting 
\begin{align}
    \Tilde\vu_t(\vx_t) &= \vy + \frac{a - \vv^T\vy}{\|\vv\|^2}\vv  \\
    &= \vu_t(\vx_t) - \frac{1}{2}\varphi^2(t)\nabla \log p_t(\vx_t) \\
    &\quad + \frac{b_t(\vx_t) + \div \vu_t(\vx_t) + \color{red}\cancel{\color{black}\vu_t(\vx_t)^T\nabla \log p_t(\vx_t)}\color{black} -\nabla \log p_t(\vx_t)^T (\color{red}\cancel{\color{black}\vu_t(\vx_t)}\color{black}  - \frac{1}{2}\varphi^2(t)\nabla \log p_t(\vx_t)) }{\|\nabla \log p_t(\vx_t)\|^2}\nabla \log p_t(\vx_t) \\
    &\quad + \frac{1}{2}\varphi^2(t)\frac{\Delta \log p_t(\vx) - \mathcal{R}(\nabla^2 \log p_t(\vx), \nabla \log p_t(\vx))}{\|\nabla \log p_t(\vx_t)\|^2}\nabla \log p_t(\vx_t) \\
    &= \vu_t(\vx_t) - \frac{1}{2}\varphi^2(t)\nabla \log p_t(\vx_t) + \frac{b_t(\vx_t) + \div \vu_t(\vx_t) + \frac{1}{2}\varphi^2(t)  \|\nabla \log p_t(\vx_t)\|^2 }{\|\nabla \log p_t(\vx_t)\|^2}\nabla \log p_t(\vx_t) \\
    &\quad + \frac{1}{2}\varphi^2(t)\frac{\Delta \log p_t(\vx) - \mathcal{R}(\nabla^2 \log p_t(\vx), \nabla \log p_t(\vx)) }{\|\nabla \log p_t(\vx_t)\|^2}\nabla \log p_t(\vx_t) \\
    &= \vu_t(\vx_t) - \color{red}\cancel{\color{black}\frac{1}{2}\varphi^2(t)\nabla \log p_t(\vx_t)}\color{black} + \frac{b_t(\vx_t) + \div \vu_t(\vx_t) }{\|\nabla \log p_t(\vx_t)\|^2}\nabla \log p_t(\vx_t) +\color{red}\cancel{\color{black}\frac{1}{2}\varphi^2(t)\nabla \log p_t(\vx_t)}\color{black}  \\
    &\quad + \frac{1}{2}\varphi^2(t)\frac{\Delta \log p_t(\vx) - \mathcal{R}(\nabla^2 \log p_t(\vx), \nabla \log p_t(\vx)) }{\|\nabla \log p_t(\vx_t)\|^2}\nabla \log p_t(\vx_t) \\
    &= \vu_t(\vx_t) + \frac{b_t(\vx_t) + \div \vu_t(\vx_t) + \frac{1}{2}\varphi^2(t) \left( \Delta \log p_t(\vx) - \mathcal{R}(\nabla^2 \log p_t(\vx), \nabla \log p_t(\vx)) \right)}{\|\nabla \log p_t(\vx_t)\|^2}\nabla \log p_t(\vx_t).
\end{align}
The solution is 
\begin{equation}\label{eq:general-sde-steering}
\Tilde\vu_t(\vx_t) = \vu_t(\vx_t) + \frac{b_t(\vx_t) + \div \vu_t(\vx_t) + \frac{1}{2}\varphi^2(t) \left( \Delta \log p_t(\vx) - \mathcal{R}(\nabla^2 \log p_t(\vx), \nabla \log p_t(\vx)) \right)}{\|\nabla \log p_t(\vx_t)\|^2}\nabla \log p_t(\vx_t),
\end{equation}
which exactly matches \autoref{eq:steering_general_ode} when $\varphi \equiv 0$ as expected.
Now suppose that $\vu_t = \vu_t^{\textsc{pf-ode}}$ and $b$ is defined as in \autoref{eq:b-quantile}.
\begin{equation}\tag{\ref{eq:b-quantile}}
b^q_t(\vx) = -\div \vu_t(\vx) - \frac{1}{2}g^2(t) \frac{\sqrt{2D}}{\sigma_t^2} \Phi^{-1}(q).
\end{equation}
Then the drift becomes
\begin{equation}
\Tilde\vu_t(\vx) = \vu^\textsc{dg-ode}_t(\vx) + \frac{1}{2}\varphi^2(t) \frac{\Delta \log p_t(\vx) - \mathcal{R}(\nabla^2 \log p_t(\vx), \nabla \log p_t(\vx))}{\|\nabla \log p_t(\vx_t)\|^2}\nabla \log p_t(\vx_t).
\end{equation}
\paragraph{Practical approximation} The Laplacian, $\Delta \log p_t(\vx)$, is given by the trace of the Hessian $\nabla^2 \log p_t(\vx)$, which corresponds to the sum of its eigenvalues. In contrast, 
\begin{equation*}
\mathcal{R}(\nabla^2 \log p_t(\vx), \nabla \log p_t(\vx)) = \frac{\nabla \log p_t(\vx)^T \nabla^2 \log p_t(\vx) \nabla \log p_t(\vx)}{\| \nabla \log p_t(\vx) \|^2}
\end{equation*}
represents the Rayleigh quotient of the Hessian evaluated at $\nabla \log p_t(x)$. This quantity is bounded in absolute value by the largest absolute eigenvalue of the Hessian, i.e., its spectral norm. Intuitively, when the eigenvalues of the Hessian are relatively uniform—such as when it is close to a scaled identity matrix—the Laplacian scales linearly with the dimension, whereas the Rayleigh quotient remains bounded by a constant. This suggests that in high-dimensional settings, the Laplacian dominates.

Empirically, we verified this intuition by estimating both quantities on real data.
Specifically, we used a VP-SDE model trained on CIFAR-10 ($32 \times 32$ resolution, $D=3072$) \citep{karczewski2024diffusion} and a VE-SDE model trained on ImageNet ($64 \times 64$ resolution, $D=12288$) \citep{karras2022elucidating} and sampled uniformly values of $t \in [0, T]$ and corresponding $\vx_t \sim p_t$ (used 8192 and 16384 samples respectively) and found that the ratio was negligibly small in practice
\begin{equation}\label{eq:rayleigh-empirical}
\left|\frac{\mathcal{R}(\nabla^2 \log p_t(\vx_t), \nabla \log p_t(\vx_t))}{\Delta \log p_t(\vx_t)}\right| \approx \begin{cases}\text{0.0003 \textcolor{gray}{± 0.00006}} & \text{ for CIFAR-10}\\
\text{0.00006 \textcolor{gray}{± 0.00003}} & \text{ for ImageNet64}
\end{cases}
\end{equation}
This confirms that, in practice, $\mathcal{R}(\nabla^2 \log p_t(\vx), \nabla \log p_t(\vx))$ is negligible compared to $\Delta \log p_t(\vx)$.
Therefore, in practice, we use
\begin{equation}
\vu^\textsc{dg-sde}_t(\vx) \coloneqq \vu^\textsc{dg-ode}_t(\vx) + \underbrace{\frac{1}{2}\varphi^2(t)\frac{\Delta \log p_t(\vx)}{\| \nabla \log p_t(\vx) \|^2}\nabla \log p_t(\vx)}_{\text{correction for added stochasticity}}.
\end{equation}

\section{Explicit quantile matching with stochastic sampling}\label{app:stochastic-eqm}
\begin{figure}
    \centering
    \includegraphics[width=1\linewidth]{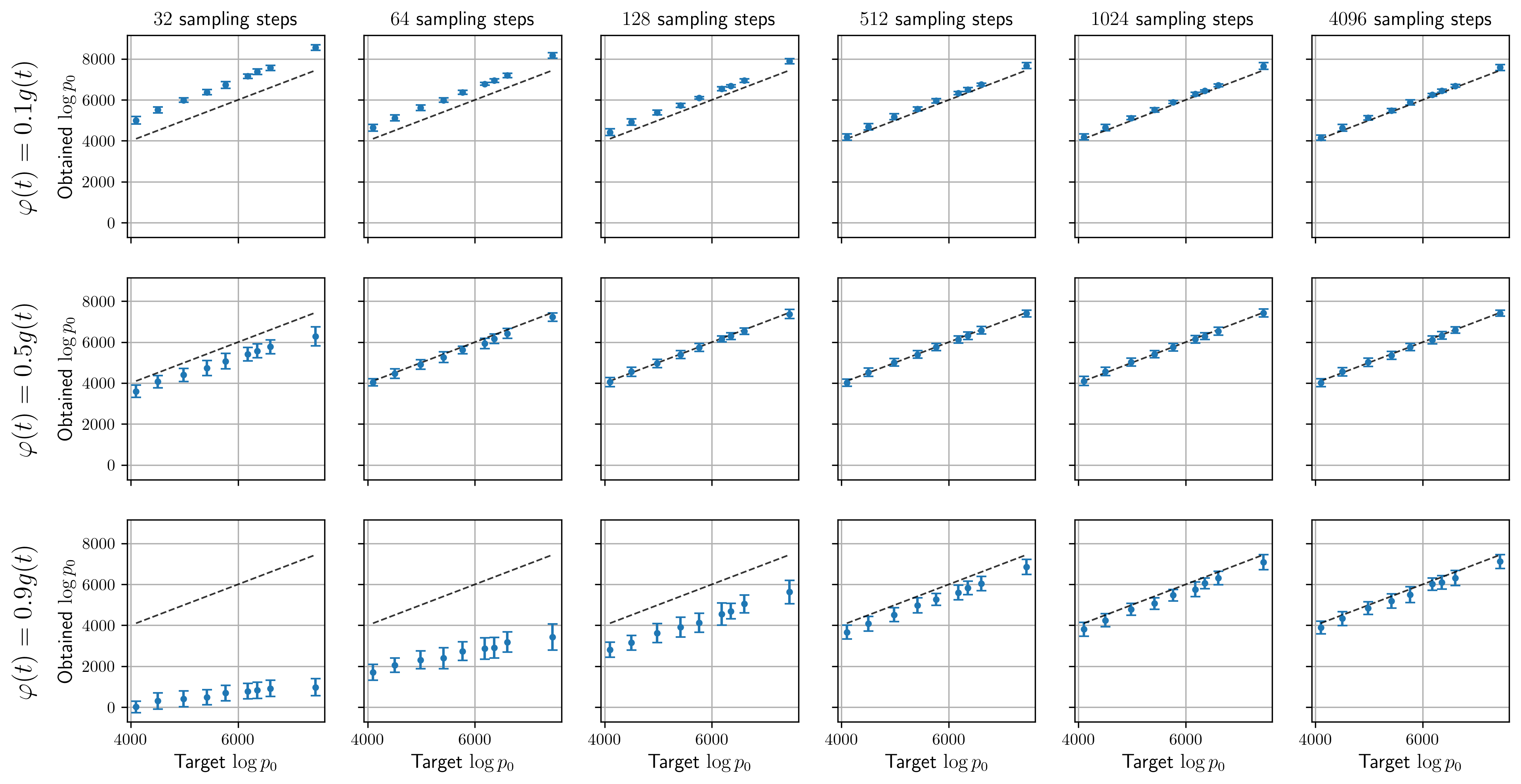}
    \caption{\textbf{Explicit Quantile Matching obtains exact likelihoods even for stochastic sampling.} Top row: small amount of noise; Middle row: medium amount of noise; Bottom row: large amount of noise. The higher the amount of noise in sampling, the more steps need to be taken for the difference between desired $\log p_0$ and obtained $\log p_0$ to go to zero.}
    \label{fig:stochastic-eqm}
\end{figure}
In this section, we repeat the experiment from \autoref{app:eqm}, where we define the desired log-density evolution $b_t(\vx)=\frac{d}{dt}\phi_t$, where $\phi_t$ is the empirical quantile function (see \autoref{sec:eqm}). However, instead of density guidance, we perform stochastic density guidance, i.e. we sample with the density guided SDE:
\begin{equation}
    d\xt = \Tilde\vu_t(\vx_t)dt + \varphi(t)\mP_t(\xt)d\overline{\rW}_t,
\end{equation}
where $\Tilde\vu_t$ is defined in \autoref{eq:general-sde-steering}, with the exception that we set $\mathcal{R}(\nabla^2 \log p_t(\vx), \nabla \log p_t(\vx))$ to zero as explained in \autoref{eq:rayleigh-empirical}. We experimented with different definitions of $\varphi$, which controls the strength of the noise injection, specifically, we tested $\varphi(t)=rg(t)$ for $r=[0.1, 0.5, 0.9]$, where $g$ is the diffusion strength of the forward process \autoref{eq:sde}. As expected, as the amount of noise increases, the required number of steps to take to achieve exact likelihoods increases. Please see \cref{fig:stochastic-eqm}.

\section{JAX implementation of Score Alignment verification}\label{app:sa-jax}
\begin{lstlisting}[language=Python, caption=JAX Implementation of Score Alignment Verification, label=lst:jax_score_alignment]
import jax
import jax.random as jr
import jax.numpy as jnp

def aug_drift(u, x, v, t, key):
    model_key, eps_key = jr.split(key, 2)
    eps = jr.rademacher(eps_key, (x.size,), dtype=jnp.float32)
    def u(x_):
        return u(t, x_.reshape(x.shape), key=model_key).flatten()
    def du_dv(x_):
        u_pred, du_dv_pred = jax.jvp(u, (x_,), (v,))
        return u_pred.reshape(x.shape), du_dv_pred
    return du_dv(x.flatten())

def aug_drift_w_omega(u, x, v, t, key):
    model_key, eps_key = jr.split(key, 2)
    eps = jr.rademacher(eps_key, (x.size,), dtype=jnp.float32)
    def u(x_):
        return u(t, x_.reshape(x.shape), key=model_key).flatten()
    def du_dv(x_):
        u_pred, du_dv_pred = jax.jvp(u, (x_,), (v,))
        return du_dv_pred, u_pred.reshape(x.shape)
    def div_du_dv(x_):
        du_dv_pred, du_dv_eps, u_pred = jax.jvp(du_dv, (x_,), (eps,), has_aux=True)
        return u_pred, du_dv_pred, -jnp.sum(eps * du_dv_eps)
    return div_du_dv(x.flatten())

def score_alignment_verification(u, x_T, v_T, T, dt, key, eps=1e-2, use_omega=False):
    t = T
    x = x_T
    v = v_T
    omega = jnp.sum(v_T **2) if use_omega else None
    while t > eps:
        key, subkey = jr.split(key)
        if use_omega:
            dx, dv, domega = aug_drift_w_omega(u, x, v, t, subkey)
            omega -= dt * domega
        else:
            dx, dv = aug_drift(u, x, v, t, subkey)
        x -= dt * dx
        v -= dt * dv
        t -= dt
    if use_omega:
        return omega
    else:
        return jnp.dot(v, score_fn(eps, x, key)) # Assuming score_fn is known
\end{lstlisting}

\section{Quantitative analysis of prior and density guidance}\label{app:quant}
\begin{figure}
    \centering
    \includegraphics[width=1\linewidth]{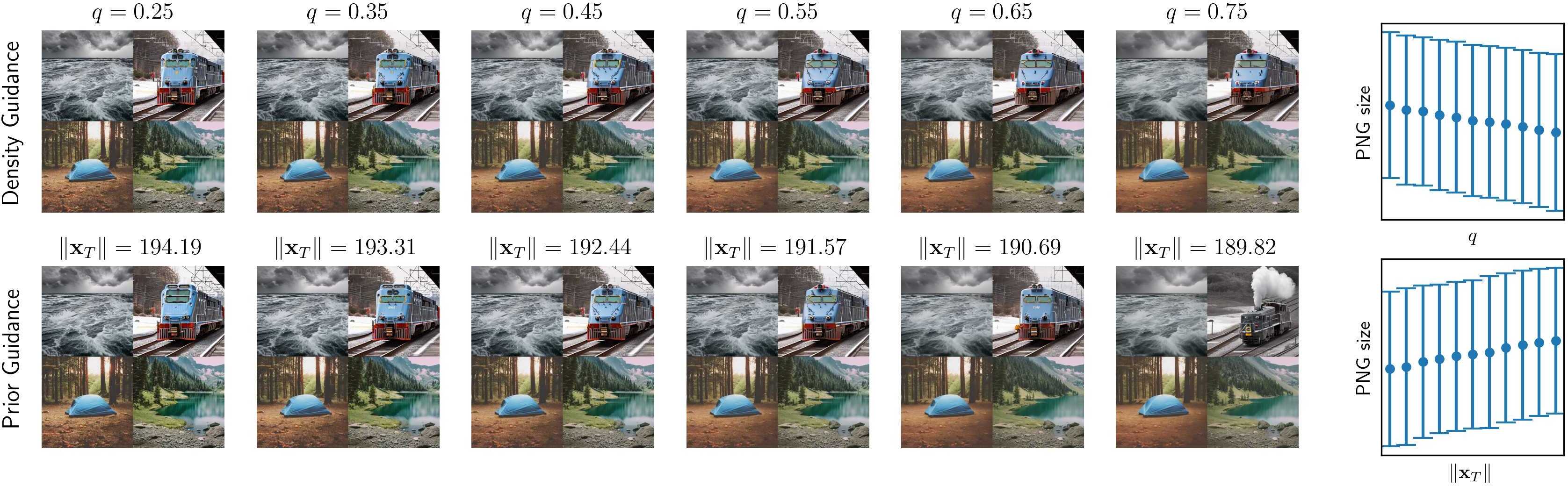}
    \caption{\textbf{Stable Diffusion v2.1 samples.} PG and DG can monotonically control the amount of detail as measured by PNG file size.}
    \label{fig:more-sd-samples}
\end{figure}

\begin{figure}
    \centering
    \includegraphics[width=1\linewidth]{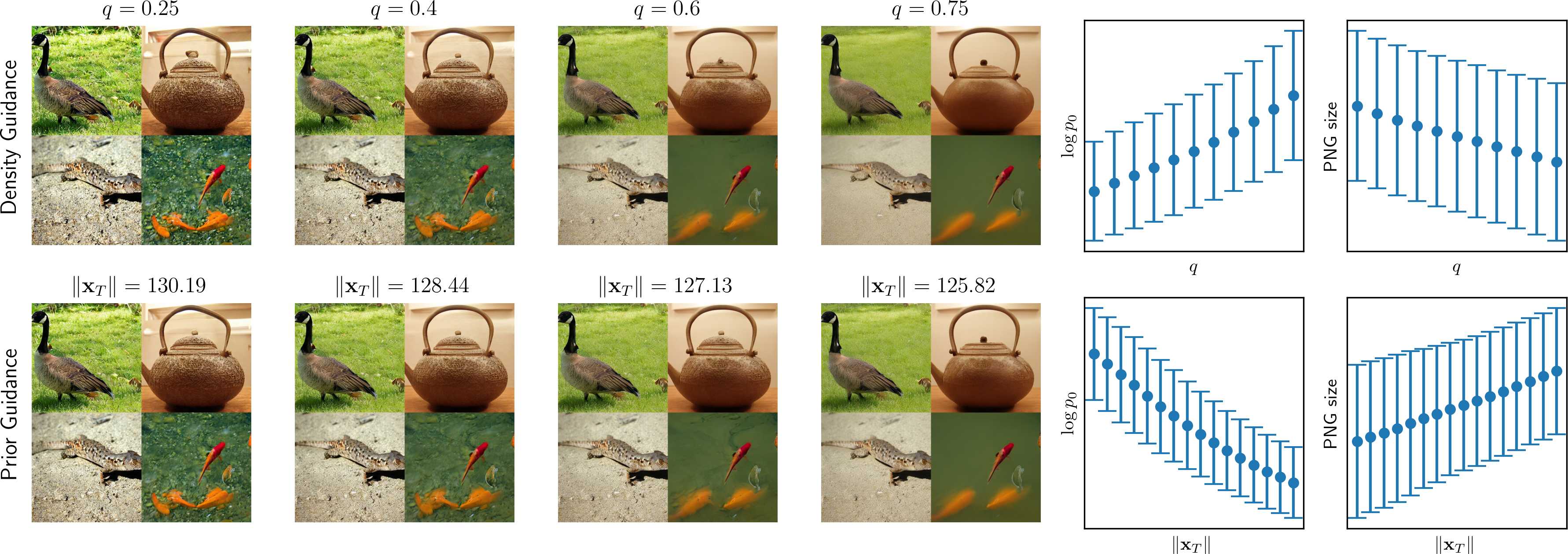}
    \caption{\textbf{EDM2 samples.} PG and DG monotonically control log-density and amount of detail.}
    \label{fig:edm2-quant}
\end{figure}

\begin{figure}
    \centering
    \includegraphics[width=1\linewidth]{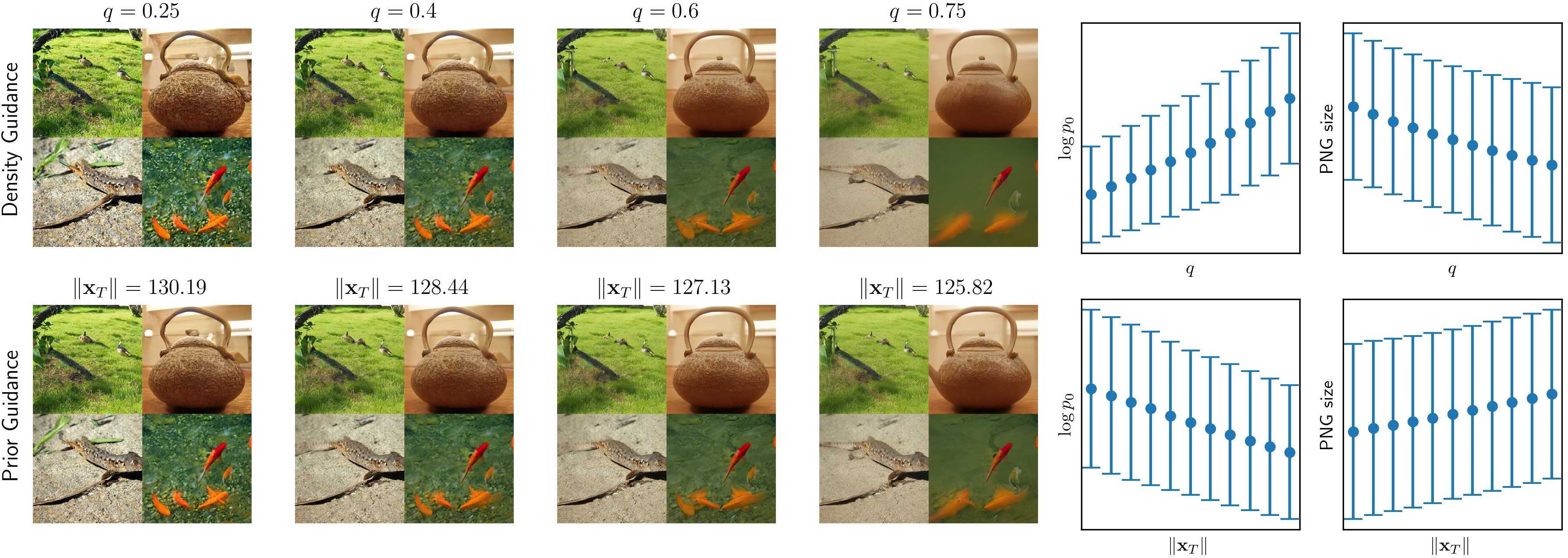}
    \caption{\textbf{EDM2 samples with classifier-free guidance.} Both PG and DG are effective when CFG is used.}
    \label{fig:edm2-cfg-quant}
\end{figure}
In this section we provide more samples and a quantitative analysis showing that we can reliably control log-density of generated samples and thus amount of detail as measured by PNG file size. Please see: \cref{fig:more-sd-samples} for results for StableDiffusion, \cref{fig:edm2-quant} for EDM2, and \cref{fig:edm2-cfg-quant} for EDM2 with classifier-free guidance.

\section{More Stochastic Density Guidance samples}\label{app:more-stochastic}
\begin{figure}
    \centering
    \includegraphics[width=1\linewidth]{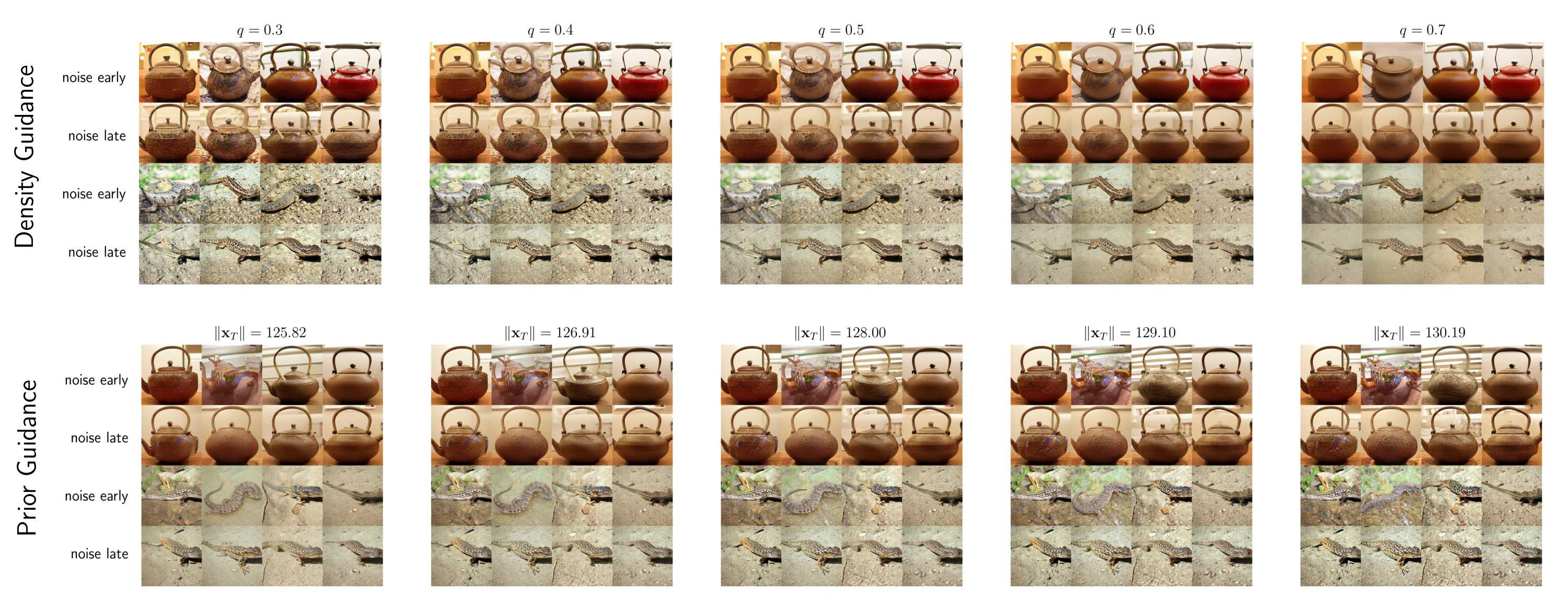}
    \caption{\textbf{Density guided samples in stochastic sampling.} We chose two starting random seeds and for each of them generated 4 random samples with two strategies: adding noise early in the generation (altering high-level detail), or late (low-level detail). For the inner sampling loop, we used the same random seeds across all runs.}
    \label{fig:more-stochastic}
\end{figure}
We generate more samples using \autoref{eq:stochastic-steering} with the EDM2 model in two scenarios: adding noise early: $\varphi(t)=0.2g(t)$ for $\log\mathrm{SNR}(t)<-4$ and $\varphi(t)=0$ otherwise; and adding noise late: $\varphi(t)=0.3g(t)$ for $\log\mathrm{SNR}(t) > -3$ and $\varphi(t)=0$ otherwise. To demonstrate that we have fine-grained control over image detail, we do it for various values of the hyperparameters. See \autoref{fig:more-stochastic} for a visualization. We compare this for Prior Guidance, which is not principled for stochastic sampling, because the larger the amount of noise, the less information $\xT$ carries about the final generated sample $\x0$.

\section{Connection with perceptual metrics}\label{app:perceptual}
\begin{figure}
    \centering
    \includegraphics[width=1\linewidth]{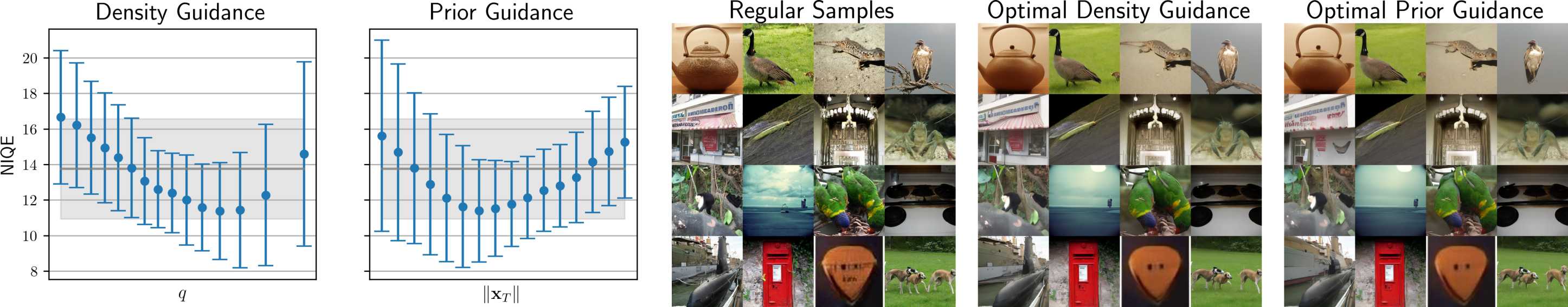}
    \caption{\textbf{Density and Prior Guidance can improve perceptual quality as measured by NIQE score.} Left: NIQE scores (lower is better) for both Density and Prior Guidance on the EDM2 model for various values of the hyperparameters ($q$ and $\|\xT\|$ respectively) with regular samples as reference in grey. Right: Representative examples of best-scoring hyperparameters. NIQE score seems to favor lower-detail images.}
    \label{fig:niqe}
\end{figure}
There exist various metrics measuring the perceptual quality of image generation models. Most popular include LPIPS \citep{zhang2018unreasonable} and SSIM \citep{wang2004image}. However, these are \emph{reference-based} quality measures, meaning that they require the ground truth to compare to, which is not available in unconditional image generation. We therefore used NIQE \citep{mittal2012making}, which is a non-reference perceptual quality measure, reported to strongly correlate with human judgement. It provides a single number per image, which indicates whether an image has been distorted (a lower number - higher quality). It was used, e.g., by \citet{sami2024hf} to evaluate super-resolution diffusion models.

\section{Prior Guidance samples}\label{app:scaling-samples}
In \autoref{fig:scaling-samples} we show samples generated with the EDM2 ImageNet $512 \times 512$ model \citep{karras2024analyzing}.
Specifically, we randomly sampled a latent code $\vx_T = \sigma_T \veps$ for $\veps \sim \mathcal{N}(\mathbf{0}, \mI_D)$ and scaled it to have specific values of the norm.
Since $\log p_T(\vx_T) = C - \frac{1}{2}\|\veps\|^2$ and $\|\veps\|^2 \sim \chi^2(D)$, we choose the values of the target squared norm to be quantiles of $\chi^2(D)$ for $q \in [0.001, 0.999]$ for $\vx_T$ to remain in the typical region of $p_T$.
Higher values of $q$ mean higher norm, i.e. lower $\log p_T(\vx_T)$, and are thus decoding produces more detailed images.

Given that the Score Alignment holds for the EDM2 model (\autoref{fig:dot-product}), we can see that scaling the latent code (Prior Guidance) is effective in controlling $\log p_0(\vx_0)$ and thus the image detail.
We additionally include samples for StableDiffusion v2.1 \citep{rombach2021highresolution} in \autoref{fig:scaling-stable-diffusion}.
\begin{figure}[b]
    \centering
    \includegraphics[width=1\linewidth]{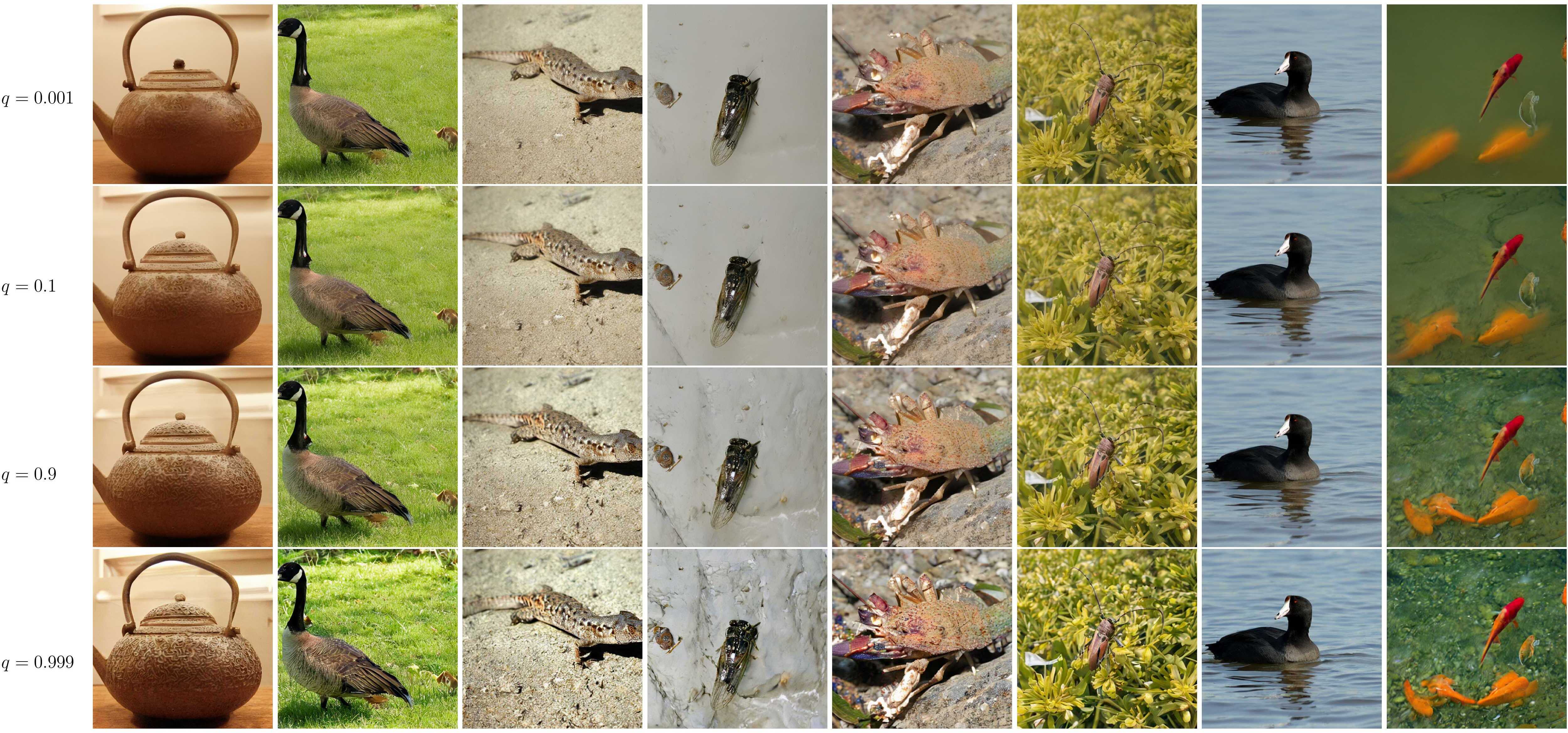}
    \caption{\textbf{Prior Guidance controls the detail when Score Alignment holds.} Samples generated with the EDM2 latent diffusion model \citep{karras2024analyzing} for different values of quantiles $q$ for the $\chi^2$ distribution. See \autoref{app:scaling-samples} for details.}
    \label{fig:scaling-samples}
\end{figure}

\begin{figure}[t]
    \centering
\includegraphics[width=1\linewidth]{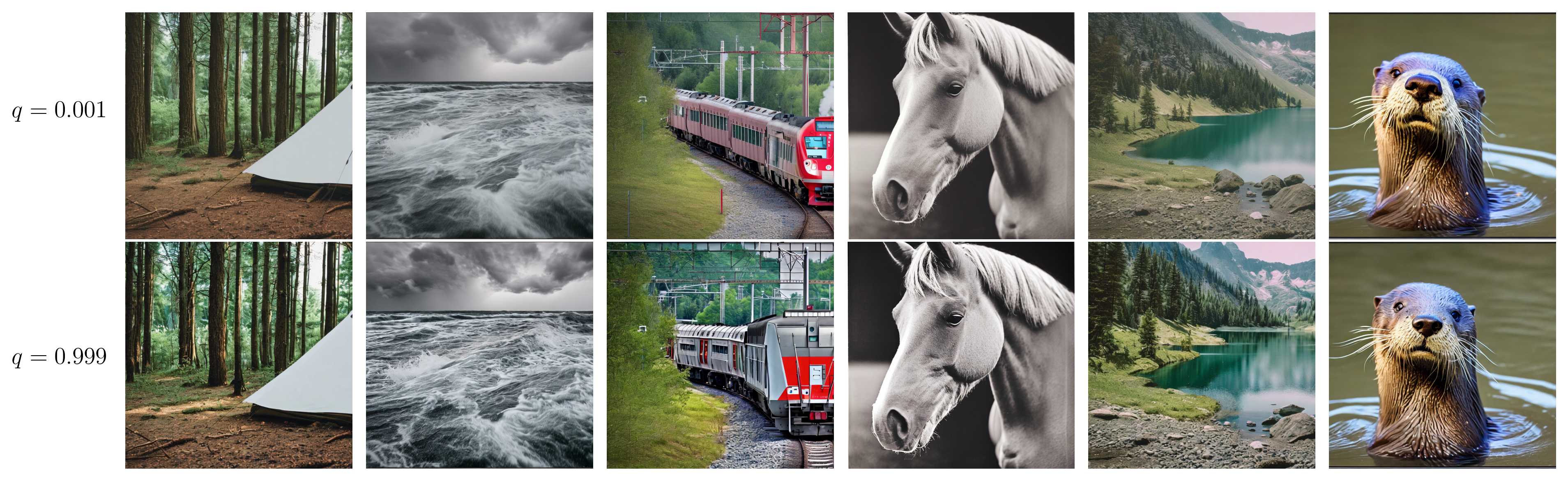}
    \caption{Samples with Prior Guidance on StableDiffusion v2.1 \citep{rombach2021highresolution}}
    \label{fig:scaling-stable-diffusion}
\end{figure}

\section{Gaussian Mixture asymptotics}
\begin{lemma}\label{lem:gm-1}
    Let $f:\R\to\R, \:f(x)=\exp(-\frac{1}{2}x)x$. Then for all $x \in \R$
    \begin{equation}
        f(x) \leq 1
    \end{equation}
\end{lemma}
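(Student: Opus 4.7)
The plan is to split on the sign of $x$ and rely on a one-variable calculus argument, since $f$ is smooth and tends to $-\infty$ at $-\infty$ and to $0$ at $+\infty$, so the global maximum is attained at an interior critical point.

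First I would dispose of the easy regime $x \le 0$: in that case $\exp(-x/2) > 0$ and $x \le 0$, so $f(x) = x \exp(-x/2) \le 0 \le 1$ immediately, without any optimization.

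For $x > 0$, I would differentiate:
\begin{equation*}
    f'(x) = \exp(-x/2)\left(1 - \tfrac{x}{2}\right),
\end{equation*}
which vanishes uniquely at $x = 2$ on $(0,\infty)$. Since $\exp(-x/2) > 0$ always, the sign of $f'$ is the sign of $1 - x/2$, so $f$ is strictly increasing on $(0,2)$ and strictly decreasing on $(2,\infty)$. Hence $x = 2$ is the global maximum on $(0,\infty)$, with value
\begin{equation*}
    f(2) = 2\exp(-1) = \tfrac{2}{e}.
\end{equation*}
Combining with the trivial case, $f(x) \le \max(0, 2/e) = 2/e$ for all $x \in \R$. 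Finally, since $e > 2$, we have $2/e < 1$, giving $f(x) \le 1$ as claimed.

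There is no real obstacle here; the only thing to be careful about is that one cannot just apply the inequality $e^y \ge 1 + y$ in the form $x \le e^{x/2}$, because this only gives $x \le 2$ as the range where the resulting bound is useful. The clean route is the direct first-derivative test above, which also yields the sharper constant $2/e$ (though the statement only requires the looser bound $1$).
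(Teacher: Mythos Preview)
Your proof is correct and essentially identical to the paper's: both compute $f'(x)=\exp(-x/2)(1-\tfrac{x}{2})$, observe that $f$ is increasing on $(-\infty,2)$ and decreasing on $(2,\infty)$, and conclude $f(x)\le f(2)=2/e<1$. Your explicit case split on $x\le 0$ is harmless but unnecessary, since the first-derivative test already handles all of $\R$ at once.
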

\begin{proof}
    \begin{equation*}
        f'(x) = -\frac{1}{2}\exp\left(-\frac{1}{2}x\right)x + \exp\left(-\frac{1}{2}x\right)=\exp\left(-\frac{1}{2}x\right)\left(1 - \frac{1}{2}x\right).
    \end{equation*}
    $f'(x) > 0$ for $x<2$ and $f'(x) < 0$ for $x > 2$. Therefore for all $x \in \R$
    \begin{equation*}
        f(x) \leq f(2) = \exp(-1)2 = \frac{2}{e} < 1.
    \end{equation*}
\end{proof}
\begin{lemma}\label{lem:gm-2}
    For any $a>0$ and $x>0$
    \begin{equation}
        \left(\frac{a}{x}+x\right)^2 \geq 4a.
    \end{equation}
\end{lemma}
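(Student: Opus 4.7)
The inequality is a clean application of AM--GM, so the plan is essentially a one-line observation written out carefully. The goal is to compare $\left(\frac{a}{x}+x\right)^2$ with $4a$ for positive $a,x$, and the most transparent route is to compute the difference and recognize it as a perfect square.

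Concretely, I would expand
\begin{equation*}
\left(\frac{a}{x}+x\right)^2 - 4a = \frac{a^2}{x^2} + 2a + x^2 - 4a = \frac{a^2}{x^2} - 2a + x^2 = \left(\frac{a}{x}-x\right)^2,
\end{equation*}
which is manifestly nonnegative. Rearranging yields the claim, with equality iff $a/x = x$, i.e.\ $x=\sqrt{a}$.

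An equivalent route, which I would mention only as an alternative, is to note that AM--GM gives $\frac{a}{x}+x \geq 2\sqrt{\frac{a}{x}\cdot x} = 2\sqrt{a}$ (both summands are positive because $a,x>0$), and then square both sides; since both sides are nonnegative, squaring preserves the inequality. No obstacle is anticipated here: the positivity hypotheses on $a$ and $x$ are exactly what is needed to make the expression $a/x$ well-defined and to ensure we may square the AM--GM bound without reversing signs.
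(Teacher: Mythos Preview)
Your proof is correct and essentially identical to the paper's: both recognize that $\left(\frac{a}{x}+x\right)^2 - 4a = \left(\frac{a}{x}-x\right)^2 \geq 0$. The AM--GM remark is a fine alternative but not needed.
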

\begin{proof}
\begin{equation*}
    0 \leq \left( \frac{a}{x}-x \right)^2 = \frac{a^2}{x^2}-2a+x^2 = \left( \frac{a}{x}+x \right)^2 -4a
\end{equation*}
\end{proof}
\begin{theorem}
\label{thm:gaussian_mixture_asymptotics}
Let $X\in \mathbb{R}^D$ be drawn from a mixture of $K$ Gaussian components:
\begin{equation}
  \Pr(Y=k)=\pi_k,\quad
  X\mid(Y=k)\sim\mathcal{N}(\mu_k,\sigma^2 I_D),
\end{equation}
where $\mu_j \neq \mu_i$ for $i \neq j$ (note that this assumption is not restrictive, because when different components share the mean, the mixture can be rewritten with distinct $\mu$ and updated $\pi$). Define
\begin{equation}
  h(X)
  =
  \frac{1}{\sqrt{2D}}\left[\sum_{j=1}^K w_j(X)\frac{\|X-\mu_j\|^2}{\sigma^2} 
    - D\right],
\end{equation}
where
\begin{equation}
  w_j(x)
  =
  \frac{\pi_j\exp\left[-\frac1{2\sigma^2}\|x-\mu_j\|^2\right]}{
        \sum_{m=1}^K \pi_m\exp\left[-\frac1{2\sigma^2}\|x-\mu_m\|^2\right]}.
\end{equation}
Then $h(X)\xrightarrow{d}N(0,1)$ as $D\to\infty$.
\end{theorem}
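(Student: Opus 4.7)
The plan is to condition on the mixture component, reduce $h(X)$ to a chi-squared CLT term plus a log-likelihood-ratio remainder, and show the remainder is uniformly negligible using \autoref{lem:gm-1}. Given $Y = k$, write $X = \mu_k + \sigma Z$ with $Z \sim \mathcal{N}(0, I_D)$ and set $\Delta_{jk} := (\mu_j - \mu_k)/\sigma$. A direct expansion of $\|X-\mu_j\|^2$ gives
\[
\|X - \mu_j\|^2/\sigma^2 = \|Z\|^2 - 2 a_j, \qquad a_j := -\tfrac{1}{2}\|\Delta_{jk}\|^2 + Z^\top \Delta_{jk} = \log\!\tfrac{p_j(X)}{p_k(X)},
\]
with $a_k = 0$. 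Since $\sum_j w_j(X) = 1$, this yields the clean decomposition
\[
h(X) = \frac{\|Z\|^2 - D}{\sqrt{2D}} \;-\; \sqrt{\tfrac{2}{D}} \sum_{j \neq k} w_j(X)\, a_j.
\]
The first term converges to $\mathcal{N}(0,1)$ by the classical CLT for the i.i.d.\ $\chi_1^2$ summands $Z_i^2$, so it is enough to show the remainder $R_D := \sum_{j \neq k} w_j(X)\, a_j$ is $O_p(1)$ uniformly in $D$.

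For the remainder, the key is the dual bound $w_j(X) \leq \min\{1,\, (\pi_j/\pi_k)\, e^{a_j}\}$, immediate from $\sum_m \pi_m e^{a_m} \geq \pi_k e^{a_k} = \pi_k$. On $\{a_j \leq 0\}$ combine $w_j \leq (\pi_j/\pi_k) e^{a_j}$ with \autoref{lem:gm-1} applied at $x = 2|a_j|$ to obtain the deterministic bound $w_j(X)\,|a_j| \leq \pi_j/(2\pi_k)$. On $\{a_j > 0\}$ use $w_j \leq 1$ and compute directly: writing $\rho := \|\Delta_{jk}\|$ and $W := Z^\top \Delta_{jk}/\rho \sim \mathcal{N}(0, 1)$, a standard Gaussian evaluation gives
\[
\mathbb{E}[a_j^+] = \rho\, \phi(\rho/2) - \tfrac{\rho^2}{2}\bigl(1 - \Phi(\rho/2)\bigr) \leq \rho\, \phi(\rho/2) \leq \tfrac{2}{\sqrt{2\pi e}},
\]
where the final bound maximises $\rho\, e^{-\rho^2/8}$ at $\rho = 2$ and is uniform in $\rho$. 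Summing the two cases across the finitely many $j \neq k$ bounds $\mathbb{E}|R_D|$ independently of $D$; Markov gives $R_D = O_p(1)$, and Slutsky delivers $h(X) \mid Y = k \xrightarrow{d} \mathcal{N}(0, 1)$. Marginalising against the discrete law of $Y$ gives the unconditional limit.

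The main obstacle is precisely this uniform-in-$D$ control of $R_D$ without any quantitative assumption on how $\|\mu_j - \mu_k\|$ scales with $D$: the estimate must hold both when the means stay at bounded distance (so $a_j = O_p(1)$ and $w_j$ remains nontrivial) and when they separate with $D$ (so $a_j \to -\infty$ but the exponential factor in $w_j$ crushes the weight). The dual bound on $w_j$ combined with \autoref{lem:gm-1} handles both regimes in a single estimate. I did not find a direct use for \autoref{lem:gm-2} in this approach; the authors may deploy it inside a sharper evaluation of $\mathbb{E}[a_j^+]$ or in a Lyapunov-type variance argument that would replace the Slutsky step.
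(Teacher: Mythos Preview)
Your proof is correct and follows the same overall architecture as the paper: condition on $Y=k$, write $h(X)$ as the $\chi^2$-CLT term plus a remainder, and kill the remainder via Slutsky. Your $a_j$ is exactly $-\tfrac12 b_j$ in the paper's notation, and both proofs use \autoref{lem:gm-1} to dispose of the $a_j\le 0$ (equivalently $b_j\ge 0$) contribution via the deterministic bound $w_j|a_j|\le \pi_j/(2\pi_k)$.

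The only substantive divergence is in the $a_j>0$ case. The paper bounds the remainder \emph{pointwise} by $\tfrac{1}{\pi_k}+\max(-\min_{j\ne k}b_j,0)$, then controls $\Pr(b_j<-a)$ via the Gaussian tail bound $\Pr(Z<-t)\le e^{-t^2/2}$ together with \autoref{lem:gm-2}, which is what turns $\bigl(\tfrac{a}{\|\Delta_{jk}\|}+\|\Delta_{jk}\|\bigr)^2\ge 4a$ into the uniform-in-$\rho$ tail estimate $e^{-a/2}$. You instead bound $\mathbb{E}[a_j^+]$ directly by the closed-form Gaussian integral $\rho\,\phi(\rho/2)-\tfrac{\rho^2}{2}(1-\Phi(\rho/2))\le \rho\,\phi(\rho/2)\le 2/\sqrt{2\pi e}$ and finish with Markov. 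Your route is slightly more elementary (no Lemma~\ref{lem:gm-2}, no tail bound, convergence in $L^1$ rather than just in probability), while the paper's pointwise inequality $|\mathcal R_D|\le \tfrac1{\pi_k}+\max(-\min_{j\ne k}b_j,0)$ gives a sharper almost-sure growth rate for the remainder. Your remark about \autoref{lem:gm-2} is exactly right: in the paper it enters precisely to make the tail estimate on $\Pr(b_j<-a)$ uniform in the separation $\|\Delta_{jk}\|$, a job your expectation bound handles by the calculus maximisation of $\rho\,e^{-\rho^2/8}$.
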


\begin{proof}
\textbf{Step 1: It suffices to show $h(X)\mid(Y=k)\to N(0,1)$.}  
Indeed,
\[
  \varphi_{h(X)}(t)=\mathbb{E}\left[e^{ith(X)}\right]
  =\sum_{k=1}^K
  \pi_k\mathbb{E}\left[e^{ith(X)}\mid Y=k\right]=\sum_{k=1}^K
  \pi_k\varphi_{h(X)|Y=k}(t).
\]
Thus if each conditional law converges to $N(0,1)$ (point-wise convergence of characteristic functions), so does the unconditional mixture.

\noindent
\textbf{Step 2: Rewrite $X=\mu_k+\sigma\veps$.}  
Conditioning on $Y=k$, we have $X=\mu_k+\sigma\veps$, $\veps\sim\mathcal{N}(\mathbf{0},\mI_D)$.  Thus
\[
  \frac{\|X-\mu_k\|^2}{\sigma^2}
  = \|\veps\|^2
  \sim\chi^2_D,\quad
  \frac{\|\veps\|^2 -D}{\sqrt{2D}}
  \xrightarrow{d}N(0,1)
\]
% as in \autoref{eq:clt-chi}.

\noindent
\textbf{Step 3: Define remainder $\mathcal{R}_D$.}  
Set
\[
  \sum_{j=1}^K w_j(X)\frac{\|X-\mu_j\|^2}{\sigma^2}
  =
  \|\veps\|^2 +\mathcal{R}_D,
\]
so
\[
  h(X)
  =
  \frac{\|\veps\|^2 -D}{\sqrt{2D}}
  +
  \frac{\mathcal{R}_D}{\sqrt{2D}}.
\]
Using Slutsky's theorem, it suffices to show $\mathcal{R}_D/\sqrt{2D}\xrightarrow{d} 0$, which is a weaker condition than convergence in probability.
\textbf{It thus suffices to show}
\begin{equation}
 \frac{\mathcal{R}_D}{\sqrt{2D}}\xrightarrow{\mathbb{P}}0   
 \end{equation}
\noindent
\textbf{Step 4: Showing $\mathcal{R}_D/\sqrt{2D} \xrightarrow[]{\mathbb{P}}0$.}  
Let 
\[
  \Delta_{j,k}
  := \frac{\mu_k-\mu_j}{\sigma} \neq \mathbf{0}.
\]
Note
\[
  \frac{\|X-\mu_j\|^2}{\sigma^2}
  = \|\veps + \Delta_{j,k}\|^2
  = \|\veps\|^2 + 2\veps^T\Delta_{j,k} + \|\Delta_{j,k}\|^2 = \|\veps\|^2 + b_j
\]
for $b_j = 2\veps^T\Delta_{j,k} + \|\Delta_{j,k}\|^2$ and $b_k=0$ since $\Delta_{k,k}=0$.
Hence
\[
  \mathcal{R}_D
  = \sum_{j=1}^K w_j(X)\left[2\veps^T\Delta_{j,k} + \|\Delta_{j,k}\|^2\right] = \sum_{j\neq k} w_j(X)b_j. 
\]
Also:
\[
  w_j(X)
  =
  \frac{\pi_j\exp\left[-\frac1{2\sigma^2}\|x-\mu_j\|^2\right]}{
        \sum_{m=1}^K\pi_m \exp\left[-\frac1{2\sigma^2}\|x-\mu_m\|^2\right]} = \frac{\pi_j\exp\left[-\frac1{2}(\|\veps\|^2 + b_j)\right]}{
        \sum_{m=1}^K \pi_m\exp\left[-\frac1{2}(\|\veps\|^2 + b_m)\right]}=\frac{\pi_j\exp(-\frac{1}{2}b_j)}{
        \sum_{m=1}^K \pi_m\exp(-\frac{1}{2}b_m)}.
\]
Then
\[
\mathcal{R}_D = \frac{\sum_{j \neq k}\pi_j\exp(-\frac{1}{2}b_j)b_j}{\pi_k + \sum_{m \neq k}\pi_m\exp(-\frac{1}{2}b_m)}.
\]
We will now separate the sum in the numerator of $\mathcal{R}_D$ into positive and negative $b_j$. Define $J^+ = \{ j \neq k | b_j \geq 0 \}$ and $J^-=  \{ j \neq k | b_j < 0 \}$:
\[
\mathcal{R}_D = \frac{\sum_{j \in J^+}\pi_j\exp(-\frac{1}{2}b_j)b_j}{\pi_k + \sum_{m \neq k}\pi_m\exp(-\frac{1}{2}b_m)} + \frac{\sum_{j \in J^-}\pi_j\exp(-\frac{1}{2}b_j)b_j}{\pi_k + \sum_{m \neq k}\pi_m\exp(-\frac{1}{2}b_m)}.
\]
From \autoref{lem:gm-1}, $\exp(-\frac{1}{2}b_j)b_j\leq 1$ and thus $\sum_{j \in J^+}\pi_j\exp(-\frac{1}{2}b_j)b_j \leq 1$.
Therefore (note shrinking denominators to achieve upper bounds):
\begin{align*}
    \left| \mathcal{R}_D \right| & \leq \left|\frac{\sum_{j \in J^+}\pi_j\exp(-\frac{1}{2}b_j)b_j}{\pi_k + \sum_{m \neq k}\pi_m\exp(-\frac{1}{2}b_m)}\right| + \left|\frac{\sum_{j \in J^-}\pi_j\exp(-\frac{1}{2}b_j)b_j}{\pi_k + \sum_{m \neq k}\pi_m\exp(-\frac{1}{2}b_m)}\right| \\
    & \leq \frac{1}{\pi_k + \sum_{m \neq k}\pi_m\exp(-\frac{1}{2}b_m)} + \frac{\sum_{j \in J^-}\pi_j\exp(-\frac{1}{2}b_j)|b_j|}{\pi_k + \sum_{m \neq k}\pi_m\exp(-\frac{1}{2}b_m)} \\
    & \leq \frac{1}{\pi_k} + \frac{\sum_{j \in J^-}\pi_j\exp(-\frac{1}{2}b_j)|b_j|}{\pi_k + \sum_{m \in J^-}\pi_m\exp(-\frac{1}{2}b_m)} \\
    & \leq \frac{1}{\pi_k} + \frac{\sum_{j \in J^-}\pi_j\exp(-\frac{1}{2}b_j)|b_j|}{\pi_k + \sum_{m \in J^-}\pi_m\exp(-\frac{1}{2}b_m)} = \frac{1}{\pi_k} + \sum_{j \in J^-}\Tilde w_j |b_j|,
\end{align*}
where $\Tilde w_j = \frac{\pi_j\exp(-\frac{1}{2}b_j)}{\pi_k + \sum_{m \in J^- }\pi_m\exp(-\frac{1}{2}b_m)}$ and $\sum_{j \in J^-}\Tilde w_j = \frac{\sum_{j \in J^-}\pi_j\exp(-\frac{1}{2}b_j)}{\pi_k + \sum_{m \in J^-}\pi_m\exp(-\frac{1}{2}b_m)} \leq 1$. Thus
\[
\sum_{j \in J^-}\Tilde w_j |b_j| \leq \sum_{j \in J^-}\Tilde w_j \max_{j \in J^-}|b_j| \leq \max_{j \in J^-}|b_j| = \max(-\min_{j \neq k} b_j, 0),
\]
where the last equality comes from the definition of $J^-$. In summary
\[
|\mathcal{R}_D| \leq \frac{1}{\pi_k} + \max(-\min_{j \neq k} b_j, 0).
\]
Now, for any $\delta >0 $:
\[
\Pr\left(\frac{|\mathcal{R}_D|}{\sqrt{2D}} > \delta\right) = \Pr\left(|\mathcal{R}_D| > \delta\sqrt{2D}\right) \leq \Pr\left( \max(-\min_{j \neq k} b_j, 0) > \delta\sqrt{2D} - \frac{1}{\pi_k} \right).
\]
Now choose $D$ large enough so that $a = \delta\sqrt{2D} - \frac{1}{\pi_k} > 0$.
Then
\[
\Pr\left( \max(-\min_{j \neq k} b_j, 0) > a \right) = \Pr\left( \exists_{j \neq k} \: b_j < -a\right) \leq \sum_{j\neq k} \Pr\left( b_j < -a \right).
\]
Now plugging in the definition of $b_j$ and using the fact that $Z = \frac{\veps^T \Delta_{k, j}}{\| \Delta_{k,j}\|} \sim \mathcal{N}(0, 1)$.
\[
\Pr\left( b_j < -a \right) = \Pr\left( 2\veps^T\Delta_{j,k} + \|\Delta_{j,k}\|^2 < -a \right) = \Pr\left( Z < -\frac{1}{2} \left( \frac{a}{\|\Delta_{j,k}\|} + \| \Delta_{j,k} \| \right)\right).
\]
And using the standard Gaussian tail bound $\Pr(Z<-t) = \Pr(Z>t) \leq \exp(-\frac{1}{2}t^2)$ for $t>0$:
\[
\Pr\left( b_j < -a \right) \leq \exp\left(-\frac{1}{8}\left(\frac{a}{\|\Delta_{j,k}\|} + \| \Delta_{j,k} \|\right)^2\right) \leq \exp\left(-\frac{1}{2}a\right),
\]
where the last inequality comes from \autoref{lem:gm-2}. Therefore
\begin{equation*}
    \Pr\left(\frac{|\mathcal{R}_D|}{\sqrt{2D}} > \delta\right) \leq \sum_{j \neq k} \Pr(b_j < -a) \leq\sum_{j\neq k}\exp\left(-\frac{1}{2}a\right) = (K-1)\exp\left(-\frac{1}{2}\left( \delta\sqrt{2D} - \frac{1}{\pi_k} \right)\right) \xrightarrow[D \to \infty]{}0
\end{equation*}
We have shown that for all $\delta >0 $
\begin{equation}
    \lim_{D \to \infty }\Pr\left( \left| \frac{\mathcal{R}_D}{\sqrt{2D}} \right| > \delta \right)= 0,
\end{equation}
which means
\[
\frac{\mathcal{R}_D}{\sqrt{2D}} \xrightarrow[D \to \infty]{\mathbb{P}} 0.
\]
\end{proof}
\end{document}